\newtheorem{example}{Example}
\newtheorem{theorem}{Theorem}
\newtheorem{lemma}[theorem]{Lemma}
\newtheorem{remark}[theorem]{Remark}
\newtheorem{corollary}[theorem]{Corollary}
\newtheorem{definition}[theorem]{Definition}
\newtheorem*{theorem*}{Theorem}
\newtheorem*{lemma*}{Lemma}
\newtheorem*{corollary*}{Corollary}
\newcommand\MYcurrentlabel{xxx}
\newcommand{\MYstore}[2]{%
  \global\expandafter \def \csname MYMEMORY #1 \endcsname{#2}%
}
\newcommand{\MYload}[1]{%
  \csname MYMEMORY #1 \endcsname%
}
\newcommand{\MYnewlabel}[1]{%
  \renewcommand\MYcurrentlabel{#1}%
  \MYoldlabel{#1}%
}
\newcommand{\MYdummylabel}[1]{}
\newcommand{\torestate}[1]{%
  % overwrite label command
  \let\MYoldlabel\label%
  \let\label\MYnewlabel%
  #1%
  \MYstore{\MYcurrentlabel}{#1}%
  % restore old label command
  \let\label\MYoldlabel%
}
\newcommand{\restatecor}[1]{%
  % overwrite label command with dummy
  \let\MYoldlabel\label
  \let\label\MYdummylabel
  \begin{corollary*}[Restatement of Corollary \ref{#1}]
    \MYload{#1}
  \end{corollary*}
  \let\label\MYoldlabel
}
\newcommand{\restatelemma}[1]{%
  % overwrite label command with dummy
  \let\MYoldlabel\label
  \let\label\MYdummylabel
  \begin{lemma*}[Restatement of Lemma \ref{#1}]
    \MYload{#1}
  \end{lemma*}
  \let\label\MYoldlabel
}
\newcommand{\restatetheorem}[1]{%
  % overwrite label command with dummy
  \let\MYoldlabel\label
  \let\label\MYdummylabel
  \begin{theorem*}[Restatement of Theorem \ref{#1}]
    \MYload{#1}
  \end{theorem*}
  \let\label\MYoldlabel
}
\newif\ifarxiv
\newcommand\T{\rule{0pt}{2.6ex}}       % Top strut (table spacing)
\newcommand\B{\rule[-1.2ex]{0pt}{0pt}} % Bottom strut (table spacing)
\DeclareMathOperator\tr{tr}
\newcommand\norm[1]{\| #1 \|}
\DeclareMathOperator\reals{{\mathrm{I\!R}}}
\newcommand\eps{{\varepsilon}}
\DeclareMathOperator\sign{{\text{sign}}}
\DeclareMathOperator\signge{{\text{sign}_{\ge}}}
\DeclareMathOperator\supp{{\text{supp}}}
\newcommand\bbE{{\mathbb E}}
\newcommand\bbVar{{\textbf{Var}}}
\newcommand\bbR{{\mathbb R}}
\newcommand\bbZ{{\mathbb Z}}
\newcommand\modd{{{\scriptscriptstyle\%}d}}
\newcommand{\wedgedot}{%
\mathbin{\ooalign{\hfil$\wedge$\hfil\cr\hfil$\cdot$\hfil\crcr}}}
\newcommand\cE{{\mathcal E}}
\newcommand{\cX}{\mathcal{X}}
\newcommand{\cV}{\mathcal{V}}
\newcommand{\cQ}{\mathcal{Q}}
\newcommand{\cW}{\mathcal{W}}
\newcommand{\chf}{{\check{f}}}
\newcommand{\chF}{{\check{F}}}
\newcommand\tm{{\tilde m}}
\newcommand\tsigma{{\tilde\sigma}}
\newcommand\teps{{\tilde\eps}}
\newcommand\tx{{\tilde x}}
\newcommand\ty{{\tilde y}}
\newcommand\tv{{\tilde v}}
\newcommand\tw{{\tilde w}}
\newcommand\twomat[2]{\left[\begin{smallmatrix} #1 \\ #2 \end{smallmatrix} \right] }
\newcommand\round[1]{\lfloor #1 \rceil}
\newcommand\bnd[1]{^{\odot #1}}
\newcommand\bS{{\bar S}}
\newcommand\bB{{\bar B}}
\newcommand{\Hpm}{Hopfield$\pm$}
\newcommand{\hj}{\hat j}
\begin{document}

\title{Capacity Analysis of Vector Symbolic Architectures}
\author{Kenneth L. Clarkson \thanks{IBM Research} \and Shashanka Ubaru \thanks{IBM Research} \and Elizabeth Yang \thanks{University of California, Berkeley; part of this work was done at IBM Research}}
\date{}

\maketitle

\begin{abstract}
    Hyperdimensional computing (HDC) is a biologically-inspired framework which represents symbols with high-dimensional vectors, and uses vector operations to manipulate them. 
    The ensemble of a particular vector space and a prescribed set of vector operations (including one addition-like for ``bundling'' and one outer-product-like for ``binding'') form a \emph{vector symbolic architecture} (VSA).
    While VSAs have been employed in numerous applications and have been studied empirically, many theoretical questions about VSAs remain open.

    We analyze the \emph{representation capacities} of four common VSAs: MAP-I, MAP-B, and two VSAs based on sparse binary vectors.
    ``Representation capacity'' here refers to bounds on the dimensions of the VSA vectors required to perform certain symbolic tasks, such as testing for set membership $i \in S$ and estimating set intersection sizes $|X \cap Y|$ for two sets of symbols $X$ and $Y$, to a given degree of accuracy. 
    We also analyze the ability of a novel variant of a Hopfield network (a simple model of associative memory) to perform some of the same tasks that are typically asked of VSAs.
    
    In addition to providing new bounds on VSA capacities, our analyses establish and leverage connections between VSAs, ``sketching'' (dimensionality reduction) algorithms, and Bloom filters. 
    
\end{abstract}

\section{Introduction}

\emph{Hyperdimensional computing} (HDC) is a biologically inspired framework for representing symbolic information. 
In this framework, we represent different symbols using high-dimensional vectors (hypervectors), called \emph{atomic vectors}, sampled from a natural vector distribution, such as random signed vectors, or sparse binary vectors. 
We then use standard arithmetic or bit-wise operations on these vectors to perform symbolic operations, like associating or grouping symbols.
The distribution over atomic vectors, together with their corresponding operations, form a \emph{vector-symbolic architecture} (VSA).

Many features of VSAs are inspired by aspects of the human brain and memory. 
They are robust to noise, and the fact that all entries of the vectors are symmetric (i.e. don't correspond to specific features) allows VSA computations to be highly parallelizable. 
Most VSA vector entries are either $0$ or $\pm 1$, so there is no need for floating point arithmetic, and they admit energy-efficient and low-latency hardware implementations.

In VSA systems, the atomic vectors are typically chosen randomly, with a distribution such that they are highly likely to be pairwise \emph{near-orthogonal}.

While we encounter ``the curse of dimensionality'' in many algorithmic and machine learning problems, the ability to fit many near-orthogonal vectors into $d$-dimensional space is a kind of ``blessing of dimensionality.''

While VSAs sometimes do not achieve the same performance as neural networks do on certain classification tasks, VSA-based classifiers require much less time and space to train and store.
There is also much research at the intersection of VSAs and deep learning; for instance, VSA vector representations could be good neural network inputs, and since all vector representations have the same size, this could be a better way to support inputs with different numbers of features (\cite{kleyko2019distributed,alonso2021hyperembed}). 
In addition, VSAs have found numerous applications outside of learning, including the modeling of sensory-motor systems in smaller organisms (\cite{kleyko2015fly,kleyko2015imitation}), combining word embeddings to form context embeddings (\cite{kanerva2000random,sahlgren2005introduction,jones2007representing}), and the processing of heart rate respiratory data and other biological times series data (\cite{kleyko2018vector,kleyko2019hyperdimensional,burrello2019hyperdimensional}).
They are also promising tools for bridging the gap between symbolic data, such as word relationships, and numerical data, such as trained word embeddings (\cite{quiroz2020semantic}). 

In addition to exploring the performance of VSAs in such applications, there have also been several works that study the \emph{representation capacity} of many VSAs. 
The representation capacity refers to lower bounds on the VSA dimension, needed so that we can reliably perform certain symbolic tasks, such as computing set intersection sizes $|X \cap Y|$ for two sets of symbols $X$ and $Y$ (with possibly different cardinalities). 
These lower bounds then translate into a design choice (minimum required dimension) for VSA applications.
Here we understand reliability in the setting of generating the atomic vectors at random, and asking for good-enough accuracy with high-enough probability.
Similarly, another way to quantify representation capacity is the following: if we tolerate a failure probability of $\delta$, can we bound the number of sets or objects we can encode?

The goal of this paper is to provide theoretical analyses of the representation capacities of four particular VSA models. 
In general, there are far more models that are employed in practice; see \cite{schlegel2022comparison, kleyko_survey_2021-1} for a longer catalog of VSAs.
We also assemble a set of tools and frameworks (common to computer scientists) for any future computations and high-probability bounds on VSA capacity. 

\vspace{-8pt}
\subsection{Background on VSAs}

To define a VSA, we need to first choose a distribution for sampling the atomic vectors. 
For every individual symbol we want to introduce (that is, excluding associations between multiple symbols), we will independently sample a random $m$-dimensional vector from a distribution, such as the uniform one over $\{\pm 1\}^m$ (i.e. sign vectors), or a random $k$-sparse binary ($\{0, 1\}^m$) vector. 
VSAs are equipped with the following operations; permutation is not always required.
\begin{itemize} \setlength{\itemsep}{-2pt}
\item \textbf{Similarity Measurement ($\langle \cdot, \cdot \rangle$):}
In almost all VSAs in this paper, we use cosine similarity (i.e. dot product $\langle u, v \rangle$) to measure the similarity between the symbols that $u$ and $v$ represent. 

\item \textbf{Bundling ($\oplus$):} We use bundling to represent a ``union'' of several symbols. The goal is for $u \oplus v$ to be similar to both $u$ and $v$.

\item \textbf{Binding ($\otimes$):} Binding is often used  to associate symbols  as (key, value) pairs. 
Here, we want $u \otimes v$ to be nearly orthogonal to both $u$ and $v$. 

\item \textbf{Permutation ($\pi$):} If $\pi$ is a permutation, let $\pi(v)$ be the vector whose $i$-th entry is $v_{\pi^{-1}(i)}$. 
We use $[v_1;\;\pi(v_2);\; \pi^2(v_3);\; \ldots; \pi^{L - 1}(v_L)]$ to encode an \emph{ordered sequence} of symbols or sets.

\item \textbf{Cleanup:} Let vector $x$ represent a set $X \subseteq \cX$, where $\cX$ is the universe of all symbols. 
Given a dictionary of symbols $S \subseteq \cX$, we want to compute the elements in $X \cap S$. 
\begin{remark}
Cleanup will not be a focus of our paper. 
If we have access to the individual vector representations of elements in $S$, we can simply execute cleanup as a set of (parallel) membership tests, which use similarity measurement.
\end{remark}
\end{itemize}

\begin{example}
    We can represent categorical data using the following bundle of bindings:
    \[
    \vec{y} = \bigoplus_{i \in \text{ features}} \vec{f}_i \otimes \vec{v}_i
    \]
    Here, $\vec{f}_i$ is a VSA vector encoding the concept ``feature $i$,'' while $\vec{v}_i$ is a VSA vector encoding the value of feature $i$.
    To train a classifier on the VSA representations $\{\vec{y}\}$, we can average the training vectors over each class to get an exemplar for the class.
    \[
    \vec{c}_i = \frac{1}{\#\{z: \text{ label}(z) = i\}} \sum_{z: \text{ label}(z) = i} z
    \]
    To classify a new input, we first compute its VSA encoding.
    We then return the class whose representative vector $c_i$ is closest to it (using similarity measurement).
\end{example}

We will study a few basic predicates on VSAs; one is \emph{membership testing}: given a vector $x$ representing a bundle of atomic vectors, and an atomic vector $y$, determine if $y$ is in the set represented by $x$. 
Another predicate is \emph{set intersection}: if $y$ also represents a set, estimate the size of the intersection of the sets represented by $x$ and $y$.
Here these sets might be sets of vectors bound together, or permuted.
Estimation of intersection size is a natural extension of membership testing, and is helpful in applications where VSA represent, for example, sets of properties, where intersection and difference size are rough measure of relatedness.

\subsection{Overview of Contributions and Related Work}

We focus on four popular VSA families, plus a novel variant of Hopfield networks.
The atomic vector initializations of the four, as well as their bundling and binding operators, are shown in Table~\ref{table:orig-vsa}.
As surveyed by \cite{schlegel2022comparison}, ``MAP'' stands for ``Multiply-Add-Permute,'' while ``I'' and ``B'' reference the values that non-atomic vectors take (``integer'' and ``binary,'' respectively). 

\begin{table} 
\begin{tabular}{|c|c|c|c|} 
 \hline
 VSA & Atomic Vector & Bundling & Binding \\ [0.5ex] 
 \hline
 MAP-I & $\{\pm 1\}^m$ & Elem.-wise addition & Elem.-wise mult. \\ 
 \hline
 MAP-B & $\{\pm 1\}^m$ & Elem.-wise add., round to $\pm 1$ & Elem.-wise mult \\
 \hline
 Bloom filters & $k$-sparse, $\{0, 1\}^m$ & Elem.-wise addition, cap at 1 & (Out of scope) \\
 \hline
  Counting Bloom filters & $k$-sparse, $\{0, 1\}^m$ & Elem.-wise addition & (Out of scope) \\
 \hline
\end{tabular} \caption{\label{table:orig-vsa}
A summary of MAP-I, MAP-B, and Binary Sparse VSAs (which generalize SBDR VSAs). While several binding operators have been proposed for the Binary Sparse VSAs, the study of these operators is outside of the scope of this paper.} 
\end{table}

A key aspect of our analyses is that we translate several VSA operations of interest into the language of sketching matrices (used for dimensionality reduction) or of hashing-based data structures (like Bloom Filters), which are all well-studied objects in computer science.
These connections have not been made explicit, or leveraged previously in VSA analysis, though we remark that Sparse-JL was briefly mentioned by \cite{thomas_streaming_2022}.

\subsubsection{Connections to the Broader Computer Science Literature}

In several of our analyses, we rely on the Johnson-Lindenstrauss (JL) property, which is foundational to algorithmic matrix sketching, the study of techniques for compressing matrices while approximately preserving key properties, such as least-squares solutions. 
See \cite{woodruff_computational_2014} for a more detailed treatment of sketching.
From the perspective of sketching, our results on permutations, bundles of $k$-bindings, and Hopfield networks can be regarded as extensions of sketching theory using structured matrices with less randomness than the most direct approaches.

The bundling operator for MAP-B is per-coordinate the \emph{Majority function} in the setting of the analysis of Boolean functions, and is a simple \emph{linear threshold function}. The analysis of the membership test for MAP-B that we give is close to that for analyzing the \emph{influence} of variables for the majority function (cf. \cite{o2014analysis}, Exer. 2.2).
Here the influence of a variable on a function is the probability for a random input that the function return value changes when the variable is negated. In comparison, in the analysis of MAP-B bundling, we are interested in the probability that the return value changes when the variable is set to $+1$.
Our analysis of MAP-B extends to other operations, including a bundling of bindings, which are related to \emph{polynomial threshold functions} of degree~$k$, where $k$ depends on the maximum number of symbols that are bound together.

Two of the VSAs we analyze are based on Bloom filters, which are space-efficient data structures for testing set membership. 
They have seen extensive study since their introduction fifty years ago by \cite{bloom_spacetime_1970}, but we have not found rigorous analyses of their performance in the setting of set intersection (vs membership testing).
Our results are novel contributions in this regard.

\subsubsection{Other Analyses of VSA Capacity}

Some recent works in the VSA literature that have inspired our own work also study the capacity problem, but they analyze more restricted VSA systems and/or use different analysis approaches. 
To our knowledge, we are the first to formally analyze VSAs in terms of sketching, using tools like the JL property, though \cite{thomas_streaming_2022} does mention the sparse JL transform. 

\cite{thomas_theoretical_2021} provides a theoretical analysis of VSAs in terms of a measure called ``incoherence,'' which quantifies the size of the ``cross-talk'' (dot product) between two independently initialized VSA vectors. 
They prove results on membership testing and set intersection of bundles, membership testing in key-value pairs (bindings of two symbols, keys and values, where the set of keys and the set of values are restricted to be disjoint), and show pairwise near-orthogonality of vectors and their coordinate permutations in terms of incoherence. Their work is in the linear setting (which pertains to MAP-I), and extends, as does ours via known results, from sign matrices to matrices of independent sub-Gaussians.
We add new bounds that include set intersections for bundles of $k$-bindings (not restricted to disjoint keys and values), and sequences of sets encoded with rotations (cyclic permutations).
Our view of sketching is potentially simpler for understanding capacity bounds on MAP-I, and our hypergraph framework for binding in \S\ref{subsec mapi bind} also allows us to extend our analysis for bundlings of bindings beyond only key-value pairs.

Another related work is that of \cite{kleyko_classification_2018}, which studies bundling in binary VSA systems, where atomic vectors are controlled by a sparsity parameter.
Their model is slightly different from the Bloom-filter inspired binary VSAs that we analyze, and their bundling operator includes a ``thinning'' step, where the support of the bundle gets reduced to the size of a typical atomic vector. 
Their analysis of VSA capacity uses Gaussian approximations holding in the limit, and some heuristics to get around the thinning step.
We formally analyze the capacity of bundling without thinning, using concentration inequalities.
The authors in \cite{kleyko_classification_2018} also conduct several empirical analyses and simulations of sparse binary VSAs, which may be of independent interest.

\section{Technical Overview and Statements of Results}

In the remainder of the paper, we will present concrete statements of our results, and highlight some of the proof ideas, with an emphasis again on connections to sketching and data structures. 
Each subsection here (roughly organized by each different VSA we analyze) will have a full section in the appendix where we give their proofs. 

\paragraph{Notation and Terminology.}
We summarize some notation in the following list, for reference.
\begin{itemize} \setlength{\itemsep}{-2pt}
    \item $\round{a}$ is the nearest integer to $a\in\reals$;
    \item $\sign(a)$ for $a\in\reals$ is $1$ if $a>0$, $-1$ if $a<0$ and $\pm 1$ with equal probability if $a=0$;
    \item $\signge(a)$ for $a\in\reals$ is $1$ if $a\ge 0$, and $-1$ otherwise
    \item For $a,b\in\reals$, $a=b\pm \eps$ means that $|a-b|\le\eps$, so that $a=b(1\pm \eps)$ means that $|a-b|\le b\eps$;
    \item $v\circ w$ denotes the Hadamard (elementwise) product of two vectors, $(v\circ w)_i = v_iw_i$;
    \item $v\wedge w$ denotes the elementwise minimum of two vectors $v$ and $w$, $(v\wedge w)_i = \min\{v_i, w_i\}$; for scalar $a$ and vector $v$, $a\wedge v$ is the vector with $(a\wedge v)_i = \min\{a, v_i\}$;
    \item $v\wedgedot w$ denotes $\sum_i \min\{v_i,w_i\}$;
    \item $\supp(v)=\{i\in[d]\mid v_i \ne 0\}$, for $v\in\bbR^d$. For diagonal $V\in\bbR^{d\times d}$, let $\supp(V) = \{i\in [d] \mid V_{ii} \ne 0\}$;
\end{itemize}

\subsection{Bundling as the output of a linear map.}

Our central approach is to cast VSA vectors as the outputs of linear transformations, followed by nonlinear maps for some of the VSAs.
Via the linear maps, we can translate between a very simple vector representation for symbols (described below) and the more robust, fault-tolerant, and (in some settings) lower-dimensional ones used in VSAs. 

A simple vector representation of elements in a universe $\cX$ is their one-hot encoding as unit binary vectors $e_i \in \{0, 1\}^d$, where $d \equiv |\cX|$. 
Then, set union corresponds to adding vectors: a set $X\subset\cX$ is a characteristic vector $v\in \{0,1\}^{d}$ with $v=\sum_{i\in X} e_i$.
The size of $|X|$ corresponds to the squared Euclidean norm $\norm{v}^2$. 
For two sets $X, Y \subseteq \cX$ with corresponding characteristic vectors $v$ and $w$, respectively, $|X \cap Y|$ is simply $v^\top w$; similarly, their symmetric difference $|X \Delta Y|$ is $\|v - w\|^2$.

Thus an embedding of these one-hot encodings, linear or non-linear, that approximately preserves distances and/or dot products of vectors (which is particularly challenging for binary vectors), can be used to maintain sets and the sizes of their intersections and symmetric differences.
With this in mind, we can regard the atomic vectors of a VSA as the columns of a random matrix $P$ (so the embedding of vector $x$ is $Px$, possibly with a non-linearity applied), and for appropriate random $P$, we will obtain such embeddings. 

A correspondence between the VSAs we study and their choice of matrix $P$ is found in Table~\ref{table:mat VSA}.
From this perspective, VSAs perform dimensionality reduction from characteristic vectors in $\{0,1\}^d$
to an $m$-dimensional space, for $m\ll d$.

\begin{table} 
\begin{center}
\begin{tabular}{|c|c|c|} 
 \hline
 VSA & Atomic Vector & Bundling \T \\ [0.5ex] 
 \hline
 MAP-I & $\bS e_i$ & $\bS v$ \T \\ 
 \hline
 MAP-B & $Se_i$ & $\sign(Sv)$ \\
 \hline
 Bloom filter & $Be_i$ & $1\wedge Bv$ \\
 \hline 
 Counting Bloom & $B e_i$ & $B v$ \\
 \hline
\end{tabular} \end{center} \caption{\label{table:mat VSA}
Equivalent to Table~\ref{table:orig-vsa}. Here $e_i\in\{0,1\}^d$ for $i\in [d]$ has $e_i=1$, all other coordinates zero, and $v\in\{0,1\}^d$ represents the set $\supp(v)\subset [d]$. The matrices $S$, $\bS$, and $B$ are sign (also called bipolar or Rademacher), scaled sign, and sparse binary, as in Def.~\ref{def sign matrix} and~\ref{def sparse binary}.}
\end{table}

\paragraph{Overview of Tools and Techniques}
Our analysis of MAP-I establishes the JL property for a variety of random sign matrices, some with dependent entries, that are used to encode bundles, sequences, and bundles of bindings. 
In the case of sequences and bindings, the entries of the sign matrices we analyze will not be independent, which complicates our analysis; we get around this using standard concentration results like McDiarmid's inequality (Theorem~\ref{thm:mcdiarmid}) and a version for when the bounded differences hold with high probability (Corollary~\ref{cor:high-prob-mcdiarmid}).
Our analysis of MAP-B also uses these concentration techniques and also borrows some ideas from Boolean Fourier analysis.
For Hopfield nets and our variation \Hpm, we also use standard concentration results like the Hanson-Wright inequality.
A Bernstein-like variation of McDiarmid, Theorem~\ref{thm Berns McD}, is key to our analysis of Bloom filters.

\subsection{Analysis of MAP-I Using Johnson-Lindenstrauss}

\paragraph{Bundling and Set Intersection}
For MAP-I, we choose $P$ as a scaled sign matrix $\bS$.

\begin{definition}\label{def sign matrix}
    A \emph{sign vector} $y\in \{\pm 1\}^m$ (also called a Rademacher vector) is a vector with independent entries, each chosen uniformly from $\{\pm 1\}$.
    A \emph{sign matrix} $S\in \{\pm 1\}^{m\times d}$ has columns that are independent sign vectors, and a \emph{scaled} sign matrix $\bS = \frac1{\sqrt{m}}S$ where $S$ is sign matrix.
\end{definition}
Thus, $S e_i$ is a random sign vector.
For MAP-I, the bundling operator is addition, so a set $X$ can be represented as $\bS v$, with $v = \sum_{i \in X} e_i$.
(More typically, MAP-I would use the unscaled sign matrix $S$, but $\bS$ is convenient for analysis and discussion.)
It is known that with high probability, for $m$ sufficiently large, that $\bS$ satisfies the \emph{Johnson-Lindenstrauss (JL) property}, which is the norm-preserving condition in the lemma below. 
\begin{lemma}[\cite{johnson_extensions_1984, achlioptas_database-friendly_2003}, \href{https://en.wikipedia.org/wiki/Johnson-Lindenstrauss_lemma}{JL}]
\torestate{ \label{lem JL} 
Suppose $\bS\in \frac1{\sqrt{m}}\{-1,1\}^{m\times d}$ is a scaled sign matrix (described in Def.~\ref{def sign matrix}).
Then for given $\delta,\epsilon>0$ there is $m=O(\epsilon^{-2}\log(1/\delta))$ such that for given vector $v\in \reals^{d}$, it holds that $\norm{\bS v} = \norm{v}(1\pm\epsilon)$, with failure probability at most $\delta$.}
\end{lemma}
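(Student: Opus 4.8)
The plan is to give the standard moment-generating-function (Chernoff) proof of the distributional Johnson--Lindenstrauss lemma, essentially reproducing the argument of \cite{achlioptas_database-friendly_2003}. First I would reduce to the case $\norm{v}=1$: both $\norm{\bS v}$ and $\norm{v}$ are homogeneous of degree one in $v$, so dividing through by $\norm{v}$ changes nothing. Writing $S$ for the unscaled sign matrix, set $Y_i = (Sv)_i = \sum_{j=1}^d S_{ij}v_j$ for $i\in[m]$. These are i.i.d.\ Rademacher sums with $\bbE[Y_i]=0$ and $\bbE[Y_i^2]=\sum_j v_j^2 = 1$, and $\norm{\bS v}^2 = \frac1m\sum_{i=1}^m Y_i^2$. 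Since $\norm{\bS v}\in[1-\epsilon,1+\epsilon]$ whenever $\norm{\bS v}^2\in[1-2\epsilon,\,1+2\epsilon]$ (using $(1\pm\epsilon)^2 \ge 1\pm 2\epsilon$ for $\epsilon\in[0,1]$), it suffices to prove the additive two-sided bound $\sum_{i=1}^m Y_i^2 = m(1\pm 2\epsilon)$ with failure probability $\le\delta$, and the constants will absorb into the $O(\cdot)$.

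The key estimate is a bound on the MGF of $Y_i^2$. By Hoeffding's lemma, since $Y_i$ is a sum of independent mean-zero terms with $\sum_j v_j^2 = 1$, we have $\bbE[e^{sY_i}]\le e^{s^2/2}$ for every $s\in\reals$. For $0\le t<1/2$, the Gaussian integral identity $e^{tY_i^2} = \bbE_{g\sim N(0,1)}\!\big[e^{\sqrt{2t}\,gY_i}\big]$ together with Fubini then gives $\bbE[e^{tY_i^2}]\le \bbE_g[e^{tg^2}] = (1-2t)^{-1/2}$, hence $\bbE[\exp(t\sum_i Y_i^2)]\le (1-2t)^{-m/2}$ by independence. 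For the lower tail I would instead use $e^{-x}\le 1-x+x^2/2$ for $x\ge 0$ and the fourth-moment bound $\bbE[Y_i^4] = 3 - 2\sum_j v_j^4 \le 3$ (a one-line computation from $\bbE[\epsilon_j^4]=1$), which yields $\bbE[e^{-tY_i^2}]\le 1 - t + \tfrac32 t^2 \le e^{-t + \frac32 t^2}$ for $t\ge 0$ and hence $\bbE[\exp(-t\sum_i Y_i^2)]\le \exp(m(-t+\tfrac32 t^2))$.

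With these in hand, Markov's inequality gives, for the upper tail, $\Prob[\sum_i Y_i^2 \ge m(1+2\epsilon)] \le e^{-2\epsilon tm}(1-2t)^{-m/2}$; choosing $t$ of order $\epsilon$ and using $-\tfrac12\ln(1-2t)\le t + t^2 + O(t^3)$ makes this at most $\exp(-c m \epsilon^2)$ for a universal constant $c$ and $\epsilon$ bounded. Symmetrically, $\Prob[\sum_i Y_i^2 \le m(1-2\epsilon)] \le e^{2\epsilon tm}\exp(m(-t+\tfrac32 t^2)) \le \exp(-c'm\epsilon^2)$ for $t$ again of order $\epsilon$. A union bound over the two tails and taking $m = C\epsilon^{-2}\log(1/\delta)$ for a large enough absolute constant $C$ drives the total failure probability below $\delta$, proving the lemma. (Alternatively, once one observes that $\sum_i Y_i^2 - m$ is a sum of $m$ independent centered sub-exponential variables, one can invoke a Bernstein-type bound such as Theorem~\ref{thm Berns McD} directly; I chose the self-contained MGF route for transparency.) The only non-routine point is the lower-tail MGF: the Gaussian-integral trick used for the upper tail introduces an imaginary exponent when applied to $e^{-tY_i^2}$, so one must argue separately, e.g.\ via the second-order Taylor expansion and the $\bbE[Y_i^4]\le 3$ bound as above; everything else is bookkeeping of constants.
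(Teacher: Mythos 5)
The paper never proves Lemma~\ref{lem JL}: it is imported as a known result, cited to Johnson--Lindenstrauss and Achlioptas, and used as a black box in everything downstream, so there is no ``paper proof'' to compare against. Your proposal essentially reconstructs the standard argument from the cited source (Achlioptas's MGF/Chernoff proof), and the approach is sound: reduction to unit $v$, the Gaussian-integral bound $\bbE[e^{tY_i^2}]\le(1-2t)^{-1/2}$ for the upper tail, and the fourth-moment Taylor bound $\bbE[e^{-tY_i^2}]\le e^{-t+\frac32 t^2}$ for the lower tail are exactly the right ingredients and do yield $m=O(\epsilon^{-2}\log(1/\delta))$. Two bookkeeping slips should be repaired before this is a complete write-up. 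First, both displayed Chernoff bounds mis-transcribe the Markov factor: it should be $e^{-tm(1+2\epsilon)}$ for the upper tail and $e^{tm(1-2\epsilon)}$ for the lower tail; as literally written, $e^{-2\epsilon tm}(1-2t)^{-m/2}$ does not decay for $t$ of order $\epsilon$, since $(1-2t)^{-m/2}\approx e^{mt+mt^2}$ and the cancellation of the $e^{mt}$ term comes precisely from the dropped $e^{-tm}$. Second, on the lower side $\norm{\bS v}^2\ge 1-2\epsilon$ does not imply $\norm{\bS v}\ge 1-\epsilon$ (that needs $\norm{\bS v}^2\ge(1-\epsilon)^2=1-2\epsilon+\epsilon^2$); either prove the squared-norm concentration with parameter $\epsilon$ and use $\sqrt{1-\epsilon}\ge 1-\epsilon$, or adjust constants. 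Both are constant-level fixes absorbed into the $O(\cdot)$, not gaps in the idea. One caution on your parenthetical alternative: Theorem~\ref{thm Berns McD} is a bounded-differences inequality, and $Y_i^2$ can be as large as $\norm{v}_1^2$ (up to $d$ for unit $v$), so invoking it ``directly'' degrades with $d$; that route really needs a Bernstein inequality for sub-exponential random variables, not the McDiarmid variant in the paper, so the self-contained MGF argument you chose is the right one.
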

\noindent This immediately tells us the dimension $m$ required to estimate set sizes up to a multiplicative factor of $\eps$, with failure probability $\delta$. 

For $X, Y \subseteq \cX$, let $v$ and $w$ denote their characteristic vectors.
An immediate consequence of the JL lemma is that the symmetric difference size $|X\Delta Y| = \norm{v-w}^2$ can be estimated with small relative error as well.
\begin{corollary}
\torestate{\label{cor set diff}
If random matrix $P$ satisfies the JL property (Lemma~\ref{lem JL}),
then for given $\delta,\epsilon>0$, and a set $\cV\subset\reals^{d}$ of cardinality $n$,
there is $m=O(\epsilon^{-2}\log(n/\delta))$ such that
with failure probability $\delta$, for all pairs $v,w\in\cV$,
 $\norm{P(v-w)}^2 \in \norm{v-w}^2(1\pm\epsilon)$.}
\end{corollary}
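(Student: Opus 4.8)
\textbf{Proof proposal for Corollary \ref{cor set diff}.}

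The plan is to run the standard ``point-set'' JL argument: apply the single-vector guarantee of Lemma~\ref{lem JL} to each difference vector $v-w$ and take a union bound. First I would enumerate the relevant vectors: there are at most $\binom{n}{2}\le n^2$ unordered pairs $\{v,w\}$ with $v,w\in\cV$, and for each such pair the quantity of interest is $\norm{P(v-w)}^2$ compared to $\norm{v-w}^2$. (The pairs with $v=w$ are trivial, and a pair with $v-w=0$ is handled vacuously.)

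Next I would handle the passage from a multiplicative error on the norm to a multiplicative error on the squared norm. Lemma~\ref{lem JL} applied to the fixed vector $u=v-w$, with an internal accuracy parameter $\epsilon'$ to be chosen, gives $\norm{Pu}=\norm{u}(1\pm\epsilon')$ except with probability at most $\delta'$. Squaring, $\norm{Pu}^2=\norm{u}^2(1\pm\epsilon')^2$, and since $(1+\epsilon')^2\le 1+3\epsilon'$ and $(1-\epsilon')^2\ge 1-2\epsilon'$ for $\epsilon'\in(0,1]$, choosing $\epsilon'=\epsilon/3$ (and $\epsilon\le 1$; the case $\epsilon>1$ is even easier) yields $\norm{Pu}^2\in\norm{u}^2(1\pm\epsilon)$ on that event.

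Then I would set the per-vector failure probability to $\delta'=\delta/\binom{n}{2}$, so that a union bound over all at most $\binom{n}{2}$ pairs leaves total failure probability at most $\delta$. Lemma~\ref{lem JL} then requires
\[
m=O\!\left(\epsilon'^{-2}\log(1/\delta')\right)=O\!\left(\epsilon^{-2}\log\!\big(n^2/\delta\big)\right)=O\!\left(\epsilon^{-2}\log(n/\delta)\right),
\]
using $\log(n^2/\delta)=2\log n+\log(1/\delta)=O(\log(n/\delta))$, which is exactly the claimed bound. On the intersection of the good events (probability at least $1-\delta$), $\norm{P(v-w)}^2\in\norm{v-w}^2(1\pm\epsilon)$ holds simultaneously for all pairs $v,w\in\cV$.

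There is no substantive obstacle here; the only things to be careful about are the constant-factor bookkeeping in the norm-to-squared-norm conversion and making sure the $\log(n^2/\delta)$ is absorbed into $O(\log(n/\delta))$. The corollary is stated for an abstract $P$ satisfying the JL property precisely so that this argument is reusable for the later VSA-specific sketching matrices.
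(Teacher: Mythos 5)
Your proposal is correct and matches the paper's argument, which is exactly the application of Lemma~\ref{lem JL} to the $O(n^2)$ pairwise difference vectors followed by a union bound with per-pair failure probability $\delta/\binom{n}{2}$, yielding the $\log n$ term in $m$. Your extra care with the norm-to-squared-norm conversion (rescaling $\epsilon$ by a constant) is fine bookkeeping that the paper folds into the $O(\cdot)$ implicitly.
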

\noindent The JL lemma also implies that $(Pv)^\top (Pw) = v^\top P^\top P w$ concentrates around $|X \cap Y| = v^\top w$. 
\begin{corollary} 
\torestate{\label{cor JLAMM}
Suppose the random matrix $P$ satisfies the JL property (Lemma~\ref{lem JL}).
Then for $v,w\in \reals^{d}$, there is $m=O(\epsilon^{-2}\log(1/\delta))$ so that
$v^\top P^\top P w = v^\top w \pm \epsilon\norm{v}\norm{w}$
with failure probability~$\delta$.}
\end{corollary}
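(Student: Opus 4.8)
The plan is to derive this from the JL property via the standard polarization trick: a dot product is a difference of squared norms, and each squared norm is preserved up to relative error by the JL property. Concretely, I would first apply Corollary~\ref{cor set diff} (or directly Lemma~\ref{lem JL}) to the three vectors $v$, $w$, and $v-w$ (or, to keep constants clean, to $v/\norm v \pm w/\norm w$ after normalizing — I will take the normalized route since the target error is $\eps\norm v\norm w$, not a relative error). So: rescale to $\hat v = v/\norm v$ and $\hat w = w/\norm w$, both unit vectors. Then $\hat v^\top P^\top P \hat w = \tfrac14\bigl(\norm{P(\hat v + \hat w)}^2 - \norm{P(\hat v - \hat w)}^2\bigr)$, exactly as $\hat v^\top\hat w = \tfrac14(\norm{\hat v+\hat w}^2 - \norm{\hat v-\hat w}^2)$.

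Next I would invoke the JL property on the two fixed vectors $\hat v+\hat w$ and $\hat v-\hat w$ with parameters $\eps' = \eps$ (up to an absolute constant) and $\delta' = \delta/2$, taking a union bound so that both norm estimates hold simultaneously with failure probability at most $\delta$; this still only needs $m = O(\eps^{-2}\log(1/\delta))$. On the good event, $\norm{P(\hat v\pm\hat w)}^2 = \norm{\hat v\pm\hat w}^2(1\pm\eps)$, and since $\norm{\hat v+\hat w}^2 + \norm{\hat v-\hat w}^2 = 2\norm{\hat v}^2 + 2\norm{\hat w}^2 = 4$, the two multiplicative errors combine additively into an absolute error: $|\hat v^\top P^\top P\hat w - \hat v^\top\hat w| \le \tfrac14\eps(\norm{\hat v+\hat w}^2 + \norm{\hat v-\hat w}^2) = \eps$. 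Multiplying back through by $\norm v\norm w$ gives $v^\top P^\top P w = v^\top w \pm \eps\norm v\norm w$, as claimed. (Absorbing the factor of $4$ and the $\delta/2$ into the $O(\cdot)$ in the statement of $m$ is harmless.)

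There is essentially no hard step here — this is a textbook argument — but the one place to be careful is the edge case where $\norm v = 0$ or $\norm w = 0$, which makes the normalization ill-defined; in that case the statement is trivial since both sides are $0$, so I would dispose of it first. I would also note explicitly that the union bound over the two vectors $\hat v\pm\hat w$ is what costs the factor of $2$ in $1/\delta$, and that no assumption of independence or of $v,w$ being characteristic vectors is used — the corollary holds for arbitrary $v,w\in\reals^d$, which is exactly why it specializes to give the set-intersection estimate $|X\cap Y| = v^\top w$ when $v,w$ are $\{0,1\}$-vectors.
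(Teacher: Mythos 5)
Your proof is correct and follows essentially the same route as the paper's: normalize to unit vectors, express the dot product via polarization, apply the JL norm-preservation to the relevant combinations with a union bound, and fold constant factors into $m$. The only cosmetic difference is that you use the two-term identity $4\hat v^\top\hat w = \norm{\hat v+\hat w}^2 - \norm{\hat v-\hat w}^2$ (two JL events, with the parallelogram law giving error exactly $\eps$), whereas the paper uses $2v^\top w = \norm{v+w}^2 - \norm{v}^2 - \norm{w}^2$ (three JL events), a negligible distinction.
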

\noindent Estimation of the cosine of the angle between $v$ and $w$, which is $|X\cap Y|/\sqrt{|X|\cdot|Y|}$, up to additive error $\epsilon$ , is also now immediate.

The goal of our MAP-I bundling section is to use the above lemmas and corollaries to prove the following theorem.  
\begin{theorem} \torestate{\label{thm JL pairs} 
Suppose random matrix $P$ satisfies the JL property (Lemma~\ref{lem JL}).
Given $M$ pairs of characteristic vectors $v,w\in\{0,1\}^d$ such that for every pair $v,w$, $\norm{v}_1\norm{w}_1 \le N$, then there is $m=O(N\log(M/\delta))$ such that $\round{v^\top P^\top Pw} = v^\top w$ for all $M$ pairs with probability $\geq 1 - \delta$.}
\end{theorem}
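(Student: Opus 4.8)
The plan is to reduce the theorem to a single application of Corollary~\ref{cor JLAMM} with a carefully chosen accuracy parameter, followed by a union bound. The starting observation is that, since $v,w\in\{0,1\}^d$, the inner product $v^\top w$ is a nonnegative integer, and $\round{a}=v^\top w$ holds precisely when $|a-v^\top w|<1/2$. So it suffices to guarantee that the sketched estimate $v^\top P^\top Pw$ lies within additive error strictly less than $1/2$ of $v^\top w$, simultaneously for all $M$ pairs.

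Next I would exploit the fact that a characteristic vector has $\norm{v}^2=\norm{v}_1$ (each nonzero entry equals $1$), so that $\norm{v}\norm{w}=\sqrt{\norm{v}_1\norm{w}_1}\le\sqrt{N}$ for every pair in the collection. Corollary~\ref{cor JLAMM} then gives, for any target accuracy $\epsilon>0$ and failure probability $\delta'$, a dimension $m=O(\epsilon^{-2}\log(1/\delta'))$ for which a fixed pair satisfies $v^\top P^\top Pw = v^\top w \pm \epsilon\norm{v}\norm{w} = v^\top w\pm\epsilon\sqrt N$ with probability at least $1-\delta'$. Choosing $\epsilon = 1/(3\sqrt N)$ (any constant factor below $\tfrac12 N^{-1/2}$ works) makes the additive error at most $1/3<1/2$, and forces $\epsilon^{-2}=9N=O(N)$.

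Finally I would set $\delta'=\delta/M$ and take a union bound over the $M$ pairs: each pair rounds correctly except with probability $\delta/M$, so all of them round correctly with probability at least $1-\delta$. Substituting $\epsilon^{-2}=O(N)$ and $\delta'=\delta/M$ into the bound from Corollary~\ref{cor JLAMM} yields $m = O(\epsilon^{-2}\log(1/\delta')) = O(N\log(M/\delta))$, as claimed.

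There is no serious obstacle here; the only points needing (minor) care are using a strict inequality for the rounding threshold — hence taking $\epsilon$ a fixed constant factor below $\tfrac12 N^{-1/2}$ rather than exactly at it — and making sure the union bound merely inflates $\log(1/\delta)$ to $\log(M/\delta)$, a single matrix $P$ of the stated dimension serving all $M$ pairs at once rather than $M$ separate matrices. One could also remark that this is precisely the sketching phenomenon that exact recovery of small integer inner products costs $m$ proportional to the product of the $\ell_1$-norms, and in particular is independent of the ambient dimension $d$.
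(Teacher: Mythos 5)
Your proposal is correct and follows essentially the same route as the paper's proof: apply Corollary~\ref{cor JLAMM} with the JL accuracy parameter scaled to $\epsilon_0/\sqrt{N}$ for a constant $\epsilon_0<1/2$ (so the additive error $\epsilon\norm{v}\norm{w}$ stays below the rounding threshold), then union bound over the $M$ pairs with per-pair failure probability $\delta/M$, giving $m=O(N\log(M/\delta))$. No gaps to report.
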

\noindent In short, we inherit much of our capacity analysis of bundling in MAP-I through known results about the JL property.

\paragraph{Rotations via JL property.}

One operation used in VSAs to expand on the number of nearly orthogonal vectors available for use is via permutations of the entries; this is the ``P'' in the MA{\bf P} VSA systems introduced by \cite{gayler1998multiplicative}.
These can be encoded in permutation matrices $R\in\{0,1\}^{m\times m}$. %, such that the entries of $Ry$ are those of $y$, but moved to different indices.
We will focus on cyclic permutations.

\begin{definition}\label{def R}
    Let $R\in\reals^{m\times m}$ denote the permutation matrix implementing a rotation, so that $R_{m,1} = 1$ and for $i\in[m-1]$,  $R_{i, i+1} = 1$, with all other entries equal to zero.
\end{definition}

For a random sign vector $y$, and such a rotation matrix $R$ (or indeed for any permutation matrix with few fixed points), $Ry$ is nearly orthogonal to $y$, and the vectors $R^\ell y$ for $\ell=0,1\ldots L$ are pairwise mutually orthogonal with high probability, if $L$ is not too large.
However, the entries of these vectors are not independent, so additional care is needed in analyzing them.

Permutations (and specifically, rotations) can be used to encode a \emph{sequence} of atomic vectors $x^\ell$ as a sum $\sum_\ell R^\ell x^\ell$.
We can also encode a \emph{sequence of sets} with characteristic vectors $v_{(\ell)}$.
\begin{definition}\label{def PRL}
    For $R\in\bbR^{m\times m}$, $S\in\reals^{m\times d}$ and integer $L\ge 0$, let $S_{R,L}\in\bbR^{m\times Ld}$ denote
    \[[S\;RS\;R^2S\;\ldots\;R^{L-1}S].\]
    For a sequence of $v_{(0)}, v_{(1)},\ldots v_{(L-1)} \in \mathbb{R}^d$, let $v\in\mathbb{R}^{Ld}$ denote $v \equiv [v_{(0)}\;  v_{(1)}\; \ldots \; v_{(L-1)}]$.
\end{definition}
\noindent The sequence given by $v$ in the definition can be represented in MAP-I as a single vector $\bS_{R,L}v$.
We first find $m$ such that $\bS_{R,L}$ satisfies the JL property for general $v \in \mathbb{R}^{Ld}$.
\begin{theorem}
    \torestate{\label{thm perm}
    Given scaled sign matrix $\bS\in\reals^{m\times d}$, rotation matrix $R\in\reals^{m\times m}$ as in Def.~\ref{def R}, integer $L>0$, and $\bS_{R,L}$ as in Def.~\ref{def PRL}.
    Then for $v \in \reals^{Ld}$ as defined above,
    \[
    | \norm{\bS_{R,L}v}^2 - \norm{v}^2|
        \le 3\epsilon \norm{v}_{L,1}^2
        \le 3L\epsilon\norm{v}^2,
    \]
    with failure probability $6L^2\delta$.
    It follows that there is $m=O((L/\eps)^2\log(L/\delta))$ such that with failure probability at most $\delta$,
    $\norm{\bS_{R,L}v}^2 = (1\pm\eps)\norm{v}^2$.}
\end{theorem}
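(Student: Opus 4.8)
The plan is to expand $\bS_{R,L}v = \sum_{\ell=0}^{L-1}R^\ell\bS v_{(\ell)}$, so that
$\norm{\bS_{R,L}v}^2 = \sum_{\ell}\norm{R^\ell\bS v_{(\ell)}}^2 + \sum_{\ell\ne\ell'}\langle R^\ell\bS v_{(\ell)},\,R^{\ell'}\bS v_{(\ell')}\rangle$, and to control the $L$ ``diagonal'' and $L(L-1)$ ``cross'' terms separately, then union bound. Because $R$ is a permutation matrix it is orthogonal, so each diagonal term equals $\norm{\bS v_{(\ell)}}^2$ and each cross term equals $(\bS v_{(\ell)})^\top R^{k}(\bS v_{(\ell')})$ with $k=\ell'-\ell$; for $L\le m$ we have $0<|k|<m$, so $R^k$ is again a rotation and in particular has no fixed coordinates.

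For the diagonal terms I would simply invoke the JL lemma (Lemma~\ref{lem JL}) on each block: for $m=O(\epsilon^{-2}\log(1/\delta))$, $\norm{\bS v_{(\ell)}}^2=(1\pm\epsilon)\norm{v_{(\ell)}}^2$, and a union bound over $\ell$ gives, with failure probability $L\delta$, that $\sum_\ell\norm{\bS v_{(\ell)}}^2=\norm v^2\pm\epsilon\sum_\ell\norm{v_{(\ell)}}^2$, where $\sum_\ell\norm{v_{(\ell)}}^2\le\norm v_{L,1}^2$. The real work is the cross term $g:=(\bS v_{(\ell)})^\top R^{k}(\bS v_{(\ell')})$. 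Writing $g=\frac1m\sum_i A_iB_{\sigma^k(i)}$ with $A_i=\langle S_{i,\cdot},v_{(\ell)}\rangle$, $B_i=\langle S_{i,\cdot},v_{(\ell')}\rangle$ and $\sigma$ the cyclic shift, each summand pairs two \emph{distinct, hence independent} rows of $S$ (because $\sigma^k$ is fixed-point-free), so $\bbE[g]=0$; a short variance computation also shows $\bbE[g^2]=O(\norm{v_{(\ell)}}^2\norm{v_{(\ell')}}^2/m)$, which suggests the target bound $|g|\le\epsilon\norm{v_{(\ell)}}\norm{v_{(\ell')}}$ at $m=O(\epsilon^{-2}\log(1/\delta))$. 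The obstacle is that $\bS v_{(\ell)}$ and $R^k\bS v_{(\ell')}$ are built from the \emph{same} matrix $S$, so $g$ is not a sum of independent terms; I would get around this by treating the $m$ rows of $S$ as the independent coordinates and applying McDiarmid (Theorem~\ref{thm:mcdiarmid}): resampling one row $j$ alters only the two summands with $i=j$ and $i=\sigma^{-k}(j)$, so the bounded-difference constant is $\tfrac1m\cdot O(\max_i|A_i|\cdot\max_i|B_i|)$. Since the $A_i,B_i$ are sub-Gaussian, with probability $1-\delta$ they are all $O(\norm{v_{(\ell)}}\sqrt{\log(m/\delta)})$ resp. $O(\norm{v_{(\ell')}}\sqrt{\log(m/\delta)})$, so the high-probability form of McDiarmid (Corollary~\ref{cor:high-prob-mcdiarmid}), or equivalently McDiarmid applied to a truncated version of $g$, yields the bound. (Alternatively, $g$ is a zero-diagonal quadratic form in the Rademacher vector $\mathrm{vec}(S)$ with $\|\cdot\|_F^2=O(m\norm{v_{(\ell)}}^2\norm{v_{(\ell')}}^2)$ and $\|\cdot\|_{op}=O(\norm{v_{(\ell)}}\norm{v_{(\ell')}})$, so Hanson--Wright gives the same conclusion cleanly at $m=O(\epsilon^{-2}\log(1/\delta))$.)

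Finally I would assemble: on the intersection of the $L$ diagonal events and the $L(L-1)$ cross-term events, which fails with probability $O(L^2\delta)$, the triangle inequality gives
$\bigl|\norm{\bS_{R,L}v}^2-\norm v^2\bigr|\le\epsilon\sum_\ell\norm{v_{(\ell)}}^2+\epsilon\sum_{\ell\ne\ell'}\norm{v_{(\ell)}}\norm{v_{(\ell')}}$,
and since $\sum_\ell\norm{v_{(\ell)}}^2+\sum_{\ell\ne\ell'}\norm{v_{(\ell)}}\norm{v_{(\ell')}}=\bigl(\sum_\ell\norm{v_{(\ell)}}\bigr)^2=\norm v_{L,1}^2$, this is $O(\epsilon\norm v_{L,1}^2)$ (the constant $3$ in the statement absorbs the slack in the cross-term estimate), and $\norm v_{L,1}^2\le L\norm v^2$ by Cauchy--Schwarz. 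The ``it follows'' clause is then just a rescaling: to get final relative error $\epsilon$ and failure $\delta$, run the above with parameters $\epsilon/(3L)$ and $\delta/(6L^2)$, so that the required dimension is $m=O\bigl((L/\epsilon)^2\log(L^2/\delta)\bigr)=O\bigl((L/\epsilon)^2\log(L/\delta)\bigr)$. I expect the cross-term concentration under dependence to be the only genuinely delicate step; the rest is JL plus bookkeeping.
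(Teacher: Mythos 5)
Your proposal is correct, and its skeleton (expand $\norm{\bS_{R,L}v}^2$ into $L$ diagonal terms plus $L(L-1)$ cross terms, handle the diagonal via Lemma~\ref{lem JL}, union bound everything, then rescale $\epsilon\to\epsilon/3L$, $\delta\to\delta/6L^2$) matches the paper's. Where you genuinely diverge is the treatment of the cross term $(\bS v_{(\ell)})^\top R^{k}\bS v_{(\ell')}$. The paper does not attack the dependence head-on: its Lemma~\ref{lem perm dot} partitions the rows of $\bS$ into blocks of size $k$ and groups odd and even blocks, so that $\bS$ and $R^{k}\bS$ restricted to each group use disjoint (hence independent) row sets; the cross term then splits into two pieces, each bounded by the polarization/block-JL argument of Lemma~\ref{jl block}, which only uses the JL property as a black box. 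You instead keep the single dependent expression $g=\frac1m\sum_i A_iB_{\sigma^k(i)}$ and apply a concentration inequality over the independent entries (or rows) of $S$ directly. Your Hanson--Wright variant is the cleaner of your two options and does deliver the needed per-pair bound at $m=O(\epsilon^{-2}\log(1/\delta))$: the form is zero-diagonal because $\sigma^k$ is fixed-point-free, and the Frobenius/operator norms you quote are right, so Theorem~\ref{thm HW} applies to $\mathrm{vec}(S)$. Your McDiarmid-with-truncation variant, however, is quantitatively weaker than you suggest: after truncating $|A_i|,|B_i|$ at $O(\norm{v_{(\ell)}}\sqrt{\log(m/\delta)})$, the sum of squared differences is of order $\norm{v_{(\ell)}}^2\norm{v_{(\ell')}}^2\log^2(m/\delta)/m$, so Corollary~\ref{cor:high-prob-mcdiarmid} forces $m=O(\epsilon^{-2}\log(1/\delta)\,\mathrm{polylog}(m/\delta))$ per pair, which does not quite recover the stated $m=O((L/\epsilon)^2\log(L/\delta))$ (the paper faces exactly this issue in Theorem~\ref{thm perm symbols}, where such log factors are tolerated). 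So your route buys a shorter, structure-free argument (no block bookkeeping, no implicit $k\le m/2$ or divisibility issues) at the price of using the Rademacher-entry structure of $S$ rather than the JL property as a black box, provided you commit to the Hanson--Wright version for the cross terms.
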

\noindent The proof of Theorem \ref{thm perm} relies on partitioning the rows of $\bS$ so that within the rows of a single partition subset, the corresponding rows of $\bS$ and $R^i \bS$ can be treated independently. 

We get tighter bound on $m$ when the $v_{(i)}$ are characteristic vectors: our bound on $m$ depends on $K$, the maximum number of times that a given symbol appears in~$v$.
\begin{theorem}
\torestate{\label{thm perm symbols}
Given scaled sign matrix $\bS\in\reals^{m\times d}$, rotation matrix $R\in\reals^{m\times m}$ (Def.~\ref{def R}), integer $L>0$, and $\bS_{R,L}$ as in Def.~\ref{def PRL}.
For a sequence of vectors $v_{(0)}, v_{(1)},\ldots v_{(L-1)}\in\{0,1\}^d$, let
$K\equiv \norm{\sum_{0\le j<L} v_{(j)}}_\infty$.
There is $m=O \left(K^2\eps^{-2}\log(K/(\eps\delta)) \right)$ such that with failure probability~$\delta$,
\[
| \norm{\bS_{R,L} v}^2 - \norm{v}^2| \le \epsilon \norm{v}^2
\]
}
\end{theorem}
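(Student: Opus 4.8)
The plan is to recast $\norm{\bS_{R,L}v}^2$ as a scaled sum of squares of independent-sign sums, pin down its mean exactly, and then concentrate it with a concentration inequality. The first step is to regroup $\bS_{R,L}v = \tfrac1{\sqrt m}\sum_{\ell=0}^{L-1}R^\ell S v_{(\ell)}$ by \emph{symbol} rather than by position in the sequence. Write $s_i$ for the $i$-th column of the unscaled sign matrix $S$ (so the $s_i$ are independent sign vectors), and let $J_i\equiv\{\ell : i\in\supp(v_{(\ell)})\}$ be the set of times at which symbol $i$ occurs, so that $k_i\equiv|J_i|\le K$ and $\sum_i k_i = \norm{v}_1 = \norm{v}^2 \equiv n$. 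Then
\[
\bS_{R,L}v \;=\; \tfrac1{\sqrt m}\sum_i\Bigl(\textstyle\sum_{\ell\in J_i}R^\ell\Bigr)s_i \;=\; \tfrac1{\sqrt m}\sum_i M_i s_i, \qquad M_i\equiv\textstyle\sum_{\ell\in J_i}R^\ell,
\]
and coordinatewise $Z_a\equiv\bigl(\sqrt m\,\bS_{R,L}v\bigr)_a = \sum_i\sum_{\ell\in J_i}(s_i)_{a+\ell}$ (indices mod $m$). Assuming $L\le m$ --- equivalently, no symbol occurs at two times congruent modulo $m$; see below --- the powers $R^0,\dots,R^{L-1}$ have pairwise disjoint supports, so each $M_i$ is a $0$--$1$ matrix with exactly $m k_i$ ones and $\norm{M_i}_F^2 = m k_i$, while $Z_a$ is a sum of exactly $\sum_i k_i = n$ distinct, independent signs. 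Hence, since cross terms vanish by independence of the $s_i$,
\[
\bbE\,\norm{\bS_{R,L}v}^2 \;=\; \tfrac1m\sum_{a=1}^m\bbE Z_a^2 \;=\;\tfrac1m\sum_i\bbE\,s_i^\top M_i^\top M_i s_i \;=\;\tfrac1m\sum_i\norm{M_i}_F^2 \;=\;\sum_i k_i \;=\; n \;=\;\norm{v}^2 .
\]

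It remains to concentrate $\tfrac1m\sum_a Z_a^2$ around $n$. The cleanest route is to note that $\norm{\bS_{R,L}v}^2 = s^\top Q s$ is a PSD quadratic form in the $md$ i.i.d.\ signs $s=\mathrm{vec}(S)$, with $\tr Q = n$; computing the relevant correlations (governed by the cyclic autocorrelations of the indicator vectors of the $J_i$, whose total mass is $\sum_i k_i^2$) yields $\|Q\|_{\mathrm{op}} \le \tfrac1m\sum_i k_i^2 \le Kn/m$ and $\|Q\|_F^2 \le Kn^2/m$. The Hanson--Wright inequality with deviation $t=\eps\norm{v}^2=\eps n$ then bounds the failure probability by $2\exp\bigl(-c\min\{\eps^2 m/K,\,\eps m/K\}\bigr) = 2\exp(-c\eps^2 m/K)$ for $\eps\le 1$, which is at most $\delta$ once $m=\Omega(K\eps^{-2}\log(1/\delta))$ --- comfortably inside the stated bound. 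Alternatively, and in keeping with the paper's native toolkit, one can prove concentration via the high-probability McDiarmid bound (Corollary~\ref{cor:high-prob-mcdiarmid}) applied to $f(S)=\norm{\bS_{R,L}v}^2$: flipping an entry $S_{b,i}$ perturbs $Z_a$ only for the $k_i\le K$ indices $a\in\{b-\ell:\ell\in J_i\}$, so on the high-probability event $\max_a|Z_a|=O(\sqrt{n\log(m/\delta)})$ (Hoeffding plus a union bound over the $m$ coordinates) one gets bounded differences $c = O\bigl(K\sqrt{n\log(m/\delta)}/m\bigr)$ over at most $mD$ effective coordinates, where $D\le n$ is the number of distinct symbols; this reproduces the $K^2\eps^{-2}$ dependence of the stated bound.

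The main obstacle is the dependence among $Z_1,\dots,Z_m$: they are built from the same columns $s_i$, so $\tfrac1m\sum_a Z_a^2$ is a genuine degree-$2$ chaos, not a sum of independent terms, and it is precisely in controlling this chaos that the multiplicity $K$ enters --- via $\sum_i k_i^2 \le K\sum_i k_i = Kn$, which bounds the self-overlap of the occurrence patterns $J_i$ and feeds into $\|Q\|_F$ and $\|Q\|_{\mathrm{op}}$ (Hanson--Wright route) or into the bounded-difference constants (McDiarmid route). A minor but genuine caveat is the assumption $L\le m$: without it two powers $R^\ell$ coincide, the $M_i$ acquire entries $\ge 2$, and $\bbE\norm{\bS_{R,L}v}^2$ no longer equals $\norm{v}^2$ exactly. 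Since one cannot meaningfully use more than $m$ distinct cyclic rotations, this is harmless and can be imposed at the outset (or weakened to the requirement that each symbol's occurrence times be distinct modulo $m$).
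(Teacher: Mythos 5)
Your proposal is correct, and it takes a genuinely different route from the paper. The paper splits $v$ into $K$ layers $x^1,\dots,x^K$, each containing every symbol at most once, applies the JL lemma (Lemma~\ref{lem JL}) to each diagonal term $\norm{\bS_{R,L}x^k}^2$, and then controls the $K^2$ cross terms $x^{k'\top}\bS_{R,L}^\top\bS_{R,L}x^k$ one pair at a time via the high-probability McDiarmid bound (Corollary~\ref{cor:high-prob-mcdiarmid}), union-bounding over pairs and paying a factor of $K$ in the error when the layers are recombined --- which is exactly where the $K^2\eps^{-2}\log(K/(\eps\delta))$ in the statement comes from. You instead regroup by symbol, write $\bS_{R,L}v=\tfrac1{\sqrt m}\sum_i M_i s_i$ with $M_i$ a sum of $k_i\le K$ disjoint cyclic shifts, get the mean exactly equal to $\norm{v}^2$, and treat the whole quantity as a single quadratic form $s^\top Q s$ in the i.i.d.\ signs, to which Hanson--Wright (the paper's Theorem~\ref{thm HW}, used there only for \Hpm) applies; your norm bounds $\norm{Q}\le\tfrac1m\sum_i k_i^2\le Kn/m$ and $\norm{Q}_F^2\le\norm{Q}\,\tr(Q)\le Kn^2/m$ check out, and they yield $m=O(K\eps^{-2}\log(1/\delta))$, which is strictly sharper than the stated $O(K^2\eps^{-2}\log(K/(\eps\delta)))$ --- the global chaos view avoids both the pairwise union bound and the lossy recombination step. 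Your secondary McDiarmid sketch is essentially the paper's machinery applied globally rather than layer-by-layer and, as you say, only recovers the $K^2$ dependence. Finally, your caveat that the occurrence times of each symbol must be distinct modulo $m$ (e.g.\ $L\le m$) is genuine and is in fact also implicitly needed in the paper's proof, where $\bbE[D_{k,k'}]=0$ for cross terms fails if the same symbol occupies two positions congruent mod~$m$; making it explicit, as you do, is a small improvement.
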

We handle dependences between rows of $S$ using a version of McDiarmid's inequality (Corollary \ref{cor:high-prob-mcdiarmid}).
As noted by Corollary \ref{cor set diff} and Corollary \ref{cor JLAMM}, satisfying the JL property allows us to find $m$ so we can perform set intersection (dot product) and symmetric difference (subtraction) operations on vectors of form $[v_{(0)} \, v_{(1)} \ldots v_{(L - 1)}]$ up to a multiplicative factor of $\eps$ and failure probability $\delta$.

\paragraph{Bundles of bindings the JL property.}

In MAP-I, we implement binding using the Hadamard (element-wise) products of atomic vectors.
In this setting, we can compactly represent a bundling of $k$-wise atomic bindings using the matrix $\bS\bnd{k}$.

\begin{definition}\label{def bund bind}
   For sign matrix $S$, let $S\bnd{k}\in \frac{1}{\sqrt{m}} \reals^{m\times \binom{d}{k}}$, where each column of $S\bnd{k}$ is the Hadamard (element-wise) product of $k$ different columns of $S$. The scaled version $\bS\bnd{k}$ is $\frac1{\sqrt{m}}S\bnd{k}$. We clarify that $P\bnd{k\top}$ denotes $(P\bnd{k})^\top$.
\end{definition}

With this notation, a bundling of $k$-wise atomic bindings is $\bS\bnd{k}v$, for $v\in\{0,1\}^{\binom{d}{k}}$.
Note that
$\bbE[\bS\bnd{k}\bS\bnd{k\top}] = \frac{\binom{d}{k}}{m} I_m$, while $\bbE[\bS\bnd{k\top}\bS\bnd{k}] = I_{\binom{d}{k}}$.
In order to reason about intersections over bundles of $k$-bindings, or about the symmetric difference between the bundles of $k$-bindings, we again establish the JL property, this time for $\bS\bnd{k}v$.
We first present the result for the $k = 2$ case, which captures bundles of key-value bindings. 
\begin{theorem}
\torestate{\label{thm:two-binding}
For $v\in\{0,1\}^{\binom{d}{2}}$, there is $m=O(\eps^{-2}\log^3(\norm{v}_1/\eps\delta))$ so that
for $\bS\bnd{2}\in\{\pm \frac1{\sqrt{m}}\}^{\binom{d}{2}}$,
\[
\Pr[|\norm{\bS\bnd{2}v}^2 -\norm{v}^2| > \eps\norm{v}^2] \le \delta.
\]}
\end{theorem}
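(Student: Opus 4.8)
The plan is to expand $\norm{\bS\bnd{2}v}^2$ and show it concentrates around $\norm{v}^2$. Write $v$ as the characteristic vector of a set $E \subseteq \binom{[d]}{2}$, so $v$ indexes a graph on vertex set $[d]$ with edge set $E$, and $\norm{v}_1 = |E|$. For a column of $S$ indexed by $i \in [d]$, write $s_i \in \{\pm 1\}^m$; the column of $S\bnd{2}$ for the edge $\{i,j\}$ is $s_i \circ s_j$. Then
\[
\norm{\bS\bnd{2}v}^2
= \frac1m \sum_{r=1}^m \Bigl(\sum_{\{i,j\}\in E} S_{ri}S_{rj}\Bigr)^2
= \frac1m \sum_{r=1}^m \Bigl(\sum_{\{i,j\}\in E} S_{ri}S_{rj}\Bigr)^2 .
\]
Each row $r$ contributes the square of a quadratic form $Q_r \equiv \sum_{\{i,j\}\in E} S_{ri}S_{rj}$ in the $d$ independent Rademachers $(S_{r1},\dots,S_{rd})$. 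Expanding $Q_r^2$, the diagonal terms $(\{i,j\}=\{i',j'\})$ sum to $|E| = \norm{v}^2$, and each row has expectation $\bbE[Q_r^2] = |E|$ because the off-diagonal cross terms $S_{ri}S_{rj}S_{ri'}S_{rj'}$ have mean zero whenever $\{i,j\}\neq\{i',j'\}$ (each such monomial contains some variable to an odd power). So the estimator is unbiased, and the task is purely a concentration statement: show $\frac1m\sum_r Q_r^2$ is within $(1\pm\eps)$ of its mean $|E|$ with failure probability $\delta$.

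The rows are i.i.d., so I would first get a tail bound for a single $Q_r^2$ and then combine across the $m$ independent rows. The natural route is McDiarmid's inequality (Theorem~\ref{thm:mcdiarmid}) in the high-probability-bounded-differences form (Corollary~\ref{cor:high-prob-mcdiarmid}) applied to the function $(S_{r1},\dots,S_{rd}) \mapsto Q_r^2$, or more precisely to the average over all $m$ rows viewed as a function of the $md$ independent signs. Flipping one sign $S_{ri}$ changes $Q_r$ by $\pm 2\sum_{j: \{i,j\}\in E} S_{rj} = \pm 2 d_i(r)$ where $d_i(r)$ is a signed sum over the neighbors of $i$; so $Q_r^2$ changes by at most $|2d_i(r)|\cdot(|Q_r| + |d_i(r)|)$-ish, and both $|Q_r|$ and $|d_i(r)|$ are themselves sums of $\le |E|$ (resp. $\le \deg(i)$) Rademachers, hence $O(\sqrt{|E|\log(|E|/\eps\delta)})$ with high probability — this is exactly where the extra logarithmic factors and the $\log^3$ in the statement come from (one log from the per-row tail / bounded-difference high-probability event, one from a union bound over the $md$ coordinates, and one more from iterating the "$|E|$ is effectively $\polylog$-bounded sum of signs" argument at two levels of the expansion). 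Summing the squared bounded differences over the $md$ coordinates and dividing by $m^2$, one gets a McDiarmid bound of the form $\exp(-\Omega(m\eps^2 \norm{v}^2 / \poly\log))$, and solving for $m$ gives the claimed $m = O(\eps^{-2}\log^3(\norm{v}_1/\eps\delta))$.

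The main obstacle is handling the \emph{dependence structure within a row}: $Q_r$ is a genuine quadratic (degree-2) polynomial in the $S_{ri}$, not a sum of independent terms, so its square is degree 4 and the bounded differences are themselves random and only bounded with high probability. The clean way to control this is a two-level argument: condition to establish that, with probability $\ge 1 - \delta'$, every relevant partial sum ($Q_r$ itself, and each neighbor-sum $d_i(r)$) is $O(\sqrt{|E|\log(|E|/\delta')})$ in magnitude — each of these is a Rademacher sum, so a Hoeffding/Chernoff bound plus a union bound over $\le md$ such sums suffices — and then, on that good event, apply the high-probability version of McDiarmid with these now-deterministic difference bounds. Bookkeeping the three $\delta$-budgets and the nested logarithms to land exactly on $\log^3(\norm{v}_1/\eps\delta)$ is the fiddly part; the conceptual content is just "unbiased $+$ Rademacher-chaos concentration via McDiarmid." I would expect the $k=2$ proof to be written so that it foreshadows the general-$k$ case, where $Q_r$ becomes a degree-$k$ Rademacher chaos and the power of the logarithm grows with $k$.
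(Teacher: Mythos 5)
Your setup coincides with the paper's: the same expansion of $\norm{\bS\bnd{2}v}^2$ into per-row quadratic Rademacher chaoses $Q_r=\sum_{\{i,j\}\in E}S_{ri}S_{rj}$, the same unbiasedness observation, the same graph view, and the same general plan of high-probability bounded differences plus McDiarmid with clipping and union-bound bookkeeping. The gap is in the step where you bound the differences of $Q_r^2$: there you need a high-probability bound on $|Q_r|$ itself, and you justify $|Q_r|=O(\sqrt{|E|\log(\cdot)})$ by treating $Q_r$ as a sum of at most $|E|$ independent Rademachers. The products $S_{ri}S_{rj}$ are only pairwise independent, not mutually independent, so Theorem~\ref{thm:rademacher} does not apply; worse, the claimed bound is false in general. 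If $E$ is a clique on $p$ vertices then $Q_r=\tfrac12\bigl((\sum_i S_{ri})^2-p\bigr)$, whose $1-\delta'$ quantile is $\Theta(p\log(1/\delta'))=\Theta(\sqrt{|E|}\,\log(1/\delta'))$ rather than $O(\sqrt{|E|\log(1/\delta')})$: a degree-2 chaos has a sub-exponential, not sub-Gaussian, tail at this scale (cf.\ Hanson--Wright, Theorem~\ref{thm HW}).

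This costs you the theorem as stated. Plugging the correct bound $|Q_r|=O(\sqrt{|E|}\log(\cdot))$ into your single plain-McDiarmid application (Corollary~\ref{cor:high-prob-mcdiarmid}) over all $m|V|$ signs gives squared difference constants of order $\deg(i)\,|E|\log^3(\cdot)$, and solving yields $m=O(\eps^{-2}\log^4(\norm{v}_1/\eps\delta))$, one logarithm worse than claimed. The paper's proof is organized precisely to avoid putting $|Q_r|$ into the difference constants: it applies McDiarmid only \emph{within} a single row, where flipping $x_i$ changes the chaos by $2|g_i|$ and each $g_i$ is (up to a sign) a sum of independent Rademachers over the neighbors of $i$, so Theorem~\ref{thm:rademacher} legitimately applies; this shows each clipped row value is sub-Gaussian with variance proxy $O(|E|\log(|E|m/\delta))$, and then Bernstein-type (sub-exponential) concentration for the average of the $m$ i.i.d.\ squares gives the exponent $\eps^2 m/\log^2(\cdot)$ and hence $\log^3$. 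If you want to keep your one-shot global argument, you would need a variance-sensitive inequality such as the Bernstein form of McDiarmid (Theorem~\ref{thm Berns McD}) in place of worst-case bounded differences. A minor further point: the union bound for the neighbor sums should run over the at most $2\norm{v}_1$ vertices incident to an edge (as in the paper, $|V|\le 2|E|$), not over all $d$ columns, otherwise a spurious $\log d$ enters the bound.
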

\noindent We have a generalized result for all $k$ as well.
\begin{corollary} 
\torestate{\label{cor:hyperedge-binding}
For scaled sign $\bS\bnd{k}\in\{\pm \frac1{\sqrt{m}}\}^{\binom{d}{k}}$, $v\in\{0,1\}^{\binom{d}{k}}$, and $i\in [m]$,
there is $C>0$ and $m = O(\varepsilon^{-2} C^{k\log k} \log^{k + 1}(k\norm{v}_1/(\varepsilon \delta)))$ such that
\[
\Pr[|\norm{\bS\bnd{k}v}^2 -\norm{v}^2| > \eps\norm{v}^2] \le \delta.
\]
}
\end{corollary}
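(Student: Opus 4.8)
The plan is to reduce the statement to a moment-generating-function (Hanson--Wright-style) computation for the quadratic form $\norm{\bS\bnd{k}v}^2 = v^\top \bS\bnd{k\top}\bS\bnd{k} v$, but to do so indirectly, by conditioning so that the entries become independent. The obstruction, exactly as in the $k=2$ case (Theorem~\ref{thm:two-binding}), is that the columns of $\bS\bnd{k}$ are \emph{not} independent: any two columns indexed by $k$-subsets $A, B\subseteq[d]$ have a dot product controlled by the symmetric difference $A\triangle B$, and more importantly the full Gram matrix $\bS\bnd{k\top}\bS\bnd{k}$ has a rich dependency structure coming from the additive group $(\{0,1\}^d,\oplus)$. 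I would first fix a single row $i\in[m]$: the $i$-th row of $\bS\bnd{k}$ has entries $\prod_{j\in A}S_{ij}$ over $k$-subsets $A$, which are \emph{not} independent across $A$, but each is a degree-$k$ monomial in the $d$ independent signs $S_{i1},\dots,S_{id}$. So $(\bS\bnd{k}v)_i^2 = \tfrac1m\big(\sum_{A}v_A\prod_{j\in A}S_{ij}\big)^2$ is (up to scaling) the square of a degree-$k$ multilinear polynomial of the Rademacher variables in row $i$, and these row contributions \emph{are} independent across $i\in[m]$.

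From here the natural route is a bootstrapping/peeling argument on $k$, which is presumably why the statement is phrased as a corollary of Theorem~\ref{thm:two-binding} with a $\log^{k+1}$ factor and a $C^{k\log k}$ constant. Concretely: write the degree-$k$ polynomial $f(S_{i\cdot}) = \sum_A v_A \prod_{j\in A}S_{ij}$ and split the coordinate set $[d]$ into two halves (or condition on the signs in one block), so that $f = \sum_{\ell} g_\ell \cdot h_\ell$ where each $g_\ell$ is a monomial of degree $\le \lceil k/2\rceil$ in the first block and $h_\ell$ a polynomial of degree $\le \lfloor k/2\rfloor$ in the second block. Conditioning on the first block freezes the $g_\ell$'s to $\pm1$ values, and $f$ becomes a \emph{lower-degree} polynomial, to which one applies the inductive hypothesis (and in the base case, Theorem~\ref{thm:two-binding} or the linear JL Lemma~\ref{lem JL}). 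This is exactly a hypercontractivity / decoupling move: a degree-$k$ Rademacher chaos has tail bounds that are a $k$-fold convolution of subexponential tails, giving the $\exp(-(t/\sqrt{m})^{2/k})$-type concentration that, when inverted to achieve failure probability $\delta$, produces the $\log^{k+1}$ (one extra log per degree level plus the union bound) and the $C^{k\log k}$ constant (the accumulated hypercontractive constants, $\approx (k!)^{\text{const}}$ or similar). Alternatively, one can invoke a black-box polynomial-chaos concentration inequality (e.g.\ the Latała / Adamczak--Wolff moment bounds for Rademacher chaoses, or a Bonami--Beckner hypercontractivity bound combined with Markov on high moments) applied to $f - \bbE f$, with $\bbE[f^2] = \norm{v}^2$; then average the $m$ independent row-squares and apply Bernstein/McDiarmid (Theorem~\ref{thm Berns McD} or Corollary~\ref{cor:high-prob-mcdiarmid}) to get concentration of the mean around $\norm{v}^2$.

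The steps in order: (1) reduce $\norm{\bS\bnd{k}v}^2$ to $\tfrac1m\sum_{i=1}^m f_i^2$ with $f_i$ i.i.d.\ copies of the degree-$k$ Rademacher chaos $f$, and record $\bbE f_i^2 = \norm v^2$, $f_i^2 \le \norm v_1^2$; (2) establish a one-sided/​two-sided tail bound for a single $f_i^2$ around its mean, either by induction on $k$ via the half-split conditioning above (base case $k\le 2$ handled by Theorem~\ref{thm:two-binding}), or by citing a polynomial-chaos hypercontractive moment inequality — this is where the $C^{k\log k}$ constant enters; (3) combine the $m$ independent bounded-ish summands via a Bernstein-type inequality to show $\tfrac1m\sum_i f_i^2 = \norm v^2(1\pm\eps)$ once $m \gtrsim \eps^{-2}\log(1/\delta)$ times the degradation factor from step (2); (4) solve for $m$, substituting the step-(2) parameters to read off $m = O\big(\eps^{-2}C^{k\log k}\log^{k+1}(k\norm v_1/(\eps\delta))\big)$, with the inner $\log$ argument $k\norm v_1/(\eps\delta)$ coming from a net/union bound over the structure of the chaos together with the $k$ nested conditioning layers. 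The main obstacle is step (2): controlling the \emph{lower} tail and the dependency across the $A$'s simultaneously, i.e.\ getting a clean concentration statement for the squared degree-$k$ chaos with explicit, only-exponential-in-$k\log k$ constants; this is precisely the generalization of the delicate argument behind Theorem~\ref{thm:two-binding}, and the recursion's constant blow-up (one hypercontractive factor per halving of the degree) is what forces the $C^{k\log k}$ rather than $C^k$ dependence.
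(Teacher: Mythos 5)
Your overall architecture matches the paper's: write $\norm{\bS\bnd{k}v}^2=\frac1m\sum_{\ell\in[m]}f(X_\ell)^2$ where each $f(X_\ell)=\sum_{e}\prod_{j\in e}x_j$ is an independent degree-$k$ Rademacher chaos over the hyperedges $e\in\supp(v)$, prove a per-row tail bound of order $\sqrt{|E|}\,e^{O(k\log k)}\log^{k/2}(\cdot)$, then aggregate the $m$ independent squared rows with a sub-exponential/Bernstein-type inequality; this is exactly how the paper arrives at $m=O(\eps^{-2}k!^2 3^{2k}\log^{k+1}(k\norm{v}_1/(\eps\delta)))$ and hence the $C^{k\log k}$ constant.

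Two caveats. First, your primary mechanism for the per-row bound does not work as stated: splitting $[d]$ into two fixed halves does not yield $f=\sum_\ell g_\ell h_\ell$ with $\deg g_\ell\le\lceil k/2\rceil$ and $\deg h_\ell\le\lfloor k/2\rfloor$, because a hyperedge may lie entirely inside one block, so after conditioning on the first block the remaining polynomial can still have degree $k$ and the recursion does not halve the degree. The paper instead inducts by reducing the degree by \emph{one}: the bounded difference from flipping $x_i$ is the degree-$(k-1)$ chaos $\sum_{e\ni i}\prod_{j\in e\setminus\{i\}}x_j$, which is controlled with high probability by the inductive hypothesis, and the high-probability McDiarmid variant (Corollary~\ref{cor:high-prob-mcdiarmid}) is then applied (Lemma~\ref{lem:hyperedge-binding}), with the $k=2$ case as base. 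Your fallback of citing hypercontractive or Lata{\l}a-type chaos tail bounds would also deliver a per-row bound with the right $e^{O(k\log k)}$ constants, so this gap is repairable. Second, in the aggregation step you cannot use the crude bound $f(X_\ell)^2\le\norm{v}_1^2$ as the Bernstein range (that would force $m=\Omega(\norm{v}_1^2)$ and destroy the polylogarithmic dependence on $\norm{v}_1$); you must truncate each $f(X_\ell)$ at the level given by the per-row tail bound, apply the sub-exponential inequality to the truncated squares, and then account for the bias introduced by truncation. The paper does this bookkeeping explicitly via the clipped functions $\chf$, $\chF$, $\chF_c$ and Lemma~\ref{lem f clip} (mirroring the proof of Theorem~\ref{thm:two-binding}); your outline gestures at it with ``bounded-ish'' summands but does not supply the bias-correction step.
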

\noindent The key challenge of establishing the JL property for bundles of bindings is the fact that the columns of $\bS\bnd{k}$ are not independent; for instance, when $k = 2$, the columns corresponding to $i \otimes j$, $j \otimes k$, and $i \otimes k$ are dependent. 
To track these dependencies, we can represent $\bS\bnd{k} v$ as a hypergraph when $v \in \{0, 1\}^{\binom{d}{k}}$. 
Then, if we apply McDiarmid's inequality to the bundling of bindings, we can obtain the constants used in the bounded differences in terms of the degrees in the hypergraph. 

\paragraph{Roadmap for Proofs} 
In \S\ref{subsec mapi bundling}, we prove Theorem \ref{thm JL pairs} and discuss how the JL frame extends to the sparse JL and Subsampled Randomized Hadamard Transforms (SRHTs).
In \S\ref{subsec mapi rot}, we prove Theorems \ref{thm perm} and \ref{thm perm symbols}.
Finally, in \S\ref{subsec mapi bind}, we prove Theorem \ref{thm:two-binding} and Corollary \ref{cor:hyperedge-binding}. 

\subsection{Hopfield Nets and Hopfield$\pm$}\label{par HN}

In the early seventies, a simple model of associative memory, based on autocorrelation, was introduced in \cite{nakano_associatron-model_1972, kohonen_correlation_1972, anderson_simple_1972}.
In a paper that might be regarded as ushering in the ``second age of neural networks,'' \cite{hopfield_neural_1982} reformulated that model, and described a dynamic process for memory retrieval.

We analyze the \emph{capacity} of Hopfield's model, where a collection of vectors constituting the memories is stored.
(A survey of autoasssociative memories is given in \cite{gritsenko_neural_2017}.)
The capacity problem is to determine the maximum number of such vectors a network can effectively represent in a model described below, and it is assumed that the vectors to be stored are sign vectors.
In our setting, the memories are simply columns of a sign matrix~$S\in\{\pm 1\}^{m\times n}$.

Much of our description of the model paraphrases that of \cite{nakano_associatron-model_1972, kohonen_correlation_1972, anderson_simple_1972}.
The neural network in Hopfield's model comprises a recurrent collection of neurons with pairwise synaptic connections, with an associated weight matrix $W\in\bbR^{m\times m}$, such that given input $x\in\{\pm 1\}^m$, a dynamic process ensues with vectors $x[\ell]$, where $x[0] = x$, and $x[\ell+1] = \signge(W x[\ell])$.
Here, $\signge(z) = 1$ if $z\ge 0$, and $-1$ otherwise.
When we reach a fixed point, i.e. $x[\ell+1]=x[\ell]$, the process stops, and $x[\ell]$ is output.
If a vector $x$ is a fixed point of the network, the network is said to ``represent''~$x$.
Typically, the goal is to not only show the fixed-point property for some $x$, but also show that for $y\in\{0,\pm 1\}^m$ close to $x$ (i.e. $y^\top x$ is large), the network output is~$x$ given input~$y$, with high probability.
The capacity problem is to determine how many vectors are represented by a network, as a function of $m$ and a given lower bound on $y^\top x$.

The weight matrix $W$ in a Hopfield network is $SS^\top - nI_m$, the sum of the outer products of the input vectors with themselves, with the diagonal set to zero.
One motivation for this setup is that such a weight matrix can be learned by an appropriately connected neural network via a Hebbian learning process.

In Theorem~\ref{thm HN bind} below, we give a bound on $m$ and $y^\top S_{*j}$ so that $\signge((SS^\top-nI) y) = S_{*j}$ with bounded failure probability.
\begin{theorem}
\torestate{\label{thm HN bind}
Given matrix $S\in\{\pm 1\}^{m\times n}$ with uniform independent entries, $j\in [n]$, and $\delta\in (0,1]$.
If $y\in\{0, \pm 1\}^m$ with
$y^\top S_{*j}/\norm{y} \ge 2\sqrt{n\log(2m/\delta)}$, then with failure probability at most~$\delta$, $\signge((SS^\top - nI_m) y) = S_{*j}$.
Here it is assumed that the coordinates $i$ at which $y_i\ne S_{ij}$ are chosen before~$S$, or without knowledge of it.}
\end{theorem}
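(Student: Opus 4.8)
Abbreviate $W := SS^\top - nI_m$, $a := y^\top S_{*j}$, and $L := \log(2m/\delta)$, so the hypothesis reads $a \ge 2\norm{y}\sqrt{nL}$. Since $y\in\{0,\pm1\}^m$ we have $a \le \|y\|_1 = \norm{y}^2$, so the hypothesis already forces $\norm{y}\ge 2\sqrt{nL}$, which we will use to pin down constants. The first step is a clean algebraic expansion of $(Wy)_i$ that isolates the ``signal'' $a$ and exhibits the harmless cancellation of the $-nI_m$ term. Writing $u_k^{(i)} := \sum_{\ell\ne i}S_{\ell k}y_\ell$ (the $k$-th entry of $S^\top y$ with its $i$-th summand dropped) and using $S_{ik}^2=1$,
\[
(Wy)_i = \sum_k S_{ik}\big(S_{ik}y_i + u_k^{(i)}\big) - ny_i = \sum_k S_{ik}u_k^{(i)} = S_{ij}u_j^{(i)} + Z_i, \qquad Z_i := \sum_{k\ne j}S_{ik}u_k^{(i)}.
\]
Since $u_j^{(i)} = a - S_{ij}y_i$, this gives $(Wy)_i = S_{ij}a + (Z_i - y_i)$. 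As $a>0$, $S_{ij}\in\{\pm1\}$ and $|y_i|\le 1$, whenever $|Z_i| < a-1$ we get $|Z_i - y_i| < a$ and hence $\signge((Wy)_i) = \signge(S_{ij}a) = S_{ij}$. So it suffices to show $\Pr[\max_{i\in[m]}|Z_i|\ge a-1]\le\delta$.

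\emph{Controlling a single $Z_i$.} By hypothesis the corrupted coordinate set (and, we take it, the erase/flip decisions on it) is chosen obliviously to $S$, so $y$ depends on $S$ only through the column $S_{*j}$; conditioning on $S_{*j}$ therefore fixes $y$ and $a$ while leaving $\{S_{*k}\}_{k\ne j}$ i.i.d.\ uniform. Condition further on the row entries $(S_{ik})_{k\ne j}$. Then $Z_i = \sum_{\ell\ne i} y_\ell\,\gamma_\ell$ with $\gamma_\ell := \sum_{k\ne j}S_{ik}S_{\ell k}$; the $\gamma_\ell$ over $\ell\ne i$ are independent (disjoint free signs), and each is a sum of $n-1$ independent Rademachers, hence mean-zero and sub-Gaussian with variance proxy $n-1$. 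Thus $Z_i$ is sub-Gaussian with proxy $(n-1)\sum_{\ell\ne i}y_\ell^2\le(n-1)\norm{y}^2$, giving (after averaging out $(S_{ik})_{k\ne j}$, on which the bound does not depend)
\[
\Pr[\, |Z_i|\ge a-1 \mid S_{*j}\,]\ \le\ 2\exp\!\left(-\frac{(a-1)^2}{2(n-1)\norm{y}^2}\right).
\]
(When $n=1$ the sum $Z_i$ is empty and the event is vacuous, as $a>1$.)

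\emph{Closing the constants.} From $\norm{y}\ge 2\sqrt{nL}$ the hypothesis yields $a\ge 2\norm{y}\sqrt{nL}\ge 4nL$, hence $a-1\ge a\big(1-\tfrac{1}{4nL}\big)$, and using $(1-x)^2\ge 1-2x$,
\[
(a-1)^2 \ge a^2\big(1-\tfrac{1}{2nL}\big) \ge 4\norm{y}^2 nL - 2\norm{y}^2 \ge 2(n-1)\norm{y}^2 L,
\]
the last step being equivalent to $(n+1)L\ge 1$, true since $n\ge 1$ and $L\ge\log 2$. Plugging in, $\Pr[|Z_i|\ge a-1\mid S_{*j}]\le 2e^{-L}=\delta/m$ for each $i$; a union bound over the $m$ coordinates gives $\Pr[\max_i|Z_i|\ge a-1\mid S_{*j}]\le\delta$, and since this bound holds for every realization of $S_{*j}$ consistent with the hypothesis, it holds unconditionally, proving $\signge((SS^\top-nI_m)y) = S_{*j}$ with failure probability at most $\delta$.

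\emph{Main obstacle.} I expect the crux to be the second step. The noise $Z_i = \sum_{k\ne j}\sum_{\ell\ne i}y_\ell S_{ik}S_{\ell k}$ is a bilinear form in the entries of $S$, and a black-box quadratic-form concentration bound (Hanson--Wright) loses an absolute constant too large to match the sharp ``$2$'' in the hypothesis. The resolution is the two-stage conditioning --- first on $S_{*j}$, which also pins down $y$ (this is where the ``corruption chosen obliviously'' assumption is used), then on row $i$ --- which linearizes $Z_i$ into a weighted sum of \emph{independent} sub-Gaussians with the exact variance proxy $(n-1)\norm{y}^2$ and the optimal Gaussian-tail constant; it is precisely this sharpness, together with the derived inequality $\norm{y}\ge 2\sqrt{n\log(2m/\delta)}$, that makes the stated constant suffice.
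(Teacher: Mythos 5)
Your proposal is correct and follows essentially the same route as the paper's proof: the same decomposition of $(SS^\top-nI_m)y$ into the signal term $S_{ij}\,y^\top S_{*j}-y_i$ plus a noise term coming from the columns $k\ne j$, the same treatment of that noise as a weighted sum of independent signs (Hoeffding/Rademacher tail with variance proxy about $n\norm{y}_1$, using the obliviousness of the corrupted coordinates), and the same union bound over the $m$ coordinates with $\alpha=\log(2m/\delta)$. Your version is merely more explicit about the two-stage conditioning and about closing the constants via $a\le\norm{y}^2$, which the paper handles more tersely.
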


The $y^\top S_{*j}/\norm{y} = y^\top S_{*j}/\sqrt{\norm{y}_1}$ term may seem mysterious. 
$m-\norm{y}_1$ is the number of erasures in $y$, that is, the number of zero coordinates, and $m-|y^\top S_{*j}|$ is the number of erasures plus twice the number of error coordinates $i$ where $y_i\ne S_{ij}$.
When there are only erasures, and $y=S_{*j}$ except for errors, $\norm{y}_1 = y^\top S_{*j}$, and $\norm{y}_1 \ge 2n\log(2m/\delta)$ suffices.

In \S\ref{subsec Hop}, we discuss the implications of this result for $m$ in various cases.
If the coordinates of $S_{*j}$ are split into two blocks, one block can be retrieved given the other.
This is an alternative sometimes proposed for VSA cleanup, which otherwise involves multiple membership tests.
We also note in the section that the data of a MAP-I bundle of bindings is contained in the Hopfield network, and that robustness under erasures can be used to reduce the size of net.

We also introduce a variant of Hopfield nets, which we call \Hpm, in which the vector outer products yielding $W$ are each multiplied by a random $\pm 1$value before summing.\footnote{The name \Hpm\ references the Rademacher values, and is in homage to the many variants of X called X++.}
As described in Theorem~\ref{thm HN bundle} below, which establishes a JL-like property for \Hpm, this system can store and recover $m^2$ (up to log factors) vectors.
Considering that it uses $O(m^2)$ space, its storage efficiency for bundling is not far from that of MAP-I.

\begin{theorem}
\torestate{\label{thm HN bundle}
Given $\eps,\delta\in (0,1]$, scaled sign matrix $\bS\in\frac1{\sqrt{m}} \{\pm 1\}^{m\times d}$ with uniform independent entries, diagonal matrix $V\in\bbR^{d\times d}$, and diagonal matrix $D\in\{0,\pm 1\}^{d\times d}$ with uniform independent $\pm 1$ diagonal entries.
There is $m=O(\eps^{-1}\log(d/\delta)^2)$ such that with failure probability $\delta$,
$\norm{\bS VD\bS^\top}_F^2 = (1\pm\eps) \norm{V}_F^2$.}
\end{theorem}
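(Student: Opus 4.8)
The plan is to expand $\norm{\bS VD\bS^\top}_F^2$ as a sum over pairs of indices, separate the diagonal-of-$\bS^\top$-pairing ``main term'' from the off-diagonal ``cross term,'' and show the main term is concentrated around $\norm{V}_F^2$ while the cross term has mean zero (thanks to the extra random signs $D$) and concentrates at scale $\eps\norm{V}_F^2$. Writing $M \equiv \bS V D \bS^\top$, we have $M_{ab} = \sum_{k} \bS_{ak}\bS_{bk} V_{kk} D_{kk}$, so
\[
\norm{M}_F^2 = \sum_{a,b}\Bigl(\sum_k \bS_{ak}\bS_{bk}V_{kk}D_{kk}\Bigr)^2
 = \sum_{a,b}\sum_{k,\ell} \bS_{ak}\bS_{bk}\bS_{a\ell}\bS_{b\ell} V_{kk}V_{\ell\ell} D_{kk}D_{\ell\ell}.
\]
Summing over $a,b\in[m]$ first: $\sum_a \bS_{ak}\bS_{a\ell} = \tfrac1m\sum_a S_{ak}S_{a\ell}$, which equals $1$ when $k=\ell$ and is $\tfrac1m$ times a sum of $m$ independent Rademachers otherwise. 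So the $k=\ell$ contribution is exactly $\sum_k V_{kk}^2 D_{kk}^2 = \sum_k V_{kk}^2 = \norm{V}_F^2$ (using $D_{kk}^2=1$). The remaining ``cross'' contribution is $\sum_{k\neq\ell} V_{kk}V_{\ell\ell}D_{kk}D_{\ell\ell}\bigl(\tfrac1m\sum_a S_{ak}S_{a\ell}\bigr)\bigl(\tfrac1m\sum_b S_{bk}S_{b\ell}\bigr) = \sum_{k\neq\ell} V_{kk}V_{\ell\ell}D_{kk}D_{\ell\ell}\,Z_{k\ell}^2$, where $Z_{k\ell} \equiv \tfrac1m\sum_a S_{ak}S_{a\ell}$. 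Thus it suffices to show this cross term is at most $\eps\norm{V}_F^2$ with probability $1-\delta$.

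The key steps: (1) \emph{Condition on $\bS$.} Given $\bS$, the cross term is a Rademacher chaos of degree $2$ in the $D_{kk}$'s, $\sum_{k\neq\ell} c_{k\ell} D_{kk}D_{\ell\ell}$ with $c_{k\ell} = V_{kk}V_{\ell\ell}Z_{k\ell}^2$, which has mean zero. Apply a Hanson--Wright--type bound (the paper mentions Hanson--Wright is available) to get, conditionally on $\bS$, a tail of the form $\exp(-c\min\{t^2/\|C\|_F^2,\ t/\|C\|_{op}\})$ where $C$ is the matrix with entries $c_{k\ell}$ off-diagonal (zero on the diagonal). (2) \emph{Control $\|C\|_F^2$ and $\|C\|_{op}$ in terms of $\bS$.} We have $\|C\|_F^2 = \sum_{k\neq\ell}V_{kk}^2V_{\ell\ell}^2 Z_{k\ell}^4$ and $\|C\|_{op}\le \|C\|_F$. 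The crux is a bound on $Z_{k\ell}$: since $Z_{k\ell}=\tfrac1m\sum_a S_{ak}S_{a\ell}$ is an average of $m$ i.i.d.\ Rademachers, a Hoeffding/union bound gives $|Z_{k\ell}|\le \sqrt{(2/m)\log(d^2/\delta')}$ simultaneously for all $\binom d2$ pairs with probability $1-\delta'$; call this bound $\tau$, so $\tau^2 = O(\log(d/\delta')/m)$. On this event $\|C\|_F^2 \le \tau^4 \sum_{k\neq\ell}V_{kk}^2V_{\ell\ell}^2 \le \tau^4 \norm{V}_F^4$, hence $\|C\|_F \le \tau^2\norm{V}_F^2$. (3) \emph{Combine.} Plugging $t = \eps\norm{V}_F^2$ into the Hanson--Wright tail, the dominant regime is the sub-exponential one (since $\|C\|_{op}\le\|C\|_F\le\tau^2\norm{V}_F^2$ and $\eps\le 1$), giving failure probability $\le \exp\bigl(-c\,\eps\norm{V}_F^2/(\tau^2\norm{V}_F^2)\bigr) = \exp(-c\eps/\tau^2)$. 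For this to be $\le\delta$ we need $\tau^2 = O(\eps/\log(1/\delta))$, i.e.\ $\log(d/\delta')/m = O(\eps/\log(1/\delta))$, i.e.\ $m = \Omega(\eps^{-1}\log(d/\delta')\log(1/\delta))$. Taking $\delta'=\delta$ and absorbing both logs gives $m = O(\eps^{-1}\log^2(d/\delta))$, matching the claimed bound. Finally, union-bound the ``$Z$-control'' event and the conditional chaos event.

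I expect the main obstacle to be the bookkeeping in step (2)--(3): getting the $Z_{k\ell}$ deviation bound to interact cleanly with the Hanson--Wright parameters so that only \emph{one} factor of $\log$ comes from the $D$-chaos concentration and one from the union bound over pairs, rather than accumulating a third log factor (which would give $m=O(\eps^{-1}\log^3)$). A careful choice — using $\|C\|_{op}\le\|C\|_F$ so that the sub-exponential term controls things, and noting $\eps\le 1$ makes the sub-exponential branch dominate the sub-Gaussian branch — is what keeps it at $\log^2$. An alternative to Hanson--Wright in step (1) would be a direct second-moment plus Bernstein or McDiarmid argument on the $D$-chaos (the paper's Theorem~\ref{thm Berns McD} or Corollary~\ref{cor:high-prob-mcdiarmid}), treating each $D_{kk}$ as a bounded-difference coordinate; the bounded difference when flipping $D_{kk}$ is $2|V_{kk}|\sum_{\ell\neq k}|V_{\ell\ell}|Z_{k\ell}^2 \le 2|V_{kk}|\tau^2\norm{V}_1$, and one would then need $\norm{V}_1$-type quantities to behave, which is less clean than the operator/Frobenius-norm route — so Hanson--Wright is the preferred path.
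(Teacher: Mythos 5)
Your proposal is correct and follows essentially the same route as the paper's proof: your cross term $\sum_{k\neq\ell}V_{kk}V_{\ell\ell}D_{kk}D_{\ell\ell}Z_{k\ell}^2$ is exactly the paper's $b^\top V(E\circ E)Vb$ with $E=\bS^\top\bS-I$, and like the paper you bound the off-diagonal Gram entries uniformly via Hoeffding plus a union bound over the $O(d^2)$ pairs, then apply Hanson--Wright to the degree-two chaos in the diagonal of $D$ conditionally on $\bS$, arriving at the same $m=O(\eps^{-1}\log^2(d/\delta))$ bookkeeping.
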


\paragraph{Roadmap for Proofs}

Theorem \ref{thm HN bind} is proven in \S\ref{subsec Hop}. Theorem \ref{thm HN bundle} is discussed in \S\ref{subsec hpm}.

\subsection{Analysis of MAP-B}

For MAP-B, we write our VSA atomic vectors as $Se_i$, where $S$ is a random sign matrix (Def.~\ref{def sign matrix}). 
However, we require here that even composite (non-atomic) VSA vectors be sign vectors, so a bundle of atomic vectors is represented as $\sign(Sv)$ for a characteristic vector $v$. 

The nonlinearity of bundling makes analysis more difficult.
While we were able to reason about set intersection readily via the JL framework when analyzing MAP-I, our results for MAP-B only hold for testing set membership, which is a specific instance of set intersection.
We are able to show that membership testing, to determine if $i \in [d]$ is in $\supp(v)$, as represented by the bundle $x=\sign(Sv)$, can be done by checking if $x^\top S_{*i} \ge \tau$, with threshold $\tau$ specified below.
\begin{theorem}
    \torestate{\label{thm B bundle}
    For $v \in \{0, 1\}^{d}$,
    let $x=\sign(Sv)$ be the MAP-B bundling of $n=\norm{v}_1$ atomic vectors.
    Then for all $i\in[m]$ and $j\in\supp(v)$, $\Pr[x_iS_{ij} = +1]= 1/2 + \Theta(1/\sqrt{n})$, as $n\rightarrow\infty$, and there is $m=O(n\log(d/\delta))$ such that with failure probability $\delta$,
    $j\in [d]$ has $j\in\supp(v)$ if and only if $x^\top S_{*j} = e_j^\top S\sign(Sv)$ has $x^\top S_{*j} \ge \sqrt{2m\log(2d/\delta)}$.}
\end{theorem}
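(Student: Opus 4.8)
The plan is to reduce everything to a one-coordinate computation plus a concentration bound. Write the test statistic as $x^\top S_{*j} = \sum_{i=1}^m x_i S_{ij}$. For a fixed column $j$, each summand $x_i S_{ij}$ depends only on the $i$-th row of $S$ (and on independent coin flips used for the tie-breaking in $\sign(\cdot)$), so the summands are independent across $i$, identically distributed, and bounded in $[-1,1]$. Hence, once I know $\mu_j := \bbE[x_i S_{ij}]$, the statistic concentrates around $m\mu_j$ by a Hoeffding/Chernoff bound (a special case of McDiarmid, Theorem~\ref{thm:mcdiarmid}), and the whole theorem follows by choosing the threshold and $m$ appropriately and union-bounding over $j$.

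Next I would compute $\mu_j$ in the two cases. If $j\notin\supp(v)$, then $(Sv)_i=\sum_{k\in\supp(v)}S_{ik}$ does not involve column $j$, so $x_i=\sign((Sv)_i)$ is independent of $S_{ij}$ and $\mu_j=\bbE[x_i]\,\bbE[S_{ij}]=0$. If $j\in\supp(v)$, I condition on $S_{ij}$ and set $T=\sum_{k\in\supp(v),\,k\ne j}S_{ik}$, a sum of $n-1$ independent Rademacher variables independent of $S_{ij}$; applying the global sign-flip symmetry $(S_{ik})_k\mapsto(-S_{ik})_k$ shows $\Pr[x_iS_{ij}=+1]=\Pr[\sign(T+1)=+1]$. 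For integer-valued $T$ this equals $\Pr[T\ge 0]+\tfrac12\Pr[T=-1]$, which by the symmetry of $T$ about $0$ simplifies to $\tfrac12+\tfrac12\bigl(\Pr[T=0]+\Pr[T=1]\bigr)$. Exactly one of $\Pr[T=0],\Pr[T=1]$ is nonzero, depending on the parity of $n-1$, and in either case it equals the near-central binomial probability $\binom{n-1}{\lfloor(n-1)/2\rfloor}2^{-(n-1)}$, which is $\Theta(1/\sqrt n)$ by Stirling and, by the standard lower bound on central binomial coefficients, is at least $1/\sqrt{2n}$ for every $n\ge 1$. This proves the first assertion, with $\mu_j = \Pr[T=0]+\Pr[T=1] = \Theta(1/\sqrt n)$.

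Finally I would set the threshold $\tau=\sqrt{2m\log(2d/\delta)}$ and apply Hoeffding's inequality in both cases. For $j\notin\supp(v)$, $\Pr[x^\top S_{*j}\ge\tau]\le 2e^{-\tau^2/(2m)}=\delta/d$. For $j\in\supp(v)$, as long as $m$ is large enough that $m\mu_j\ge 2\tau$, $\Pr[x^\top S_{*j}<\tau]\le\Pr[\,x^\top S_{*j}-m\mu_j\le -\tau\,]\le e^{-\tau^2/(2m)}\le\delta/d$. Using $\mu_j\ge 1/\sqrt{2n}$, the condition $m\mu_j\ge 2\tau$ becomes $m\ge 16\,n\log(2d/\delta)$, i.e. $m=O(n\log(d/\delta))$. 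A union bound over the $d$ choices of $j$ gives total failure probability at most $\delta$, which is the claimed equivalence.

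The main obstacle is the exact evaluation of $\mu_j$ for $j\in\supp(v)$: one must treat the randomized tie-breaking convention of $\sign(\cdot)$ carefully (it is what contributes the $\tfrac12\Pr[T=\pm1]$ term), handle the parity of $n$ so that the right central-type binomial coefficient appears, and then extract both the sharp $\Theta(1/\sqrt n)$ asymptotics and a clean lower bound valid for all $n$ from Stirling's formula. Everything after that — independence across rows and the Hoeffding tail bound with the stated threshold — is routine.
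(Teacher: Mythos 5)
Your proposal is correct and follows essentially the same route as the paper: compute the per-coordinate bias $\Pr[x_iS_{ij}=+1]-1/2=\Theta(1/\sqrt n)$ via near-central binomial probabilities (handling the parity and the randomized tie-break, exactly as the paper does with its $b=0$ versus $b=\pm1$ case analysis), then exploit independence across rows, apply Hoeffding with the threshold $\sqrt{2m\log(2d/\delta)}$ in both the member and non-member cases, and union bound over the $d$ columns to get $m=O(n\log(d/\delta))$. The only differences are cosmetic (your unified identity $\Pr[\sign(T+1)=+1]=\tfrac12+\tfrac12(\Pr[T=0]+\Pr[T=1])$ and the all-$n$ lower bound $1/\sqrt{2n}$ versus the paper's asymptotic $1/\sqrt{7n}$ constant).
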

\noindent The proof borrows (simple) ideas from Boolean Fourier analysis, namely the analysis of the majority function.
See \cite{o2014analysis} for a thorough guide on this topic.
In \S\ref{subsubsec empty intersection}, we also provide a simple extension of Theorem \ref{thm B bundle} to testing for whether the set intersection between two sets $X$, $Y$ is empty or not. 

We also remark that the bundling operator for MAP-B is \emph{not} associative; while $\sign(Sv)$ is one way that MAP-B might compute a bundle, in a more general setting, there could be a sequence of bundling operations.
This would result in a tree of bundle operations of some depth greater than one, in contrast to $\sign(Sv)$, whose tree has depth one.
We prove that the reliability of the bundling operation decays exponentially with the depth.
\begin{lemma} 
\torestate{\label{lem tree depth}
Given independent sign vectors $x^{(i)}\in\{\pm 1\}^m, i\in [r]$, construct vector~$x$ by setting $x \gets x^{(1)}$, and for $j=2,\ldots, r$, setting $x\gets \sign(x + x^{(j)})$.
Then,  for $\ell\in [m]$, \[\Pr[x^{(1)}_\ell x_\ell  = 1] = 1/2 + 1/2^r\]}
\end{lemma}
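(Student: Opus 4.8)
The plan is to track a single coordinate $\ell$ and set up a one-step recurrence for the probability that the running bundle still agrees with $x^{(1)}$ at that coordinate. Fix $\ell\in[m]$ and write $y^{(1)} = x^{(1)}$ and $y^{(j)} = \sign(y^{(j-1)} + x^{(j)})$ for $2\le j\le r$, so that $x = y^{(r)}$; set $p_j := \Pr[x^{(1)}_\ell y^{(j)}_\ell = 1]$. Clearly $p_1 = 1$, since $y^{(1)} = x^{(1)}$.

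For the inductive step I would condition on the sign of $x^{(j)}_\ell$ relative to $y^{(j-1)}_\ell$. Because $x^{(j)}$ is independent of everything constructed so far (namely $y^{(j-1)}$ and the earlier tie-breaking coins) and $x^{(j)}_\ell$ is uniform on $\{\pm 1\}$, we have $x^{(j)}_\ell = y^{(j-1)}_\ell$ with probability exactly $1/2$. On that event $y^{(j-1)}_\ell + x^{(j)}_\ell = \pm 2$, so $y^{(j)}_\ell = y^{(j-1)}_\ell$ and hence $x^{(1)}_\ell y^{(j)}_\ell = x^{(1)}_\ell y^{(j-1)}_\ell$. On the complementary event $y^{(j-1)}_\ell + x^{(j)}_\ell = 0$, so $y^{(j)}_\ell = \sign(0)$ is a fresh uniform $\pm 1$, independent of $x^{(1)}_\ell$, whence $\Pr[x^{(1)}_\ell y^{(j)}_\ell = 1] = 1/2$. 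Combining the two cases yields the recurrence $p_j = \tfrac12 p_{j-1} + \tfrac12\cdot\tfrac12$.

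It then remains to solve $p_j = \tfrac12 p_{j-1} + \tfrac14$ with $p_1 = 1$, which I would do by a quick induction verifying $p_j = \tfrac12 + 2^{-j}$: the base case is immediate, and $\tfrac12(\tfrac12 + 2^{-(j-1)}) + \tfrac14 = \tfrac12 + 2^{-j}$. Setting $j = r$ gives $\Pr[x^{(1)}_\ell x_\ell = 1] = \tfrac12 + 2^{-r}$, as claimed. There is no genuine obstacle here beyond careful bookkeeping of independence; the one point I would state explicitly is the handling of the tie $\sign(0)$ — the lemma's convention supplies an independent fair coin at each such tie, and this is precisely what makes the ``disagree'' branch contribute a clean $1/2$ and decouples $y^{(j)}_\ell$ from $x^{(1)}_\ell$.
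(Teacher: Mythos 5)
Your proof is correct and follows essentially the same route as the paper's: a single-coordinate induction/recurrence over the bundling steps, using the independence of $x^{(j)}_\ell$ from the current state and the fresh fair coin at the tie $\sign(0)$ to get $p_j = \tfrac12 p_{j-1} + \tfrac14$, hence $p_r = \tfrac12 + 2^{-r}$. The paper phrases the same step as a four-case analysis after multiplying through by $x^{(1)}_\ell$ via $b\,\sign(c+e)=\sign(bc+be)$, but the underlying argument is identical.
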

\noindent As a result, great care must be exercised in defining MAP-B bundles for application use.  

\paragraph{Rotations and Bundles of Bindings}

We analyze rotations in the same setting described in Definition \ref{def PRL} and Theorem \ref{thm perm}, and bundles of bindings in the setting of Definition \ref{def bund bind} and Theorem \ref{thm:two-binding}.

Our results for MAP-B rotations and bundles of bindings are more limited than what we are able to prove for MAP-I, again due to the nonlinearities present in the binding operation. 
As noted before, we only consider single element membership testing. 
For sequences, this means checking if a single element $i$ is present in the $j$-th set in a sequence. 
\begin{theorem}
    \torestate{\label{thm perm symbols map-b}
    Given a sign matrix $S\in\reals^{m\times d}$ and rotation matrix $R\in\reals^{m\times m}$, and integer length $L$.
    Recall (Def.~\ref{def PRL}) that $S_{R,L} \equiv [S\;RS\;R^2S\;\ldots R^{L-1}S ]$.
    For a sequence of $d$-vectors $v_{(0)}, v_{(1)},\ldots v_{(L-1)}\in\{0,1\}^d$,
    let $v\equiv [v_{(0)}\; v_{(1)}\; \ldots v_{(L-1)}]$, let $x = \sign(S_{R,L} v)$, and let $n=\norm{v}_1$.
    Then there is $m=O(Ln\log(Ld/\delta))$ such that with failure probability at most $\delta$,
    $j\in [Ld]$ has $j\in\supp(v)$ if and only if $x^\top S_{*,j_\modd} \ge 2 \sqrt{m\log(Ld/\delta)}$,
    where $j_\modd \equiv 1 + (j-1)\mod d$.}
\end{theorem}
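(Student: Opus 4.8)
The plan is to view $x=\sign(S_{R,L}v)$ as an ordinary MAP-B bundle of the $n=\norm{v}_1$ atomic columns of $S_{R,L}$ indexed by $\supp(v)\subseteq[Ld]$, and to replay the membership-test argument of Theorem~\ref{thm B bundle}, correlating $x$ with the atomic vector for the queried index $j$ — i.e.\ the $j$-th column of $S_{R,L}$, which is $S_{*,j_\modd}$ up to the appropriate rotation. Writing $j=\ell^* d+i^*$ with $i^*=j_\modd$ and taking $R$ to be the coordinate shift (so the natural regime is $L\le m$), that column is $R^{\ell^*}S_{*,i^*}$, with $(R^{\ell^*}S_{*,i^*})_t=S_{t+\ell^*,\,i^*}$ (row indices mod $m$), while $(S_{R,L}v)_t=\sum_{\ell=0}^{L-1}\sum_{s\in\supp(v_{(\ell)})}S_{t+\ell,\,s}$. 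The key structural point is that for a \emph{fixed} coordinate $t$ the (at most $n$) seed entries $S_{t+\ell,s}$ appearing here are \emph{distinct}, since $(\ell,s)\mapsto(t+\ell\bmod m,\,s)$ is injective on $\{0,\dots,L-1\}\times[d]$ when $L\le m$. Hence the per-coordinate picture is exactly that of Theorem~\ref{thm B bundle}: if $j\in\supp(v)$, i.e.\ $i^*\in\supp(v_{(\ell^*)})$, then $x_t=\sign(c_t+Z_t)$ with $c_t:=(R^{\ell^*}S_{*,i^*})_t$ itself one of the $n$ summands and $Z_t$ a sum of $n-1$ independent Rademachers independent of $c_t$, so $\Pr[x_tc_t=1]=\tfrac12+\Theta(1/\sqrt n)$; and if $j\notin\supp(v)$ then $c_t$ is neither a summand nor equal to any of them, hence independent of $x_t$, so $\bbE[x_tc_t]=0$. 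Summing over $t\in[m]$ gives $\bbE[x^\top(S_{R,L})_{*,j}]=\Theta(m/\sqrt n)$ in the positive case and $0$ in the negative case.

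What is genuinely new is the concentration of $x^\top(S_{R,L})_{*,j}=\sum_{t\in[m]}x_tc_t$. In plain MAP-B each $x_t(S_{*,j})_t$ depends only on row $t$ of $S$, so the summands are independent and Hoeffding applies directly (this is what is behind the threshold in Theorem~\ref{thm B bundle}). Here the rotations make each $x_tc_t$ depend on the block of rows $t,t+1,\dots,t+L-1$ of $S$, so the summands have a dependency graph on $[m]$ in which $t\sim t'$ only when $|t-t'|\le L-1$ on $\bbZ/m\bbZ$ — a subgraph of $C_m^{\,L-1}$ (the $(L-1)$-th power of the cycle), of maximum degree $O(L)$. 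I would properly color $[m]$ with $O(L)$ colors, each class an independent family of $\pm1$ variables, apply Hoeffding within each class, and union-bound over classes; allocating the deviation budget proportionally to class sizes gives
\[
\Pr\!\left[\bigl|x^\top(S_{R,L})_{*,j}-\bbE[x^\top(S_{R,L})_{*,j}]\bigr|>s\right]\le O(L)\exp\!\bigl(-\Omega(s^2/(Lm))\bigr),
\]
so the fluctuation is $O\!\left(\sqrt{Lm\log(Ld/\delta)}\right)$ with high probability. This is in the same spirit as the rotation analyses for MAP-I (Theorems~\ref{thm perm}, \ref{thm perm symbols}, via Corollary~\ref{cor:high-prob-mcdiarmid}), and the $\sqrt{L}$ loss here is precisely what forces the extra factor $L$ in the dimension relative to plain bundling.

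To conclude, union-bound over all $Ld$ candidate indices $j$ with per-query failure budget $\delta/(Ld)$, and choose $m=\Theta(Ln\log(Ld/\delta))$ with a large enough constant so that the positive-case signal $\Theta(m/\sqrt n)$ exceeds, say, twice the noise $O(\sqrt{Lm\log(Ld/\delta)})$. A threshold $\tau$ of order $\sqrt{Lm\log(Ld/\delta)}$ (the naive $\sqrt{m\log(Ld/\delta)}$ choice suffices once $L=O(1)$; the cross-coordinate dependence costs the extra $\sqrt{L}$) then sits strictly between the negative regime (near $0$, up to the noise) and the positive signal, giving $x^\top(S_{R,L})_{*,j}\ge\tau$ iff $j\in\supp(v)$, with overall failure probability $\delta$. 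The main obstacle, and essentially the only step beyond re-running Theorem~\ref{thm B bundle}'s majority-function computation and a union bound, is controlling this cross-coordinate dependence tightly; the dependency-graph coloring is the tool I would use, and squeezing out the best $L$-dependence is where the care goes.
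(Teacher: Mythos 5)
Your proposal is correct and follows essentially the same route as the paper: reduce each coordinate to the Theorem~\ref{thm B bundle} majority computation, note that the dependence introduced by the rotations spans only $L$ rows, split $[m]$ into $O(L)$ classes of mutually independent rows (your proper coloring is exactly the paper's partition into blocks of rows spaced $L$ apart), apply Hoeffding within each class, and union-bound over classes and over the $Ld$ queries to get $m=O(Ln\log(Ld/\delta))$. Your two divergences from the literal statement --- correlating $x$ against the rotated column $(S_{R,L})_{*,j}$ rather than $S_{*,j_\modd}$, and a threshold of order $\sqrt{Lm\log(Ld/\delta)}$ rather than $2\sqrt{m\log(Ld/\delta)}$ --- match what the paper's own block decomposition actually delivers (its per-block thresholds likewise sum to order $\sqrt{Lm\log(\cdot)}$), so they do not constitute a gap in your argument.
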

\noindent The proof of Theorem \ref{thm perm symbols} relies on the MAP-B membership result (Theorem \ref{thm B bundle}) and the same trick of partitioning the rows of $S_{R, L}$ used to prove Theorem \ref{thm perm}.

We will analyze MAP-B bundles of bindings in the restricted setting of key-value pairs.
We can test whether a single $\text{key} \otimes \text{value}$ belongs to such a bundle. 

\begin{definition}\label{def kv pairs}
    A bundle of \emph{key-value pairs} is here a bundling of bindings $S\bnd{2}v$, where $S\bnd{2}_{*j}$ for $j\in\supp(v)$ is $S_{*,q}\circ S_{*,w}$ with $q\in \cQ, w\in \cW$, where $\cQ,\cW\subset [d], \cQ\cap \cW = \{\}$, and a given $q$ appears in only one binding in such a representation.
    That is, each $S_{*,q}$ is bound to only one $S_{*,w}$, noting that two $q,q'$ may bind to the same~$S_{*,w}$.
\end{definition}

\begin{theorem}
\torestate{\label{thm B bind}
Let $v\in \{0,1\}^{\binom{d}{2}}$ be such that $x=\sign(S\bnd{2}v)$ is a bundle of key-value pairs, as in Def.~\ref{def kv pairs}.
Let $n=\norm{v}_1$.
Then there is $m=O(n\log(d/\delta))$ such that with failure probability at most $\delta$, $j\in [\binom{d}{k}]$ has $j\in\supp(v)$ if and only if $x^\top S\bnd{2}_{*j} \ge 2 \sqrt{m \log(d/\delta)}$.}
\end{theorem}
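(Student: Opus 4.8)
The plan is to reduce Theorem~\ref{thm B bind} to the membership analysis already carried out for MAP-B bundling (Theorem~\ref{thm B bundle}), by observing that the columns of $S\bnd{2}$ relevant to a bundle of key-value pairs behave, for the purposes of a single membership test, like the columns of an ordinary sign matrix. Concretely, fix the target index $j\in\supp(v)$, corresponding to the binding $S_{*,q}\circ S_{*,w}$ with $q\in\cQ$, $w\in\cW$. The key observation is that $x^\top S\bnd{2}_{*j} = \sum_i x_i (S\bnd{2})_{ij}$, where $x=\sign(S\bnd{2}v)$ and $(S\bnd{2})_{ij}=S_{iq}S_{iw}$. I would multiply through coordinate-wise: since $x_i\in\{\pm1\}$ and $S_{iq},S_{iw}\in\{\pm1\}$, the quantity $x^\top S\bnd{2}_{*j}$ is controlled by the ``correlation'' between $\sign(S\bnd{2}v)$ and the column $S_{*,q}\circ S_{*,w}$.

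First I would set up the right change of variables. For a bundle of key-value pairs, by Def.~\ref{def kv pairs} each active column of $S\bnd{2}$ is $S_{*,q_t}\circ S_{*,w_t}$ for distinct $q_t\in\cQ$, and the $q_t$ are all distinct (each key binds to exactly one value). Condition on (or fix) the sign vectors $S_{*,w}$ for all values $w\in\cW$ appearing in the bundle. Then for each active column $t$, the vector $S_{*,q_t}\circ S_{*,w_t}$ is an elementwise product of an independent uniform sign vector $S_{*,q_t}$ with a fixed sign vector $S_{*,w_t}$; because multiplying a uniform $\{\pm1\}$ variable by a fixed $\pm1$ sign leaves it uniform, the collection $\{S_{*,q_t}\circ S_{*,w_t}\}_t$ is a family of \emph{independent} uniform sign vectors. (Here I need that the $q_t$'s are distinct so the $S_{*,q_t}$'s are genuinely independent across $t$; distinctness of the values $w_t$ is not needed, only that each key appears once.) Hence, conditionally on the $S_{*,w}$'s, the object $S\bnd{2}v = \sum_t (S_{*,q_t}\circ S_{*,w_t})$ has exactly the same distribution as $S'v'$ for an ordinary sign matrix $S'$ and characteristic vector $v'$ with $\norm{v'}_1 = \norm{v}_1 = n$.

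With that reduction in hand, the second step is to apply Theorem~\ref{thm B bundle} verbatim to $S'$, $v'$. For the target index $j$ with column $S_{*,q}\circ S_{*,w}=S'_{*,t^\star}$ where $t^\star$ is the corresponding index in the reduced problem, Theorem~\ref{thm B bundle} gives: there is $m=O(n\log(d/\delta))$ such that with failure probability $\delta$, $t^\star\in\supp(v')$ iff $x^\top S'_{*,t^\star}\ge \sqrt{2m\log(2d/\delta)}$, and more generally the same threshold test correctly decides membership for every candidate index simultaneously (the union bound in Theorem~\ref{thm B bundle} runs over $d$ atomic columns; here it runs over the $\binom{d}{2}$ candidate bindings, which only changes $\log d$ to $\log\binom{d}{2}=O(\log d)$, absorbed into the constant, and the threshold $2\sqrt{m\log(d/\delta)}$ stated in the theorem is a safe upper bound for $\sqrt{2m\log(2d/\delta)}$ up to constants). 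Since this conditional statement holds for every fixed choice of the value sign vectors $\{S_{*,w}\}$ with the same $m$ and the same failure probability, it holds unconditionally by averaging. A small bookkeeping point: for candidate indices $j\notin\supp(v)$ I must confirm the column $S\bnd{2}_{*j}$ is still, conditionally, an independent sign vector (or at least that $x^\top S\bnd{2}_{*j}$ concentrates below the threshold); this follows because such a column $S_{*,a}\circ S_{*,b}$ involves at least one index not among the active keys, keeping it independent of $x$ — the one case needing a remark is when $\{a,b\}\subseteq\cQ\cup\cW$ reuses active indices, and I would handle this exactly as the ``not in support'' case of Theorem~\ref{thm B bundle}, where the relevant dot product is a sum of $m$ mean-zero bounded terms.

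The main obstacle is making the conditioning/independence argument airtight: one has to be careful that fixing the value vectors $S_{*,w}$ really does make the active columns independent and uniform, and that this simultaneously handles the ``$j\notin\supp(v)$'' columns that may partially overlap the active key/value indices. The disjointness hypothesis $\cQ\cap\cW=\emptyset$ and the ``each key used once'' hypothesis in Def.~\ref{def kv pairs} are exactly what is needed to push this through, which is presumably why the theorem is stated only in that restricted setting; the rest is a direct invocation of Theorem~\ref{thm B bundle}.
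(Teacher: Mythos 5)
Your proposal is correct and follows essentially the same route as the paper: the paper's proof likewise observes that, because each key column appears in exactly one binding (and keys are disjoint from values), the bound columns $S_{*,q(j)}\circ S_{*,w(j)}$ are independent sign vectors regardless of how values are shared, so the membership analysis of Theorem~\ref{thm B bundle} applies verbatim to $x^\top S\bnd{2}_{*j}$ for members and non-members alike. Your conditioning-on-the-value-columns argument is just a more explicit justification of that same independence reduction.
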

\noindent The proof of this theorem again leverages Theorem \ref{thm B bundle}.

\paragraph{Roadmap for Proofs} 
In \S\ref{subsec mapb bund}, we prove Theorem~\ref{thm B bundle}. 
In \S\ref{subsubsec B depth}, we prove Lemma~\ref{lem tree depth}.
Finally the proofs of Theorems \ref{thm perm symbols map-b} and \ref{thm B bind} can be found in \S\ref{subsec mapb rot} and \S\ref{subsec mapb bind}, respectively.

\subsection{Sparse Binary Bundling and Bloom Filter Analysis}

If we use VSA vectors $Bx$ with $B$ chosen as a sparse binary matrix (defined below), we obtain two new families of VSAs, which we refer to as \emph{Bloom filters} and \emph{Counting Bloom filters}, from their names in the data structures literature.

\begin{definition}\label{def sparse binary}
    For given $m$ and $k\le m$, a \emph{$k$-sparse binary vector} $y\in \{0,1\}^m$ has $\norm{y}_0\le k$, chosen by adding $k$ random natural basis vectors $e_i\in \{0,1\}^m,$ with each $i$ chosen independently and uniformly from $[m]$.
    A \emph{$k$-sparse binary matrix} $B\in \{0,1\}^{m\times d}$ has columns that are independent sparse binary vectors, and a \emph{scaled} $k$-sparse binary matrix has the form $\bB=\frac1k B$, where $B$ is a sparse binary matrix.
    These may be called just \emph{sparse}, leaving the $k$ implicit.
\end{definition}

The Bloom and Counting Bloom filters slightly generalize BSDC-CDT, described in \cite{schlegel2022comparison}. 
For more on Bloom filters \cite{bloom_spacetime_1970}, see e.g. \cite{broder_network_2004}.
A Bloom filter represents the set $\supp(v)$ for $v\in\{0,1\}^d$ as $1\wedge Bv$, recalling that $1\wedge Bv$ denotes the vector $x$ with $x_i=\min\{1, (Bv)_i\}$.

We give VSA dimension bounds enabling reliable estimation of set intersections; a rigorous version of such an estimator appears to be novel for Bloom filters.
We define a function $h_{m, k}(\cdot)$ that maps the expected dot product of the VSA bundles to the dot product of the original vectors. 
\begin{theorem}
\torestate{\label{thm Bloom bund}
Let $v,w\in\{0,1\}^d$ represent sets $X, Y$ respectively, and define the following counts:
\[
n \equiv |X \cap Y| = v\cdot w, \; \; \; \; \; \;  n_v \equiv |X \setminus Y| = \norm{v}_0 - n, \; \; \; \; \; \; n_w \equiv |Y \setminus X| = \norm{w}_0 - n
\]
Let $x = 1\wedge Bv$ and $y=1\wedge Bw$, where $B\in\{0,1\}^{m\times d}$ is a sparse binary matrix.

Assume WLOG $n_w\ge n_v$.
Then for $\delta\in (0,1)$ and $\eps > 0$, there are $k=O(\eps^{-1}\log(1/\delta))$ 
 and $m=O(k\eps^{-1}(n_v n_w + n^2 + \eps(n+n_w)))$ such that $h_{m,k}(x \cdot y) = n \pm \epsilon$ with failure probability $\delta$. Here, $h_{m, k}(z)$ (see Lemma~\ref{lem pn}) maps the Bloom filter output $1 \wedge Bz$ to an estimate of $\norm{z}_0$.}
\end{theorem}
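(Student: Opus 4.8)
The plan is to analyze $B$ through its underlying hashing model: each element of $X\cup Y$ is hashed to $k$ independent uniformly random coordinates of $[m]$, $x_j=1$ iff some element of $X$ hashes to $j$ (and likewise $y_j$ for $Y$), and $x\cdot y=\sum_j x_jy_j$ is the popcount of the bitwise AND $x\wedge y$ of the two filters, which is the vector that $h_{m,k}$ is really applied to (slightly abusing the vector-valued convention of Lemma~\ref{lem pn}). I would partition $X\cup Y$ into $X\cap Y$, $X\setminus Y$, $Y\setminus X$, of sizes $n,n_v,n_w$, and write $q\equiv(1-1/m)^k$ for the probability that one element misses one coordinate. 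Since elements of the three parts hash independently, for each $j$ the events ``$X\cap Y$ hits $j$'', ``$X\setminus Y$ hits $j$'', ``$Y\setminus X$ hits $j$'' are independent, and a short computation gives
\[
\bbE[x\cdot y]=m(1-q^n)+\beta,\qquad \beta\equiv mq^n(1-q^{n_v})(1-q^{n_w}).
\]
Here $m(1-q^n)$ is exactly the expected popcount of the Bloom filter of the true intersection $X\cap Y$, while $\beta$ is the ``false-positive'' inflation from coordinates hit by one element of $X\setminus Y$ and one of $Y\setminus X$; Bernoulli's inequality $1-q^t\le kt/m$ gives $\beta\le k^2n_vn_w/m$.

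Next I would record two facts about the estimator $h_{m,k}$. By Lemma~\ref{lem pn} it inverts $t\mapsto m(1-q^t)$, so $h_{m,k}(m(1-q^n))=n$ exactly; and since $-\ln(1-1/m)\ge 1/m$, its derivative obeys $h_{m,k}'(s)\le\frac1{k(1-s/m)}$, hence $h_{m,k}$ is $O(1/k)$-Lipschitz on $[0,m(1-c)]$ for any constant $c>0$. The assumption $n_w\ge n_v$ together with the requirement $m=\Omega(k(n+n_w))$ is exactly what places us in such a range: then $|X|,|Y|\le n+n_w$, so $q^{|X|},q^{|Y|}=\Omega(1)$, and $\bbE[x\cdot y]\le\bbE[\norm{x}_0]=m(1-q^{|X|})\le m(1-\Omega(1))$; combined with the concentration bound below, $x\cdot y$ itself stays in $[0,m(1-\Omega(1))]$ except with probability $\delta$, so $h_{m,k}(x\cdot y)$ is well defined and we are on the $O(1/k)$-Lipschitz portion of $h_{m,k}$.

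The technical heart is showing that $x\cdot y$ concentrates around $\bbE[x\cdot y]$ at the right scale. I would decompose $x\cdot y=s_\cap+\nu$, where $s_\cap$ counts coordinates hit by $X\cap Y$ and $\nu$ counts the extra coordinates hit by both $X\setminus Y$ and $Y\setminus X$, and compute the pairwise covariances of the coordinate indicators. The key point is that the indicators are \emph{negatively} correlated (by the usual ``two hash draws cannot land on the same coordinate'' effect), so that the $\Theta(\bbE[x\cdot y])$-sized contributions to the variance cancel, leaving
\[
\Var[x\cdot y]=O\!\left(\tfrac{k^2(n^2+n_vn_w)}{m}\right)\ (+\ \text{lower-order terms}),
\]
far below the naive (independence) bound $\Var[x\cdot y]\le\bbE[x\cdot y]$. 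Plain McDiarmid is blind to this cancellation and would force $k$ and $m$ to blow up; instead I would invoke the Bernstein-type bounded-differences inequality, Theorem~\ref{thm Berns McD}, applied so that its variance parameter tracks the true $\Var[x\cdot y]$ — e.g.\ on the Doob martingale that reveals one hash draw at a time, whose per-step differences are bounded by $1$ and whose predictable quadratic variation has expectation $\Var[x\cdot y]$ (and concentrates, being a simple functional of the number of occupied coordinates). This yields, with failure probability $\delta$,
\[
|x\cdot y-\bbE[x\cdot y]|=O\!\left(\sqrt{\tfrac{k^2(n^2+n_vn_w)\log(1/\delta)}{m}}+\log(1/\delta)\right).
\]

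Finally I would combine: on the good event $|h_{m,k}(x\cdot y)-n|\le(\text{Lipschitz const})\cdot(\beta+|x\cdot y-\bbE[x\cdot y]|)$, so up to constants and a union bound the error is
\[
\tfrac1k\!\left(\tfrac{k^2n_vn_w}{m}+\sqrt{\tfrac{k^2(n^2+n_vn_w)\log(1/\delta)}{m}}+\log(1/\delta)\right)=\tfrac{kn_vn_w}{m}+\sqrt{\tfrac{(n^2+n_vn_w)\log(1/\delta)}{m}}+\tfrac{\log(1/\delta)}{k}.
\]
Demanding each term be $O(\eps)$ reads off the statement: $\tfrac{\log(1/\delta)}{k}=O(\eps)$ forces $k=O(\eps^{-1}\log(1/\delta))$; then $\tfrac{kn_vn_w}{m}=O(\eps)$ and $\tfrac{(n^2+n_vn_w)\log(1/\delta)}{m}=O(\eps^2)$ (using this $k$), together with the no-saturation condition $m=\Omega(k(n+n_w))$, give $m=O(k\eps^{-1}(n_vn_w+n^2+\eps(n+n_w)))$. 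I expect the main obstacle to be the variance bound feeding the Bernstein step: one must push through the covariance computation carefully enough to see the $\Theta(\bbE[x\cdot y])$-sized contributions cancel and land on $O(k^2(n^2+n_vn_w)/m)$, and then apply a concentration inequality that is genuinely sensitive to a variance this much smaller than the mean — this cancellation is precisely what makes the modest $k$ and $m$ in the statement achievable.
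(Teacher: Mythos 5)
Your proposal is correct and follows essentially the same route as the paper: the same decomposition of $x\cdot y$ into the contribution of $X\cap Y$ and the extra collisions between $X\setminus Y$ and $Y\setminus X$, the same bias bound $O(k^2 n_vn_w/m)$, concentration via the Bernstein form of McDiarmid (Theorem~\ref{thm Berns McD}) at the variance scale $O(k^2(n^2+n_vn_w)/m)$ with unit bounded differences, and the same final parameter accounting for $k$ and $m$ (the paper inverts through $h_{m,k}$ by direct logarithm inequalities, treating the lower tail via $\kappa_\cap$ alone, rather than via a Lipschitz bound, but that is cosmetic). The one place where your plan diverges, and where it is more complicated than necessary, is the concentration step: Theorem~\ref{thm Berns McD} does not take the true variance of $x\cdot y$ or the predictable quadratic variation of a Doob martingale as its parameter --- it takes $\tsigma(f)=\sum_i \bbVar_i(f)$, where $\bbVar_i(f)$ is the \emph{worst-case} conditional variance in the $i$-th independent coordinate. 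Applying it, as the paper does, with the $k(n+n_v+n_w)$ independent hash draws as the coordinates, each draw changes the count by at most $1$ and does so with conditional probability at most $kn/m$ (for intersection draws) or $kn_w/m$, $kn_v/m$ (for the two symmetric-difference parts), so $\tsigma = O(k^2(n^2+n_vn_w)/m)$ falls out immediately; no covariance computation over coordinate indicators, no negative-correlation argument, and no Freedman-type martingale machinery (which the paper does not supply) is needed. If you pursued your stated framing literally you would have to import and justify such a martingale inequality and prove concentration of the quadratic variation, so you should replace that step by the direct per-draw conditional-variance bound, after which the rest of your argument goes through as written.
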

\noindent We remark that $h_{m, k}(x \cdot y)$ is not an unbiased estimator of $v \cdot w$; we use a ``Bernstein version'' of McDiarmid (Theorem \ref{thm Berns McD}) to establish concentration of $x \cdot y$ first, and conclude the success of the estimator $h_{m, k}(x \cdot y)$ from there. 
In our analysis, we also divide $x \cdot y$ into two parts: the $1$s contributed by $\supp(v) \cap \supp(w)$ after applying $B$ and the min operation, and the $1$s contributed by $\supp(v) \Delta \supp(w)$, which may increment the same index after applying~$B$.

In the \emph{Counting Bloom filter}, the bundle is simply $\bB v$. 
Here, we want to compute the \emph{generalized} set intersection of two vectors $v, w$, which refers to the coordinate-wise minimum $v \wedge w$. 
The generalized set intersection size is $v \wedgedot w = \|v \wedge w\|_1$.
Even though the bundle is a linear map, as was the case with MAP-I, we ultimately analyze a non-linear function for set similarity. 
\begin{theorem}
\torestate{\label{thm CBloom bund}
Let $v,w\in\bbR_{\ge 0}^d$. Let $K_b\equiv \norm{v - w}_\infty\le \max\{\norm{v}_\infty, \norm{w}_\infty\}$, and define:
\[
n \equiv v\wedgedot w, \; \; \; \; \; \; n_v \equiv \norm{v}_1 - n, \; \; \; \; \; \;  n_w \equiv \norm{w}_1 - n,
\]
Let $x = Bv$ and $y= Bw$, where $B \in \{0, 1\}^{m \times d}$ is a sparse binary matrix.
Then, for $\delta\in (0,1)$ and $\eps>0$, there are
$k=O(K_b\eps^{-1}\log(1/\delta))$ and $m=12\pi^2k\eps^{-1} n_vn_w = O(K_b\eps^{-2}n_vn_w\log(1/\delta))$ such that
\[
\frac1k (x\wedgedot y) - (v\wedgedot w) \in [0, \epsilon)
\]
with failure probability at most~$\delta$.
Since $n_v + n_w = \norm{v-w}_1$, $m = O(K_b\eps^{-2}\norm{v-w}_1^2\log(1/\delta))$ also suffices (using the AM-GM inequality on $n_v n_w$).
If $\norm{v}_1, \norm{w}_1$ are stored, then $\norm{v-w}_1$ can be estimated up to additive $\eps$ with the same~$m$.}
\end{theorem}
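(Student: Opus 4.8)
The plan is to decompose $v$ and $w$ along their coordinatewise minimum and then track how much the sparse binary map $B$ inflates the nonlinear similarity $\wedgedot$. Set $u \equiv v \wedge w$, $\tilde v \equiv v - u \ge 0$, $\tilde w \equiv w - u \ge 0$, so $v = u + \tilde v$, $w = u + \tilde w$, the vectors $\tilde v, \tilde w$ have disjoint supports, $\norm{\tilde v}_1 = n_v$, $\norm{\tilde w}_1 = n_w$, and every entry of $\tilde v$ and $\tilde w$ lies in $[0, K_b]$. Since each column of $B$ is a sum of $k$ standard basis vectors, $\bfone^\top B = k \bfone^\top$; and since $\min(c+a,c+b) = c + \min(a,b)$, coordinatewise $\min\{(Bv)_r,(Bw)_r\} = (Bu)_r + \min\{(B\tilde v)_r,(B\tilde w)_r\}$. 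Summing over rows, $x \wedgedot y = \bfone^\top Bu + (B\tilde v)\wedgedot(B\tilde w) = k\,(v\wedgedot w) + E$ where $E \equiv (B\tilde v)\wedgedot(B\tilde w) \ge 0$. This already delivers the lower bound: $\tfrac1k (x\wedgedot y) - (v\wedgedot w) = E/k \ge 0$, deterministically.

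It remains to show $E < k\eps$ with probability at least $1-\delta$. First I would bound $\bbE[E]$. Because $\tilde v,\tilde w$ have disjoint supports, $B\tilde v$ and $B\tilde w$ depend on disjoint blocks of hash draws and hence are independent, with $\bbE[(B\tilde v)_r] = kn_v/m$ and $\bbE[(B\tilde w)_r] = kn_w/m$ for every row $r$. The target is $\bbE[E] = O(k^2 n_v n_w / m)$: $E$ measures the ``collision mass'' between the two thinly-spread measures $B\tilde v$ and $B\tilde w$, and the constraint $\tilde v_i,\tilde w_j \le K_b$ is used to control the (rare) rows that receive mass from both sides. With $m = 12\pi^2 k n_v n_w / \eps$ this makes $\bbE[E]$ a small constant fraction of $k\eps$.

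For concentration I would treat $E$ as a function of the $k$ independent hash locations of each column in $\supp(\tilde v) \cup \supp(\tilde w)$ (columns outside this set do not affect $E$). Resampling one such location moves a single weight $\tilde v_i$ or $\tilde w_j$, of size $\le K_b$, between two rows, so it changes $E$ by at most $2K_b$; this bounded difference governs the linear (large-deviation) term in the Bernstein-type McDiarmid inequality (Theorem~\ref{thm Berns McD}), which after dividing by $k$ is $O\!\left(K_b \log(1/\delta)/k\right) = O(\eps)$ once $k = \Omega(K_b \eps^{-1}\log(1/\delta))$. The quadratic term is controlled by the conditional variance, which is far smaller than the worst-case range would suggest: a given hash location affects $E$ only through rows where the \emph{opposite} side's sketch is already nonzero, and for the chosen $m$ these events are rare enough that this term too is $O(\eps)$ after the $1/k$ rescaling. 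Together these yield $\Pr[E/k > \eps] \le \delta$ and hence the upper bound $\tfrac1k(x\wedgedot y) - (v\wedgedot w) < \eps$.

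The claims about $\norm{v-w}_1$ follow by arithmetic: $\norm{v-w}_1 = \norm{v}_1 + \norm{w}_1 - 2(v\wedgedot w)$, so $n_v + n_w = \norm{v-w}_1$, and AM--GM gives $n_v n_w \le \tfrac14 \norm{v-w}_1^2$; if $\norm{v}_1,\norm{w}_1$ are stored then the estimate of $v\wedgedot w$ converts directly to an estimate of $\norm{v-w}_1$ (rescaling $\eps$ by a constant). The main obstacle is the analysis of the nonlinear collision term $E$ in the second and third paragraphs: turning ``$\min$ of two independent linear sketches'' into a clean $O(k^2 n_v n_w/m)$ bound on the mean, together with the matching conditional-variance bound needed to feed Theorem~\ref{thm Berns McD}, is where the real work lies — a naive bounded-differences argument would instead force $k$, and therefore $m$, to grow with the support sizes $\norm{v}_0, \norm{w}_0$ rather than with $K_b$.
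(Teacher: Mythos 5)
Your route is the same as the paper's: the decomposition $u=v\wedge w$, $\tv=v-u$, $\tw=w-u$, the identity $\frac1k(x\wedgedot y)=v\wedgedot w+\frac1k(B\tv\wedgedot B\tw)$ giving the deterministic lower bound, concentration of the collision term $E=B\tv\wedgedot B\tw$ via the Bernstein form of McDiarmid (Theorem~\ref{thm Berns McD}) with the same choices of $k$ and $m$, and the same closing arithmetic for $\norm{v-w}_1$. The gap is that the two estimates that carry the theorem --- $\bbE[E]=O(k^2 n_v n_w/m)$ and the variance proxy $\tsigma(f)=O(k^2 n_v n_w/m)$ --- are announced as ``targets'' and explicitly deferred (``where the real work lies''), and your one concrete hint about the mean points the wrong way: $K_b$ plays no role in it. Writing $\tx=B\tv$, $\ty=B\tw$, the paper gets the mean from per-row independence of $\tx_i$ and $\ty_i$ (disjoint supports of $\tv,\tw$) combined with Markov and a sum over levels: $\bbE[\tx_i\wedge\ty_i]=\sum_{z\ge1}\Pr[\tx_i\ge z]\,\Pr[\ty_i\ge z]\le\sum_{z\ge1}\frac{kn_v}{zm}\cdot\frac{kn_w}{zm}=\zeta(2)\,k^2n_vn_w/m^2$, so $\bbE[E]\le\frac{\pi^2}{6}k^2n_vn_w/m\le k\eps/2$ for the stated $m$. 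Some such argument is genuinely needed: a naive bound like Cauchy--Schwarz on the minimum gives only $\bbE[\tx_i\wedge\ty_i]\le k\sqrt{n_vn_w}/m$, i.e.\ $k\sqrt{n_vn_w}$ after summing over rows, which is far too large, so the product structure $n_vn_w$ does not come for free.

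For the concentration step your bounded-difference constant $2K_b$ is correct, and this is exactly where $K_b$ enters (the paper's $B(f)\le\max\{\norm{\tv}_\infty,\norm{\tw}_\infty\}=K_b$), but the quadratic term again requires a quantitative conditional-variance computation, not just the qualitative remark that collisions are rare. The paper's version: re-hashing a weight $\tv_j$ into a fresh row $I$ changes $E$ by at most $(B'\tw)_I\wedge\tv_j$ (with $B'$ the configuration of the remaining hashes), whose second moment over the uniform choice of $I$ is at most $\tv_j\,\bbE_I[(B'\tw)_I]\le k\,\tv_j\,n_w/m$; summing over the $k$ hashes of each coordinate of $\tv$, and symmetrically for $\tw$, gives $\tsigma(f)\le 2k^2n_vn_w/m$, which is what makes the quadratic term $O(\eps)$ after the $1/k$ rescaling with $m=12\pi^2k\eps^{-1}n_vn_w$ and $k=O(K_b\eps^{-1}\log(1/\delta))$. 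These two calculations are short, but they are the substance of the proof; as written, your proposal defers precisely them, so it is an accurate outline of the paper's argument rather than a proof.
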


Our theorem helps analyze weighted sets, represented as $v,w\in\mathbb{Z}_{\ge 0}^d$, where $\mathbb{Z}_{\ge 0}$ denotes the nonnegative integers.
This also translates to a bound for estimating the $\ell_1$ distance between $v$ and $w$, via $\norm{v-w}_1 = \norm{v}_1 +\norm{w}_1 - 2(v\wedgedot w)$.

Similarly to our proof of Theorem \ref{thm Bloom bund}, we split $\frac{1}{k}(x \wedgedot y)$ into a contribution related to $v \wedgedot w$, which is $\supp(v) \cap \supp(w)|$ for $v,w\in\{0,1\}^d$, and a contribution from $v - (v \wedge w)$ and $w - (v \wedge w)$, which are the characteristic vectors of $\supp(v)\setminus\supp(w)$ and $\supp(w)\setminus\supp(v)$ for $v,w\in\{0,1\}^d$. 
The proof of Theorem \ref{thm CBloom bund} also relies on the ``Bernstein version'' of McDiarmid's inequality (Theorem~\ref{thm Berns McD}).

\paragraph{Roadmap for Proofs} 
In \S\ref{subsec Bloom}, we prove Theorem~\ref{thm Bloom 
bund}, and in \S\ref{subsec CBloom}, we prove Theorem~\ref{thm CBloom bund}. 

\subsection{Summary of Contributions}

Almost all of our results are summarized in Table~\ref{tab results}.
A few notable results \emph{not} in the table: an analysis of Hopfield nets using standard concentration results, in \S\ref{subsec Hop}; and an analysis of the decay in ``signal'' in MAP-B when bundling operation trees have nontrivial depth in \S\ref{subsubsec B depth}.

\begin{table} 
\begin{center}
\begin{tabular}{|l|c|l|l|c|}
 \hline
 VSA & Op & Expression & Dimension $m$ & \parbox[t]{3.5em}{Thm. or\\ Section \B } \\ [0.5ex]
 \hline
 MAP-I & S & $\norm{\bS v}^2$ & $O(\eps^{-2}\log(1/\delta))$ & Lem~\ref{lem JL} \T \\ 
 Sparse JL & S & $\norm{P_\mathtt{SJL} v}^2$ & $O(\eps^{-2}\log(1/\delta))$ & \S\ref{subsubsec JL other} \\ 
 SRHT& S & $\norm{P_\mathtt{SRHT} v}^2$ & $O(\eps^{-2}\log(1/\delta)\log^4d)$ & \S\ref{subsubsec JL other} \\ 
 MAP-I& SS & $\norm{\bS_{R,L}v}^2$ & $O(\eps^{-2}L^2\log(L/\delta))$ & Thm~\ref{thm perm} \\ 
 MAP-I & SS  & $\norm{\bS_{R,L}v}^2$ & $O \left(\eps^{-2}K^2\log(K/(\eps\delta)) \right)$ & Thm~\ref{thm perm symbols} \\ 
 MAP-I & BB & $\norm{\bS\bnd{2}v}^2$ & $O(\eps^{-2}\log(\norm{v}_1/\eps\delta)^3)$ & Thm~\ref{thm:two-binding} \\ 
 MAP-I & BB & $\norm{\bS\bnd{k}v}^2$ & $O(\varepsilon^{-2} C^{k\log k} \log^{k + 1}(k\norm{v}_1/(\varepsilon \delta)))$ & Cor~\ref{cor:hyperedge-binding} \\ 
 \Hpm & S & $\tr(\bS VD\bS^\top)$ & $O(\eps^{-1}\log(d/\delta)^2)$ (space is $m^2$) & Thm~\ref{thm HN bundle} \\
 \hline
 MAP-B & MS & $\sign(Sv)^\top S_{*j} \ge \tau_b$ & $O(\norm{v}_1\log(d/\delta))$ & Thm~\ref{thm B bundle} \T \\ 
 MAP-B & MSS & $\sign(S_{R,L}v)^\top S_{*,j_\modd} \ge \tau_b$ & $O(\norm{v}_1L\log(Ld/\delta))$ & Thm.~\ref{thm perm symbols map-b} \\ 
 \parbox[t]{3.5em}{MAP-B}& MKV & $\sign(S\bnd{2}v)^\top S\bnd{2}_{*j} \ge \sqrt{2}\tau_b$ & $O(\norm{v}_1\log(d/\delta))$ & Thm~\ref{thm B bind} \\
 Bloom & SI & $h_{m,k}(Bv\cdot Bw)$ & \parbox[t]{13em}{\small $O(k\eps^{-1}(n_vn_w + n^2 + \eps(n+n_w)))$\\ $k=O(\eps^{-1}\log(1/\delta))$} & Thm~\ref{thm Bloom bund} \\
\parbox[t]{3.5em}{Counting \\ Bloom} & GSI & $Bv\wedgedot Bw$ & \parbox[t]{13em}{$O(k\eps^{-1}n_vn_w)$\\ $k=O(K_b\eps^{-1}\log(1/\delta))$ } & Thm~\ref{thm CBloom bund} \\
\hline
\end{tabular} \end{center} 
\caption{We compile the bounds in this paper for various architectures and operations.  \label{tab results}
Here
$\delta$ is a bound on the failure probability;
$m$ is the VSA dimension;
$d$ is the size of the groundset of symbols;
$S,\bS$ are defined in Def.~\ref{def sign matrix};
$L$ is the sequence length;
$\bS_{R,L}$ denotes a sequence, Def.~\ref{def PRL};
$\bS\bnd{k}$ denotes a bundle of bindings, Def.~\ref{def bund bind};
and
$v\in\{0,1\}^d$ for sets of singletons, $v\in\{0,1\}^{Ld}$ for sequences, $v\in\{0,1\}^{\binom{d}{k}}$ for bundles of $k$ bindings.
\\
The {\bf first block of rows} are bounds for norm-preserving \emph{linear} operators, so that relative error $\eps$ bound for each in estimating $\norm{v}$ translates to an error bound for estimating $\norm{u-v}$ for many pairs of vectors $u,v$, as in Cor.~\ref{cor set diff}; dot products, as in Cor.~\ref{cor JLAMM}; and in computing set intersection sizes under a variety of conditions on the set sizes, Thm.~\ref{thm JL pairs}.\\
{\bf First block notation:}
operation \emph{S} is estimation of size of a set;
\emph{SS} is size of sequence of sets;
\emph{BB} is size of a Bundle of Bindings (set of bound key-value pairs);
matrices $P_\mathtt{SJL}, P_\mathtt{SRHT}$ are sparse JL matrix and SRHT matrix, as discussed in \S\ref{subsubsec JL other};
$K$ the maximum number of times a symbol appears in a sequence;
$V$ is a diagonal matrix version of $v$;
$D$ is a diagonal sign matrix.
\\
The {\bf second block} are operations with non-linear steps. $\eps$ is the \emph{additive} estimation error. \\
{\bf Second block notation:}
operation \emph{MS} is set membership test;
\emph{MSS} is membership in a sequence of sets;
\emph{MKV} is membership in a bundle of bindings, where the bindings are key-value pairs;
\emph{GSI} is generalized set intersection (estimation of $v\protect\wedgedot w = \sum_i \min\{v_i, w_i\}$);
matrix $B$ is a sparse binary matrix, Def.~\ref{def sparse binary};
$\tau_b = \sqrt{2m\log(2d/\delta)}$;
$k$ is the number of nonzeros per column of~$B$;
$h_{m,k}(z)=\frac1k\log_{1-1/m}(1-z/m)$, defined in Lem.~\ref{lem pn};
$v,w\in\{0,1\}^d$ for Bloom filters;
$v,w\in\bbZ_{\ge 0}^d$ for Counting Bloom filters;
$K_b = \norm{v-w}_\infty \le \max\{\norm{v}_\infty, \norm{w}_\infty\}$; finally,
$n = v\protect\wedgedot w$, $n_v = \norm{v}_1 - n$, and $n_w = \norm{w}_1 - n$, so $n_v + n_w = \norm{v-w}_1$.
}
\end{table}

\newpage
\begin{toappendix}
\section{Preliminaries and Concentration Inequalities}

In this section, we include a few useful concentration inequalities for functions of random variables that are used throughout our VSA analysis. 

Let $f: \Omega_1 \times \ldots \times \Omega_n \to \mathbb{R}$ be an arbitrary function on $n$ variables. 

\begin{definition}
    A function $f: \Omega_1 \times \ldots \times \Omega_n \to \mathbb{R}$ has \emph{bounded differences with constants $\{c_i\}_{i \in [n]}$} if for all $x_1 \in \Omega_1, \ldots, x_n \in \Omega_n$ and all $i \in [n]$, 
    \[
    \sup_{y \in \Omega_i} \left| f(x_1, \ldots, x_{i - 1}, x_i, x_{i + 1}, \ldots, x_n) - f(x_1, \ldots, x_{i - 1}, y, x_{i + 1}, \ldots, x_n) \right| \leq c_i
    \]
\end{definition}

\noindent When we want to study how $f$ applied to a random variable $X$ concentrates around its mean $\bbE[f(X)]$, the most standard tool is McDiarmid's inequality. 
The usual form of McDiarmid's inequality is the following statement:
\begin{theorem} \label{thm:mcdiarmid}
    Let $X = (X_1, \ldots, X_n)$ be a tuple of independent random variables supported on $\Omega_1, \ldots, \Omega_n$, respectively. 
    If $f$ has bounded differences with constants $c_1, \ldots, c_n$, then:
    \begin{align*}
    \Pr\left[f(X) - \bbE[f(X)] > t \right] &\leq \exp \left(- \frac{2t^2}{\sum_{i = 1}^n c_i^2} \right)\\
    \Pr\left[f(X) - \bbE[f(X)] < -t \right] &\leq \exp \left(- \frac{2t^2}{\sum_{i = 1}^n c_i^2} \right)
    \end{align*}
\end{theorem}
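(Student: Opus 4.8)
The plan is to prove this via the standard Doob martingale argument combined with the Chernoff method. First I would set up the \emph{Doob martingale}: writing $Z_0 = \bbE[f(X)]$ and $Z_i = \bbE[f(X) \mid X_1, \ldots, X_i]$ for $i \in [n]$, so that $Z_n = f(X)$ (using that $f$ is measurable and $X$ determines $f(X)$), and let $D_i = Z_i - Z_{i-1}$ be the martingale differences. By the tower property, $\bbE[D_i \mid X_1, \ldots, X_{i-1}] = 0$.

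The key step is to show that, conditioned on $X_1, \ldots, X_{i-1}$, the difference $D_i$ takes values in an interval of width at most $c_i$. To see this, define
\[
L_i = \inf_{y \in \Omega_i} \bbE[f(X) \mid X_1, \ldots, X_{i-1}, X_i = y], \qquad U_i = \sup_{y \in \Omega_i} \bbE[f(X) \mid X_1, \ldots, X_{i-1}, X_i = y],
\]
both of which are functions of $X_1, \ldots, X_{i-1}$ only (here independence of the $X_j$ is used, so that conditioning on $X_i = y$ does not affect the law of $X_{i+1}, \ldots, X_n$). Then $Z_i \in [L_i, U_i]$ and $Z_{i-1} = \bbE[Z_i \mid X_1, \ldots, X_{i-1}] \in [L_i, U_i]$, so $D_i \in [L_i - Z_{i-1}, U_i - Z_{i-1}]$, an interval of width $U_i - L_i$. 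Finally, $U_i - L_i \le c_i$ follows from the bounded-differences hypothesis: for any $y, y'$, the two conditional expectations $\bbE[f(X) \mid X_1,\ldots,X_{i-1},X_i=y]$ and $\bbE[f(X) \mid X_1,\ldots,X_{i-1},X_i=y']$ differ by the expectation (over $X_{i+1},\ldots,X_n$) of a quantity bounded by $c_i$ pointwise. I expect this is the main obstacle — not that it is deep, but it is where independence enters essentially and where one must be careful that $L_i, U_i$ are $\sigma(X_1,\ldots,X_{i-1})$-measurable.

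Next I would invoke \emph{Hoeffding's lemma}: if $W$ is a random variable with $\bbE[W] = 0$ and $W \in [a,b]$ almost surely, then $\bbE[e^{\lambda W}] \le e^{\lambda^2 (b-a)^2 / 8}$. Applying this conditionally on $X_1, \ldots, X_{i-1}$ to $W = D_i$ gives $\bbE[e^{\lambda D_i} \mid X_1, \ldots, X_{i-1}] \le e^{\lambda^2 c_i^2 / 8}$. Then by iterated conditioning (peeling off $D_n$, then $D_{n-1}$, etc.),
\[
\bbE\!\left[e^{\lambda(Z_n - Z_0)}\right] = \bbE\!\left[e^{\lambda \sum_{i=1}^n D_i}\right] \le \prod_{i=1}^n e^{\lambda^2 c_i^2 / 8} = \exp\!\left(\frac{\lambda^2}{8}\sum_{i=1}^n c_i^2\right).
\]

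To finish, I would apply Markov's inequality to $e^{\lambda(Z_n - Z_0)}$ for $\lambda > 0$: for any $t > 0$,
\[
\Pr[f(X) - \bbE[f(X)] > t] \le e^{-\lambda t}\exp\!\left(\frac{\lambda^2}{8}\sum_{i=1}^n c_i^2\right),
\]
and optimize over $\lambda$ by taking $\lambda = 4t / \sum_{i=1}^n c_i^2$, which yields the bound $\exp\!\left(-2t^2 / \sum_{i=1}^n c_i^2\right)$. The lower-tail inequality follows by applying the upper-tail result to the function $-f$, which has the same bounded-differences constants. If desired, I would include a short proof of Hoeffding's lemma (convexity of $e^{\lambda w}$ on $[a,b]$, reducing to bounding the cumulant generating function of a shifted Bernoulli-type variable by a second-order Taylor estimate), but otherwise cite it as standard.
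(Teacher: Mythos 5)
Your proposal is correct: it is the standard Doob-martingale plus Azuma--Hoeffding/Chernoff proof of McDiarmid's inequality, with the increments correctly confined to intervals of width $c_i$ (using independence so that $L_i,U_i$ are measurable with respect to $X_1,\ldots,X_{i-1}$), Hoeffding's lemma applied conditionally, and the optimization $\lambda = 4t/\sum_i c_i^2$ giving exactly the stated bound. The paper states this theorem as a known result and gives no proof of its own, so there is nothing to compare against; your argument is the canonical one and is sound.
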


\noindent We now derive a standard variant (Corollary~\ref{cor:high-prob-mcdiarmid}) of McDiarmid that applies when the bounded differences hold with high probability. 
This is not a novel modification, but we include it for completeness.
The corollary follows from the following lemma, which we will also use throughout our VSA analysis.

\begin{lemma}\label{lem f clip}
Let $X$ be a random variable taking values in domain $D$, e.g. $D=\bbR^n$.
Let $f$ and $g$ map from $D$ to $\bbR$, such that $\Pr[g(X) \ne f(X)] \le \delta$, with $g(X)=0$ when $g(X)\ne f(X)$, and $\sup_{x\in D} f(x) \le M$ for a value $M\in\bbR$.
Then
\begin{align*}
    \Pr[f(X) - \bbE[f(X)] > t +\delta M] & \le \Pr[g(X) - \bbE[g(X)] > t] +\delta \\
    \Pr[f(X) - \bbE[f(X)] < - t - \delta M] & \le \Pr[g(X) - \bbE[g(X)] < -t] +\delta 
\end{align*}
\end{lemma}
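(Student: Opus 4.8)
The plan is to split each one-sided deviation event into its restriction to the ``bad'' set $B \equiv \{g(X)\ne f(X)\}$, which by hypothesis has probability at most $\delta$ and contributes the additive $+\delta$ on the right-hand side, and its restriction to $B^c$, on which we may freely replace $f(X)$ by $g(X)$. The one subtlety is that the two deviations are measured against \emph{different} means, $\bbE[f(X)]$ versus $\bbE[g(X)]$, and the role of the $\delta M$ shift in the statement is exactly to pay for this discrepancy.

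So first I would record the mean comparison. Since $g(X)=f(X)$ on $B^c$ and $g(X)=0$ on $B$,
\[
\bbE[f(X)] - \bbE[g(X)] = \bbE\!\left[(f(X)-g(X))\,\mathbf{1}_B\right] = \bbE[f(X)\,\mathbf{1}_B],
\]
so that, using $f\le M$ pointwise (together with a matching lower bound on $f$, e.g. the nonnegativity that holds in all our applications, to control the sign of $f$ on $B$), we get $\bigl|\bbE[f(X)]-\bbE[g(X)]\bigr|\le M\Pr[B]\le \delta M$. Next, for the upper tail, a union bound over $B$ and $B^c$ gives
\[
\Pr[f(X)-\bbE[f(X)]>t+\delta M] \;\le\; \Pr\!\left[\{f(X)-\bbE[f(X)]>t+\delta M\}\cap B^c\right] + \delta,
\]
and on $B^c$ I would substitute $g(X)$ for $f(X)$ and then use $\bbE[f(X)]\ge \bbE[g(X)]-\delta M$, which shows the remaining event is contained in $\{g(X)-\bbE[g(X)]>t\}$; discarding the intersection with $B^c$ yields the first inequality. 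The lower-tail inequality is proved the same way, now using $\bbE[f(X)]\le \bbE[g(X)]+\delta M$.

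The step I expect to carry the whole argument is the mean comparison: once one sees that $\bbE[f(X)]$ and $\bbE[g(X)]$ differ by at most $\delta M$, the rest is a mechanical two-term union bound plus the substitution $f(X)=g(X)$ on $B^c$, and the only thing to watch is orienting each of the two deviation inequalities in the correct direction. No genuine obstacle is expected beyond this bookkeeping.
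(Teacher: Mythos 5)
Your proposal is correct and follows essentially the same route as the paper's proof: bound $\bigl|\bbE[f(X)]-\bbE[g(X)]\bigr|$ by $\delta M$ using the bad set $B=\{g(X)\ne f(X)\}$, then split each tail event over $B$ and $B^c$ (the paper phrases this as conditioning on $f(X)=g(X)$, which is the same two-term bookkeeping). Your parenthetical observation that the stated hypothesis $\sup f\le M$ only gives one direction of the mean comparison, and that a matching lower bound on $f$ (as in the applications, where $M=\sup|f|$) is implicitly needed for the other direction, is accurate and matches what the paper's ``similarly'' step silently uses.
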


\begin{proof}
Let $B$ be the ``bad subset'' of $D$ where $g(X) \ne f(X)$. 
Here $\bbE[g(X)]$ is close but not equal to $\bbE[f(X)]$:
\begin{align*}
    \bbE[f(X)] &= \sum_{x \notin B} \Pr[X=x] \cdot f(x) + \sum_{x \in B} \Pr[X=x] \cdot f(x) \\
    &\leq \sum_{(x) \notin B} \Pr[X=x] \cdot g(x) + \delta M = \bbE[g(X)] + \delta M \\
    \text{Similarly, } \bbE[f(X)] &\geq \bbE[g(X)] - \delta M
\end{align*}
Using this relationship between the means of $f(X)$ and $g(X)$, we have. 
\begin{align*}
    \Pr\left[f(X) - \bbE[f(X)] \geq t' \right] &\leq \Pr\left[f(X) - \bbE[f(X)] \geq t' \, \vert \, f(X) = g(X) \right] \cdot \Pr[f(X) = g(X)] + \delta \\
    &\leq \Pr\left[g(X) - \bbE[f(X)] \geq t' \, \vert \, f(X) = g(X) \right] \cdot \Pr[f(X) = g(X)] + \delta \\
    &\leq \Pr[g(X) - \bbE[f(X)] \geq t'] + \delta \\
    &\leq \Pr[g(X) - \bbE[g(X)] \geq t' - \delta M] + \delta 
\end{align*}
The final line comes from the fact that $\bbE[f(X)] \geq \bbE[g(X)] - \delta M$. 
Letting $t' = t + \delta M$ yields the desired bound. 
The upper bound for $\Pr\left[f(X) - \bbE[f(X)] \leq - t' \right]$ is analogous. 
\end{proof}

\begin{corollary} \label{cor:high-prob-mcdiarmid}
    Let $X_1, \ldots, X_n$ be independent random variables supported on $\Omega_1, \ldots, \Omega_n$, respectively. 
    Define $M := \sup_{x_1, \ldots, x_n} |f(x_1, \ldots, x_n)|$.
    If with probability $1 - \delta$ over $X_1, \ldots, X_n$, the function $f$ has bounded differences with constants $c_1, \ldots, c_n$, i.e. there is a set $S \subseteq \Omega$ with $\Pr[X \in S] = 1 - \delta$ such that for all $(x_1, \ldots, x_n) \in S$ and $i \in [n]$, we satisfy     
    \[
    \sup_{y \in \Omega_i} \left| f(x_1, \ldots, x_{i - 1}, x_i, x_{i + 1}, \ldots, x_n) - f(x_1, \ldots, x_{i - 1}, y, x_{i + 1}, \ldots, x_n) \right| \leq c_i
    \]
    then both of the following inequalities hold.
    \begin{align*}
    \Pr\left[f(X) - \bbE[f(X)] > t + \delta M \right] &\leq \exp \left(- \frac{2t^2}{\sum_{i = 1}^n c_i^2} \right) + \delta \\
    \Pr\left[f(X) - \bbE[f(X)] < - t - \delta M] \right] &\leq \exp \left(- \frac{2t^2}{\sum_{i = 1}^n c_i^2} \right) + \delta
    \end{align*}
\end{corollary}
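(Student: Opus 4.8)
The plan is to reduce the claim to the ordinary bounded-differences inequality (Theorem~\ref{thm:mcdiarmid}) applied to a \emph{surrogate} function $g$, and then transfer the resulting tail bound back to $f$ using the expectation-comparison argument that underlies Lemma~\ref{lem f clip}. Let $S \subseteq \Omega_1 \times \cdots \times \Omega_n$ be the set guaranteed by the hypothesis, so $\Pr[X \in S] = 1 - \delta$ and the bounded-differences estimates with constants $c_1, \dots, c_n$ hold at every point of $S$. First I would produce a function $g$ on the whole product space that (i) coincides with $f$ on a set of probability at least $1 - \delta$, (ii) has bounded differences with the \emph{same} constants $c_1, \dots, c_n$ \emph{everywhere}, and (iii) satisfies $|g| \le M$. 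Granting such a $g$, Theorem~\ref{thm:mcdiarmid} gives $\Pr[g(X) - \bbE[g(X)] > t] \le \exp(-2t^2/\sum_i c_i^2)$ and the matching lower tail; combining this with $\Pr[g(X) \ne f(X)] \le \delta$ and the mean comparison $|\bbE[f(X)] - \bbE[g(X)]| \le \delta M$ (which holds because $f$ and $g$ agree off a $\delta$-event and both are bounded by $M$ in absolute value)---exactly the bookkeeping in the proof of Lemma~\ref{lem f clip}---yields $\Pr[f(X) - \bbE[f(X)] > t + \delta M] \le \exp(-2t^2/\sum_i c_i^2) + \delta$ and the symmetric lower-tail bound, which is the claim.

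All the work is in realizing conditions (i)--(iii), i.e.\ constructing $g$. The cheap choice $g = f\cdot\mathbf{1}_{X\in S}$ satisfies (i) and (iii) but fails (ii): if $x \in S$ has a one-coordinate neighbour $x'$ outside $S$, then $|g(x)-g(x')| = |f(x)|$, which the hypothesis controls only by roughly $M$, not by the relevant $c_i$. The fix I would pursue is to take $g$ to be a McShane-type Lipschitz extension of $f$ restricted to (a suitable subset of) $S$, with respect to the weighted Hamming metric $\rho(x,y) = \sum_{i\,:\,x_i\neq y_i} c_i$, for instance $g(x) = \min\{M, \max\{-M, \inf_{y}(f(y)+\rho(x,y))\}\}$ with the infimum over that subset of $S$; the triangle inequality for $\rho$ makes $g$ automatically $c_i$-Lipschitz in coordinate $i$, and clipping to $[-M,M]$ preserves both this and boundedness. (If one wishes to invoke Lemma~\ref{lem f clip} literally, which additionally asks the surrogate to vanish wherever it disagrees with $f$, one simply runs its short proof directly with this $g$ instead, since only the union bound and the mean comparison are actually used.)

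The main obstacle is precisely this surrogate construction: one needs the extension to be \emph{honestly} $c_i$-bounded-differences on the entire space while still agreeing with $f$ on a set of probability at least $1-\delta$. This is delicate because the hypothesis only says $f$ changes little under \emph{single}-coordinate moves \emph{out of} $S$; it does not directly give a multi-coordinate Lipschitz bound on $f$ between two points of $S$ (the intermediate points of a coordinate-by-coordinate path may leave $S$), so the infimum-extension can dip strictly below $f$ somewhere inside $S$. The clean way around this is to first pass to a slightly smaller ``core'' of $S$---still of probability at least $1-\delta$ after absorbing a constant factor into $\delta$, or handled by a union bound over the one-step neighbourhood of $S$---on which the extension provably equals $f$; verifying that such a core exists and tracking the exact constant in the $\delta M$ slack is the part that requires care, while everything downstream (McDiarmid plus the Lemma~\ref{lem f clip}-style transfer) is routine.
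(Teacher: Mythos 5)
You have correctly diagnosed the central difficulty, and in fact your objection to the ``cheap choice'' $g=f\cdot\mathbf{1}_{X\in S}$ applies verbatim to the paper's own proof, which is exactly that construction: it sets $g$ equal to $f$ where the bounded differences hold and zero otherwise, invokes Lemma~\ref{lem f clip}, and then applies Theorem~\ref{thm:mcdiarmid} to $g$ --- even though a single-coordinate move from $x\in S$ to a neighbour outside $S$ can change $g$ by $|f(x)|$, which is only bounded by $M$, not by $c_i$. However, the step you defer --- producing a ``core'' of $S$, still of probability $1-O(\delta)$, on which a weighted-Hamming McShane extension agrees with $f$ --- is not merely delicate: it is impossible under the stated hypothesis, because the corollary as stated is false. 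Take $\Omega_i=\{0,1\}$ with the uniform distribution, $f(x)=n\cdot\mathbf{1}[\textstyle\sum_i x_i>n/2]$ (so $M=n$), and $S=\{x:|\sum_i x_i-n/2|\ge 2\}$. Any single-coordinate change from a point of $S$ leaves $f$ unchanged, so the hypothesis holds with $c_i=1$, and $\delta=\Pr[X\notin S]=\Theta(1/\sqrt n)$. But $\bbE[f(X)]= n/2-\Theta(\sqrt n)$, so $f(X)-\bbE[f(X)]\ge n/2$ with probability about $1/2$, and for $t=n/8$ we have $t+\delta M=n/8+\Theta(\sqrt n)<n/2$, while the claimed bound is $\exp(-2t^2/n)+\delta=\exp(-n/32)+\Theta(1/\sqrt n)=o(1)$: the conclusion fails. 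Consistently with this, your extension $\inf_{y\in S}(f(y)+\rho(x,y))$ is at most $\sum_i x_i-n/2+2<n$ on the half-cube where $f=n$, so it disagrees with $f$ on a set of probability about $1/2$; no core of probability $1-O(\delta)$ exists.

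The root cause is the one you flag yourself: the hypothesis controls only single-coordinate moves out of $S$ and gives no Lipschitz bound between two points of $S$ whose connecting coordinate paths leave $S$, hence no control on the global fluctuation of $f$ across $S$. Your extension strategy does go through if the hypothesis is strengthened to the pairwise condition $|f(x)-f(y)|\le\sum_{i:\,x_i\ne y_i}c_i$ for all $x,y\in S$ (this is the ``McDiarmid with an exceptional set'' of Kutin and of Combes), or one must settle for a weaker conclusion, e.g.\ deviations measured from $\bbE[f(X)\mid X\in S]$ with additional slack depending on $M$, $n$ and $\max_i c_i$. So the honest assessment is: your proposal is incomplete at exactly the step you flagged, that step cannot be completed in general, and the same gap is present --- silently --- in the paper's two-line proof; the later uses of Corollary~\ref{cor:high-prob-mcdiarmid} in the paper need to be checked against a corrected statement with the stronger pairwise hypothesis (or with the good event chosen so that the everywhere-clipped surrogate genuinely has small bounded differences).
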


\begin{proof} Apply Lemma~\ref{lem f clip} to $f$ and the function $g$ defined as taking the same value of $f$ for $X_1,\ldots,X_n$ such that the bounded differences hold, and zero otherwise. Then, we apply Theorem~\ref{thm:mcdiarmid} to~$g$.
\end{proof}

There is also a version of McDiarmid's inequality that incorporates variances; we refer to it as the ``Bernstein form of McDiarmid's Inequality:''

\begin{theorem}[\cite{ying_mcdiarmids_2004}]\label{thm Berns McD}
Let $X = (X_1, \ldots, X_n)$ be a tuple of independent random variables supported on $\Omega\equiv \prod_{i\in [n]}\Omega_i$.
For function $f: \Omega\rightarrow\reals$ and $x\in \Omega, i\in [n], y\in\Omega_i$, let
$g_i(x,y) \equiv f(x_1, \ldots, x_{i - 1}, y, x_{i + 1}, \ldots, x_n)$.
(Note that $g_i(x,y)$ does not depend on $x_i$.)
    Let
    \begin{align*} 
         \bbVar_i(f) & \equiv \sup_{x\in\Omega} \bbE_{X_i}\left[\left(g_i(x,X_i) - \bbE_{X'_i} g_i(x,X'_i) \right)^2  \right]
        \\
        \tsigma(f) & \equiv \sum_{i\in [n]} \bbVar_i(f) \\
        B(f) & \equiv \max_{i\in [n]} \sup_{x\in\Omega} \left|g_i(x,X_i) - \bbE_{X'_i} g_i(x,X'_i) \right|
    \end{align*}
    Then
    \begin{align*}
    \Pr\left[f(X) - \bbE[f(X)] > t \right]
        &\leq \exp \left(-\frac{2t^2}{\tsigma(f) + tB(f)/3} \right)
    \\ \Pr\left[f(X) - \bbE[f(X)] < -t \right]
        &\leq \exp \left(-\frac{2t^2}{\tsigma(f) + tB(f)/3} \right)
    \end{align*}
\end{theorem}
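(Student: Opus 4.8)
I would prove this by the Doob martingale method combined with a Bennett-type exponential inequality, the standard route for McDiarmid-flavoured bounds. Fix an ordering of the coordinates, let $\mathcal{F}_k=\sigma(X_1,\dots,X_k)$, and set $Z_k\equiv\bbE[f(X)\mid\mathcal{F}_k]$, so that $Z_0=\bbE[f(X)]$, $Z_n=f(X)$, and $f(X)-\bbE[f(X)]=\sum_{k=1}^n D_k$ with $D_k\equiv Z_k-Z_{k-1}$. The first and most important step is to re-express each increment in terms of the single-coordinate resampling quantity appearing in the statement. Since $Z_k$ is $f$ with coordinates $k+1,\dots,n$ averaged out and $Z_{k-1}$ is $Z_k$ with coordinate $k$ averaged out, one obtains
\[
D_k=\bbE\big[\,g_k(Y,X_k)-\bbE_{X'_k}g_k(Y,X'_k)\,\big|\,\mathcal{F}_k\,\big],
\]
where $Y=(X_1,\dots,X_{k-1},\ast,X_{k+1},\dots,X_n)$ (the $k$th slot of $Y$ is irrelevant since $g_k$ ignores it) and the outer conditional expectation averages only over $X_{k+1},\dots,X_n$. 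Write $V_k\equiv g_k(Y,X_k)-\bbE_{X'_k}g_k(Y,X'_k)$ for the pre-averaging fluctuation; by the very definitions in the statement, $\sup|V_k|\le B(f)$ and $\bbE_{X_k}[V_k^2\mid Y]\le\bbVar_k(f)$ for every frozen value of the other coordinates.

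From this representation I would extract the three structural facts that drive the martingale argument: (i) $\bbE[D_k\mid\mathcal{F}_{k-1}]=0$, just the tower property; (ii) $|D_k|\le B(f)$ almost surely, which follows from the display by Jensen's inequality, $|D_k|\le\bbE[|V_k|\mid\mathcal{F}_k]\le B(f)$; and (iii) the conditional-variance bound $\bbE[D_k^2\mid\mathcal{F}_{k-1}]\le\bbVar_k(f)$, via two applications of Jensen --- first $D_k^2\le\bbE[V_k^2\mid\mathcal{F}_k]$, then averaging over $X_k$ and using $\bbE_{X_k}[V_k^2\mid Y]\le\bbVar_k(f)$. Summing (iii) gives that the predictable quadratic variation obeys $\sum_{k=1}^n\bbE[D_k^2\mid\mathcal{F}_{k-1}]\le\tsigma(f)$, and, crucially, this holds pointwise and not merely in expectation.

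With these in hand the rest is the textbook Freedman/Bennett computation. For $\lambda>0$ and a zero-mean random variable $D$ with $|D|\le B(f)$, expanding $e^{\lambda D}$ and bounding $\bbE[D^j]\le B(f)^{j-2}\bbE[D^2]$ for $j\ge2$ gives $\bbE[e^{\lambda D}\mid\mathcal{G}]\le\exp\!\big(\phi(\lambda)\,\bbE[D^2\mid\mathcal{G}]\big)$ with $\phi(\lambda)\equiv(e^{\lambda B(f)}-1-\lambda B(f))/B(f)^2\ge0$. Applying this coordinate by coordinate shows that $L_k\equiv\exp\!\big(\lambda\sum_{j\le k}D_j-\phi(\lambda)\sum_{j\le k}\bbE[D_j^2\mid\mathcal{F}_{j-1}]\big)$ is a supermartingale with $L_0=1$, so $\bbE[L_n]\le1$; since the subtracted sum is at most $\phi(\lambda)\tsigma(f)$ pointwise, $\bbE\!\big[e^{\lambda(f(X)-\bbE f(X))}\big]\le\exp(\phi(\lambda)\tsigma(f))$. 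A Chernoff bound and optimization over $\lambda$ yield Bennett's inequality, which I would then relax to the stated Bernstein form using the elementary estimate $(1+u)\log(1+u)-u\ge\tfrac{u^2/2}{1+u/3}$ on its Legendre transform; the lower tail follows by applying everything to $-f$, for which $\bbVar_i(-f)=\bbVar_i(f)$ and $B(-f)=B(f)$.

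The main obstacle is the very first step: pinning down the identity $D_k=\bbE[V_k\mid\mathcal{F}_k]$ and then showing that this averaging is a contraction both in $\|\cdot\|_\infty$ and in conditional $L^2$, so that the worst-case quantities $B(f)$ and $\bbVar_k(f)$, defined through one-coordinate resampling, transfer to the Doob increments. This is routine measure-theoretic bookkeeping about which coordinates are frozen, resampled, or averaged out at each stage, but it is precisely where a careless argument goes wrong. (An alternative is the entropy/Herbst method via sub-additivity of entropy for product measures; I would use the martingale approach here as it is the more transparent for a statement of this shape.)
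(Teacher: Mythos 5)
The paper gives no proof of this statement at all: it is imported verbatim (as Theorem~\ref{thm Berns McD}) from the cited reference, so there is no internal argument to compare yours against. That said, your route is the standard one for results of this type, and the structural heart of it is sound: the identity $D_k=\bbE[V_k\mid\mathcal{F}_k]$ for the Doob increments, the bound $|D_k|\le\bbE[|V_k|\mid\mathcal{F}_k]\le B(f)$, and the conditional-variance bound $\bbE[D_k^2\mid\mathcal{F}_{k-1}]\le\bbVar_k(f)$ (hence $\sum_k\bbE[D_k^2\mid\mathcal{F}_{k-1}]\le\tsigma(f)$ pointwise) all follow exactly as you describe, because the suprema over $x$ in the definitions of $B(f)$ and $\bbVar_k(f)$ dominate every frozen configuration of the remaining coordinates. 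The Freedman/Bennett supermartingale step and the relaxation via $(1+u)\log(1+u)-u\ge\tfrac{u^2/2}{1+u/3}$ are also correct.

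The genuine gap is at the very end: the computation you outline does \emph{not} produce the displayed bound. Optimizing the Bennett exponent and applying the stated relaxation yields $\Pr[f(X)-\bbE[f(X)]>t]\le\exp\bigl(-t^2/\bigl(2(\tsigma(f)+tB(f)/3)\bigr)\bigr)$, which is weaker by a factor of $4$ in the exponent than the claimed $\exp\bigl(-2t^2/(\tsigma(f)+tB(f)/3)\bigr)$, and you pass over this mismatch silently. Moreover, no amount of extra care can close it, because the bound as printed is false: take $n=1$ and $f(X_1)=X_1$ with $X_1$ Rademacher, so $\tsigma(f)=B(f)=1$; at $t=0.9$ the claimed bound is $\exp(-1.62/1.3)\approx 0.29$, yet $\Pr[X_1>0.9]=1/2$. (Asymptotically, a $2t^2/\tsigma(f)$-type exponent also contradicts the CLT for sums of independent variables with total variance $\tsigma(f)$.) So the printed constants appear to be a mis-transcription of the classical Bernstein--McDiarmid form, and what your argument actually proves is that correct form, with denominator $2(\tsigma(f)+tB(f)/3)$. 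You should either state and prove that version explicitly or flag the discrepancy; note that the weaker constant is all the paper needs downstream, since in the Bloom and Counting Bloom analyses it only changes absolute constants absorbed into the choices of $m$ and $k$.
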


We will also need this known concentration bound for Rademacher random variables.

\begin{theorem} \label{thm:rademacher}
Let $X_1, \ldots, X_n$ be independent Rademacher random variables. Let $(a_1, \ldots, a_n)$ be an arbitrary vector in $\bbR^n$. Then,
\[
\Pr \left[ \sum_{i = 1}^n a_i X_i \geq t \|a\|_2 \right] \leq \exp \left(- \frac{t^2}{2} \right)
\]
\end{theorem}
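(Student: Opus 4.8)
The plan is to use the Cram\'er--Chernoff exponential-moment method, the standard route to sub-Gaussian tail bounds. Assuming $a\neq 0$ (the statement being vacuous otherwise) and $t>0$ (for $t=0$ the bound reads $\Pr[\cdot\ge 0]\le 1$, which is trivial), set $s\equiv t\norm{a}_2>0$. For any $\lambda>0$, Markov's inequality applied to the nonnegative random variable $\exp(\lambda\sum_i a_iX_i)$, together with independence of the $X_i$, gives
\[
\Pr\left[\sum_{i=1}^n a_iX_i \ge s\right] \le e^{-\lambda s}\,\bbE\left[\exp\left(\lambda\sum_{i=1}^n a_iX_i\right)\right] = e^{-\lambda s}\prod_{i=1}^n \bbE\left[e^{\lambda a_i X_i}\right].
\]

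The key step is the per-coordinate moment generating function estimate $\bbE[e^{\lambda a_iX_i}] = \cosh(\lambda a_i) \le \exp(\lambda^2 a_i^2/2)$. I would prove the elementary inequality $\cosh x\le e^{x^2/2}$ by comparing Taylor series: $\cosh x=\sum_{k\ge 0} x^{2k}/(2k)!$ while $e^{x^2/2}=\sum_{k\ge 0} x^{2k}/(2^k k!)$, and $(2k)! \ge 2^k k!$ termwise (since $(2k)!/k! = \prod_{j=k+1}^{2k} j$ is a product of $k$ integers each at least $2$ when $k\ge 1$, and the $k=0$ term is an equality). Multiplying these bounds over $i$ yields $\prod_i \bbE[e^{\lambda a_iX_i}] \le \exp(\lambda^2\norm{a}_2^2/2)$, hence
\[
\Pr\left[\sum_{i=1}^n a_iX_i \ge s\right] \le \exp\left(\tfrac{\lambda^2}{2}\norm{a}_2^2 - \lambda s\right).
\]

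It remains to optimize the free parameter $\lambda>0$: the exponent $\tfrac{\lambda^2}{2}\norm{a}_2^2-\lambda s$ is minimized at $\lambda = s/\norm{a}_2^2>0$, giving the value $-s^2/(2\norm{a}_2^2)$. Substituting back $s=t\norm{a}_2$ turns this into $-t^2/2$, which is precisely the claimed bound $\exp(-t^2/2)$.

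I do not anticipate a real obstacle here: the only nontrivial ingredient is the sub-Gaussian moment bound $\cosh x\le e^{x^2/2}$ for a single Rademacher variable, and the rest is routine algebra (independence, the product bound, and the one-dimensional optimization in $\lambda$). The one minor point worth stating explicitly is the handling of the degenerate cases $a=0$ and $t\le 0$, as noted above.
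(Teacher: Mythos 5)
Your proof is correct: the paper states this Rademacher tail bound as a known result (it is the classical Hoeffding bound for Rademacher sums) and gives no proof of its own, and your Chernoff/moment-generating-function argument with $\cosh(\lambda a_i)\le \exp(\lambda^2 a_i^2/2)$ followed by optimizing $\lambda = t/\norm{a}_2$ is exactly the standard derivation. The only nitpick is that for $a=0$ and $t>0$ the stated inequality is actually false rather than vacuous (the left side is $1$), so the implicit assumption $a\ne 0$ you flag is indeed needed, but this does not affect any use of the theorem in the paper.
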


\section{Analysis of MAP-I Using Johnson-Lindenstrauss}

In \S\ref{subsec mapi bundling}, we analyze MAP-I bundling, including the relation of norm estimation to distance, set intersection, and angle estimation, and discuss how the JL perspective extends to the sparse JL and Subsampled Randomized Hadamard Transforms (SRHTs).

Next, in \S\ref{subsec mapi rot}, we analyze the VSA dimension needed to reliably estimate the intersection of two \emph{sequences} of sets, as encoded using rotations, in both a general and a more restricted setting.
This entails proving the JL property for a specific kind of random matrix with dependent entries. 

Finally, in \S\ref{subsec mapi bind}, we analyze intersection of bundles of bound pairs and bundles of $k$-wise bindings.
Again, we establish the JL property for a random matrix with dependent entries.
While bundles of key-value bindings have been analyzed before, bundles of $k$-wise bindings have not. 

\subsection{Bundling and Set Intersection}\label{subsec mapi bundling} 

The results in this section may apply to VSAs beyond just MAP-I where the bundling operator is addition over the real numbers.
Section~3.1 of \cite{thomas2021theoretical} gives some representational bounds in that setting. 
Though we provide similar bounds, we frame our results using the Johnson-Lindenstrauss (JL) random projection lemma, which provides a simpler yet powerful lens through which to view the MAP-I VSA.
(A connection to JL was briefly mentioned by \cite{kent2020multiplicative}, but it did not factor into any capacity analysis there.)

We first introduce the Johnson-Lindenstrauss (JL) lemma: %, specifically the version for scaled sign matrices.

\restatelemma{lem JL} 
\begin{remark}
We have a similar result for any matrix $P$ with i.i.d. subGaussian entries.
\end{remark}

This approximation bound can be easily translated to a bound for multiple differences of vectors.
The proof is simply the application of of Lemma~\ref{lem JL} to the $O(n^2)$ pairs of differences of vectors of $\cV$, and a union bound (yielding the $\log n$ part of the bound).
\restatecor{cor set diff}

In other words, the linear mapping $P$ preserves Euclidean distances up to a multiplicative factor of $(1 \pm \epsilon)$. 
If $v$ and $w$ are characteristic vectors for sets $X$ and $Y$, then $\norm{v-y}^2 = |X\Delta Y|$, the cardinality of the symmetric difference of $X$ and $Y$.
Thus, by taking the difference between their VSA representations $Pv$ and $Pv$, we can estimate how much the sets $X$ and $Y$ disagree, with small \emph{relative} error $\epsilon$ in that estimate, mild dependence on the number of sets we can represent in this way, and no dependence, in the error, on the groundset size $n$.

However, this doesn't say what the needed dimensionality $m$ is, for accurate estimation of the size of the intersection.
We consider this next, using the following standard corollary of Lemma~\ref{lem JL}.
\restatecor{cor JLAMM}
\begin{proof}
First, suppose $v$ and $w$ are unit vectors.
By Lemma~\ref{lem JL} and a union bound, with failure probability at most $3\delta$ we have 
$v^\top P^\top Pv = v^\top v \pm \epsilon$,
$w^\top P^\top Pw = w^\top w \pm \epsilon$, and
$(v+w)^\top P^\top P(v+w) = (v+w)^\top (v+w) \pm 2\epsilon$.
Since $2v^\top w = (v+w)^\top(v+w) - v^\top v - w^\top w$ and matrix multiplication is an application of a linear operator, an analogous expression holds for $v^\top P^\top Pw$. 
Then,
\[
2(v^\top P^\top Pw - v^\top w)
    = \pm \epsilon \pm \epsilon \pm 2\epsilon
    = \pm 4\epsilon,
\]
with failure probability $3\delta$.
If $v$ and $w$ are not unit vectors,
we apply this reasoning to $v/\norm{v}$ and $w/\norm{w}$, and then multiply through by $\norm{v}\norm{w}$, getting $v^\top P^\top Pw = v^\top w \pm 2\epsilon \norm{v}\norm{w}$.
That is, the conclusion holds with relative error parameter $2\epsilon$ and failure probability $3\delta$.
We can fold the factors 2 and 3 into the bound for $m$, and the result follows.
\end{proof}

Beyond the approximation condition, the proof only depends on $P$ being a linear operator.
Suppose $P$ is a scaled sign matrix $\bS$, and let $v,w\in\{0,1\}^d$ be characteristic vectors.
We want $v^\top P^\top Pw = v^\top w \pm \epsilon_0$, where $\epsilon_0 < 1/2$.
Since $v^\top w$ is an integer, this assures that $v^\top P^\top Pw$ rounds to $v^\top w$.
The next lemma gives conditions on $\epsilon$ and $\delta$ to attain this.

\restatetheorem{thm JL pairs}
\begin{remark}
In particular, this holds for the following special cases.
\begin{itemize}
    \item \textbf{Set membership}: $v$ is a characteristic vector, and $w$ is a standard basis vector $e_i$ for $i\in [d]$. 
    Then, $N=\norm{v}_1$, $M=d$, and VSA dimension $m = O(\norm{v}_1\log(d/\delta)$ suffices.
    
    \item To get all pairwise intersections among $M^{O(1)}$ vectors $v$ with $\norm{v}_1\le \sqrt{N}$, we can use $m = O(N \log (M / \delta))$.
\end{itemize}
\end{remark}

\begin{proof}
For a given pair of vectors $v$ and $w$, by Corollary~\ref{cor JLAMM} it is enough to choose $\epsilon_0 < 1/2$ and $\epsilon = \epsilon_0/(\norm{v}\norm{w}) \ge \epsilon_0/\sqrt{N}$, to achieve the desired accuracy.

If $\delta'$ is the failure probability of Corollary~\ref{cor JLAMM}, then the overall failure  probability is at most $M\delta'$, by a union bound. 
Thus for $\delta'=\delta/M$ and $\epsilon$ from the JL property equal to $\epsilon_0/\sqrt{N}$, the dimension $m$ of Corollary~\ref{cor JLAMM} is $m=O(N\log(M/\delta))$, and
every dot product estimate is accurate to within additive~$\epsilon_0$ with failure probability at most~$\delta$.
The result follows.
\end{proof}

\paragraph{Angle estimation}

Let $\Theta_{v,w}$ denote the angle between vectors $v$ and $w$.
\begin{lemma}
Under the conditions of Lemma~\ref{lem JL}, for $v,w\in \reals^{d}$, there is $m=O(\epsilon^{-2}\log(1/\delta))$ so that $\cos \Theta_{\bS v,\bS w} = \cos\Theta_{v,w} \pm \epsilon$ with failure probability~$\delta$.
\end{lemma}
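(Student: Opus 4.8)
The plan is to write $\cos\Theta_{\bS v,\bS w} = \dfrac{(\bS v)^\top(\bS w)}{\norm{\bS v}\,\norm{\bS w}}$ and to control the numerator and denominator separately, using the two facts already established for matrices satisfying the JL property: Corollary~\ref{cor JLAMM} for the numerator (the dot product), and Lemma~\ref{lem JL}, applied once to $v$ and once to $w$, for the two norms in the denominator. (If $v=0$ or $w=0$ the angle is undefined and there is nothing to prove, so assume both are nonzero; we may also assume $\epsilon$ is smaller than a fixed absolute constant, since otherwise the claim is trivial as $|\cos|\le 1$.)

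Concretely, I would first fix an accuracy parameter $\epsilon' = c\epsilon$ for a small absolute constant $c$ to be chosen at the end, and a failure parameter $\delta' = \delta/3$. Applying Lemma~\ref{lem JL} to $v$, Lemma~\ref{lem JL} to $w$, and Corollary~\ref{cor JLAMM} to the pair $(v,w)$, each with parameters $(\epsilon',\delta')$, and taking a union bound, we get that with failure probability at most $\delta$ all three hold at once:
\[
\norm{\bS v} = \norm{v}(1\pm\epsilon'),\qquad
\norm{\bS w} = \norm{w}(1\pm\epsilon'),\qquad
(\bS v)^\top(\bS w) = v^\top w \pm \epsilon'\norm{v}\norm{w}.
\]
Each of these holds for $m = O(\epsilon'^{-2}\log(1/\delta')) = O(\epsilon^{-2}\log(1/\delta))$, so the largest of the three required dimensions is still of this order.

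On this good event I would then just divide and simplify:
\[
\cos\Theta_{\bS v,\bS w}
= \frac{v^\top w \pm \epsilon'\norm{v}\norm{w}}{\norm{v}\norm{w}(1\pm\epsilon')^2}
= \frac{\cos\Theta_{v,w}\pm\epsilon'}{(1\pm\epsilon')^2}.
\]
Since $|\cos\Theta_{v,w}|\le 1$, the numerator lies in $[-(1+\epsilon'),\,1+\epsilon']$, and for $\epsilon'\le 1/2$ we have $(1\pm\epsilon')^{-2} = 1\pm O(\epsilon')$; hence the right-hand side equals $\cos\Theta_{v,w}\pm O(\epsilon')$. Choosing $c$ small enough to absorb the hidden constant makes the additive error at most $\epsilon$, and folding $c$ and the factor $3$ into the asymptotic notation gives $m = O(\epsilon^{-2}\log(1/\delta))$, as claimed. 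There is no real obstacle here: the only care needed is keeping $(1\pm\epsilon')^2$ bounded away from $0$ (guaranteed by $\epsilon'$ bounded by a constant) and handling the degenerate zero-vector case; the whole statement is essentially an elementary corollary of Lemma~\ref{lem JL} and Corollary~\ref{cor JLAMM}.
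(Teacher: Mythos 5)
Your proposal is correct and follows essentially the same route as the paper: estimate the dot product via Corollary~\ref{cor JLAMM} and the two norms via the JL property, take a union bound, and fold the constant-factor adjustments to $\epsilon$ and $\delta$ into $m$. Your write-up merely spells out the quotient error propagation and the degenerate cases that the paper's one-line proof leaves implicit.
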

\begin{proof}
This follows from $\cos\Theta_{v,w} = \frac{v^\top w}{\norm{v}\norm{w}}$, and similarly for $\Theta_{\bS v,\bS w}$; applying Corollary~\ref{cor JLAMM} to the estimation of $v^\top w$, $\norm{v}$, and $\norm{w}$, and finally folding constant-factor increases in~$\epsilon$ and $\delta$ into~$m$.
\end{proof}

\subsubsection{Variations  Using Alternative Sketching Matrices}\label{subsubsec JL other}

\paragraph{Using sparse JL}

The sparse JL transform has $\epsilon m$ nonzero entries per column of $P$, each entry an independent sign (that is, $\pm 1$). Such a matrix, multiplied by a scaling factor, also satisfies Lemma~\ref{lem JL}, as shown in \cite{kane2014sparser, cohen2018simple}, and therefore could be used in VSAs with the same performance, at least as far as bundling goes.
However, element-wise product does not seem to be enough to do binding effectively for such a representation, as the element-wise product of sparse vectors is likely to be all zeros.

\paragraph{Using Subsampled Randomized Hadamard Transform (SRHT)}

The SRHT, described in \cite{boutsidis2013improved}, is multiplication by a matrix $ZHD$, where $D$ is a random sign diagonal matrix (random sign flips on the diagonal), $H$ is a Hadamard matrix (an orthogonal sign matrix), and $Z$ simply selects a uniform random subset of the rows (a diagonal binary matrix).
This operation, along with a scaling factor, is shown by \cite{krahmer2011new} to satisfy the conditions of Lemma~\ref{lem JL}, up to needing the larger target dimension $m=O(\epsilon^{-2}\log(1/\delta)\log^4d)$, that is, with an additional factor of $\log^4d$.
While for many applications, a useful property of SRHT is that $ZHDv$ is rapidly computed, here the main attraction is that the generation or storage of random sign values is greatly reduced; entries of the $H$ matrix are readily computed using bitwise operations on the entry indices.

\subsection{Rotations for Encoding Sequences}\label{subsec mapi rot}

For scaled sign matrices, using a rotation (i.e. a permutation that is a single large cycle), we can store length-$L$ sequences of vectors with an error bound that grows quadratically with $L$.
It is possible that there is a better general bound than this; we found better performance in a special case.
If the vectors are, in particular, characteristic vectors, so that a sequence of symbols is being stored (and not e.g. a sequence of embeddings), then better error behavior (that depends on how much the vectors overlap, rather than naively on $L$) typically occurs.

Before further definition and analysis, we have some lemmas useful for the analysis.

\begin{lemma} \label{jl block}
Say we sample independent scaled random sign matrices $P_1,P_2\in \{\pm 1\}^{m\times d}$ with $m$ large enough to satisfy the JL property, i.e. for given $v \in\reals^d$, with failure probability $\delta$, it holds that $\norm{P_ix}^2=(1\pm\epsilon)\norm{x}^2$.
If the block matrix $[P_1 \, P_2]$ also satisfies the JL property, then for given unit vectors $x_1, x_2\in \reals^d$, with failure probability at most $3\delta$,
$|x_1^\top P_1^\top P_2 x_2| \le 2\epsilon$.
\end{lemma}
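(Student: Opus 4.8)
The plan is to mimic the proof of Corollary~\ref{cor JLAMM}, using a polarization-type identity together with the three JL guarantees that are available to us under the hypotheses. The key observation is that stacking $x_1,x_2$ into a single vector $\tx\equiv [x_1;\,x_2]\in\reals^{2d}$ gives $[P_1\;P_2]\tx = P_1x_1+P_2x_2$, and $\norm{\tx}^2 = \norm{x_1}^2+\norm{x_2}^2 = 2$ since $x_1,x_2$ are unit vectors. So controlling the cross term $x_1^\top P_1^\top P_2x_2$ reduces to controlling three squared norms, each of which is governed by a JL statement.

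First I would invoke the JL property three times, each with failure probability at most $\delta$: for $P_1$ applied to $x_1$, giving $\norm{P_1x_1}^2 = 1\pm\epsilon$; for $P_2$ applied to $x_2$, giving $\norm{P_2x_2}^2 = 1\pm\epsilon$; and for the block matrix $[P_1\;P_2]$ applied to $\tx$ (this is exactly where the block-matrix hypothesis is used), giving $\norm{P_1x_1+P_2x_2}^2 = 2(1\pm\epsilon)$. A union bound places the probability that any of these three events fails at $3\delta$.

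Next, on the complementary (good) event, I would expand $\norm{P_1x_1+P_2x_2}^2 = \norm{P_1x_1}^2 + \norm{P_2x_2}^2 + 2\,x_1^\top P_1^\top P_2x_2$, which is valid since matrix multiplication is linear, and solve for the cross term:
\[
2\,x_1^\top P_1^\top P_2x_2 = \norm{P_1x_1+P_2x_2}^2 - \norm{P_1x_1}^2 - \norm{P_2x_2}^2 = (2\pm 2\epsilon) - (1\pm\epsilon) - (1\pm\epsilon) = \pm 4\epsilon,
\]
so that $|x_1^\top P_1^\top P_2x_2| \le 2\epsilon$ with failure probability at most $3\delta$, as claimed.

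There is no substantial obstacle here: the argument is the standard polarization trick already used for Corollary~\ref{cor JLAMM}, and the only points requiring (minor) care are that the block-matrix JL hypothesis is precisely what is needed to bound $\norm{P_1x_1+P_2x_2}^2$, and that the three error terms accumulate additively to $4\epsilon$ (rather than cancelling to something smaller). If a symmetric two-sided presentation were desired one could also apply JL to $\norm{P_1x_1-P_2x_2}^2$ and average, but this is not needed to reach the stated bound.
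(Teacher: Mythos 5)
Your proposal is correct and follows essentially the same route as the paper's proof: stack $x_1,x_2$, apply the JL guarantee to $P_1$, $P_2$, and the block matrix $[P_1\;P_2]$ with a union bound costing $3\delta$, then use the polarization identity to extract the cross term and bound it by $2\epsilon$. No gaps; the accounting of the three error terms matches the paper's.
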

\begin{proof}
Since $P_1$ and $P_2$ were sampled independently, the block matrix $[P_1 \, P_2]$ itself is a larger random sign matrix, and it satisfies the JL property for $\varepsilon$ with probability $\leq \delta$.
We have
\[
\norm{P_1 x_1 + P_2x_2}^2 = \norm{[P_1\,  P_2]\left[\begin{matrix}x_1\\x_2\end{matrix}\right]}^2 = (\norm{x_1}^2 + \norm{x_2}^2)(1\pm\epsilon) = 2(1\pm\epsilon)\]
with failure probability $\delta$, since $[P_1\, P_2]$ satisfies the same conditions as $P_1, P_2$. 
With failure probability at most $3\delta$, the embedding conditions hold for $[P_1\, P_2]$, $P_1$, and $P_2$.
Assuming they do,
\begin{align*}
|x_1^\top P_1^\top P_2 x_2|
       & = \left|\frac12\left[ \norm{P_1 x_1 + P_2x_2}^2 - \norm{P_1x_1}^2 - \norm{P_2x_2}^2\right]\right|
    \\ & \le \frac12\left[ 2(1+\epsilon) - (1 - \epsilon) - (1 - \epsilon)\right] \le 2\epsilon.
\end{align*}
The result follows.
\end{proof}

\begin{lemma}\label{lem perm dot}
Let $R\in\reals^{m\times m}$ denote a rotation matrix, as in Def.~\ref{def PRL}.
Let $\bS\in\reals^{m\times d}$ be a scaled sign matrix such that for given $x\in\reals^d$, $\norm{\bS x}^2 = \norm{x}^2(1\pm\epsilon)$ with failure probability $\delta$.
Then for unit vectors $x,y\in\reals^d$ and $r\le m/2$, we have $|x^\top \bS^\top R^r \bS y|\le 4\epsilon$,
with failure probability $6\delta$.
If $x$ and $y$ are not unit vectors, then $|x^\top \bS^\top R^r \bS y|\le 4\epsilon\norm{x}\norm{y}$.
\end{lemma}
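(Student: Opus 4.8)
The plan is to reduce the bilinear form $x^\top\bS^\top R^r\bS y$ to a bounded number of instances of Lemma~\ref{jl block}, one per block of a carefully chosen partition of the $m$ coordinates. Since $R$ cyclically shifts coordinates (Def.~\ref{def PRL}), $R^r\bS$ is just $\bS$ with its rows cyclically shifted by $r$, so writing $\pi$ for the cyclic shift of $[m]$ by $r$ (row $i$ of $R^r\bS$ is row $\pi(i)$ of $\bS$), we have $x^\top\bS^\top R^r\bS y = \sum_{i=1}^m (\bS x)_i(\bS y)_{\pi(i)}$. Each summand involves row $i$ of $\bS$ (through $(\bS x)_i$) and the \emph{distinct} row $\pi(i)\neq i$ of $\bS$ (through $(\bS y)_{\pi(i)}$), so it has mean zero; the obstacle is that the summands are dependent — terms $i$ and $i'$ share a row of $\bS$ exactly when $i'\in\{\pi(i),\pi^{-1}(i)\}$. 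It also suffices to prove the bound for unit $x,y$: for general vectors, apply the unit case to $x/\norm{x}$ and $y/\norm{y}$ and multiply through by $\norm{x}\norm{y}$.

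Next I would partition $[m]$ into a constant number of sets $A_1,\dots,A_c$ such that for each $t$ the index sets $A_t$ and $\pi(A_t)$ are disjoint. This is exactly a proper $c$-colouring of the ``conflict graph'' that joins $i$ to $\pi(i)$; that graph is a disjoint union of cycles (each of length $m/\gcd(r,m)$), so $c=2$ works whenever those cycles have even length — in which case one can take $\lvert A_1\rvert=\lvert A_2\rvert=m/2$ with $A_2=\pi(A_1)$ — and $c=3$ works in general. (The hypothesis $r\le m/2$ guarantees $\pi$ is a genuine nonzero shift and, via $R^{m-r}=(R^r)^\top$, shows there is no loss in restricting to $r\le m/2$.) For a fixed block $A_t$, the partial sum $\sum_{i\in A_t}(\bS x)_i(\bS y)_{\pi(i)}$ depends on the rows of $\bS$ indexed by $A_t$ only through the ``$x$-factor'' and on the disjoint rows indexed by $\pi(A_t)$ only through the ``$y$-factor''; since the rows of a sign matrix are independent, this partial sum equals $\tfrac1m\, x^\top (S')^\top S'' y$ for two \emph{independent} sign matrices $S',S''$ with $\lvert A_t\rvert$ rows — a rescaled copy of the situation in Lemma~\ref{jl block}.

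Then I would apply Lemma~\ref{jl block} blockwise. After rescaling $S',S''$ to $\tfrac1{\sqrt{\lvert A_t\rvert}}S'$ and $\tfrac1{\sqrt{\lvert A_t\rvert}}S''$ they are honest scaled sign matrices, and since $\lvert A_t\rvert=\Theta(m)$ they satisfy a JL-type property (with parameters changed by at most a constant factor); Lemma~\ref{jl block} then bounds each $\bigl\lvert\sum_{i\in A_t}(\bS x)_i(\bS y)_{\pi(i)}\bigr\rvert$ by $O(\epsilon)$ with failure probability $3\delta$. Summing over the (two, generically) blocks and taking a union bound gives $\lvert x^\top\bS^\top R^r\bS y\rvert\le 4\epsilon$ for unit $x,y$ with failure probability $6\delta$, where $4=2\cdot 2$ and $6=2\cdot 3$ come from the two blocks (the factor $2$ per block absorbing the $\sqrt2$ scaling loss from passing to $m/2$ rows). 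Undoing the normalisation yields the $4\epsilon\norm{x}\norm{y}$ bound for general $x,y$.

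The main obstacle is the partition step together with the constant bookkeeping: one must choose $A_1,\dots,A_c$ so that within each block the left- and right-factor rows are disjoint (giving genuine independence), and check that two blocks of size $m/2$ suffice in the relevant case so the final constant is $4\epsilon$ rather than something larger — the edge cases (odd cycle lengths) need a third, smaller block or a small correction term. Everything after the partition is a mechanical invocation of Lemma~\ref{jl block} plus union bounds.
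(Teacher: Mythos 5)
Your proposal is correct and takes essentially the same approach as the paper: the paper likewise splits the row indices into two groups (consecutive blocks of size $r$, assigned alternately) so that within each group the rows feeding $(\bS x)_i$ and those feeding $(R^r\bS y)_i$ are disjoint and hence independent, then applies Lemma~\ref{jl block} to each group and union-bounds to get $4\epsilon$ with failure probability $6\delta$. The odd-cycle/wrap-around edge case and the rescaling of the half-size submatrices that you flag are glossed over in the paper's own proof as well, so your argument matches it in both structure and level of rigor.
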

\begin{proof}
We have $(R^r \bS)_{i*} = \bS_{[(i+r-1)\%m + 1]*}$, where $j\%m$ denotes $j$ modulo $m$.
Consider the division of the rows of $\bS$ and of $R^r \bS$ into blocks of size~$r$.
The first $r$ rows of $\bS$ and the first $r$ rows of $R^r \bS$ comprise together the first and second blocks of $\bS$,
since $\{(i+r-1)\%m + 1 | i\in[r]\}=\{1+r, 2+r,\ldots,r+r\}$.
The second block of $r$ rows of $\bS$ and of $R^r\bS$ comprise together the second and third block of rows of $\bS$,
and so on.

As a consequence, $\bS$ and $R^r\bS$ can each be split into two matrices $\bS_1,\bS_2$ and $(R^r\bS)_1, (R^r\bS)_2$, where $\bS_1$ comprises the odd-numbered blocks of $\bS$, and $(R^r\bS)_1$ comprises the corresponding blocks of $R^r\bS$, such that the rows of $\bS_1$ and the rows of $(R^r\bS)_1$ are chosen from disjoint sets of rows of $\bS$, and similarly for $\bS_2$ comprising the even-numbered blocks of $\bS$, and $(R^r\bS)_2$ comprising the corresponding blocks of $R^r\bS$.
That is, splitting the entries of $x$ and $y$ in the same way into vectors $x_1,x_2,y_1,y_2$, we have that 
\[
x^\top \bS^\top R^r \bS y = x_1^\top \bS_1^\top (R^r\bS)_1y_1 + x_2^\top \bS_2^\top (R^r\bS)_2 y_2.
\]
Since the entries of $\bS_1$ were chosen independently from those of $(R^r\bS)_1$, we can apply Lemma \ref{jl block} to have that $|x_1^\top \bS_1^\top (R^r\bS)_1y_1| \le 2\epsilon$, with failure probability $C\delta$, with a similar statement for 
$x_2^\top \bS_2^\top (R^r\bS)_2 y_2$. Adding these bounds, and taking a union bound on the failure probability, yields the result for unit vectors. The claim about non-unit vectors follows immediately.
\end{proof}

\begin{definition}\label{def L,1 norm}
    For $v\in \reals^{Ld}$ encoding a sequence $v_{(0)}, v_{(1)},\ldots v_{(L-1)} \in \mathbb{R}^d$, as in Def.~\ref{def PRL}, define
    \[\norm{v}_{L,1}\equiv \sum_{0\le j<L} \norm{v_{(j)}}\]
\end{definition}

\restatetheorem{thm perm}
Note that bounds for set difference and intersection follow by slight variation of Corollaries~\ref{cor set diff} and \ref{cor JLAMM}, and sharper ones using $\norm{v}_{L,1}$ instead of $\norm{v}$.
\begin{proof}
Recall that $R^\top R = I$ and $R^\top = R^{-1}$. 
Assuming the approximation bounds of Lemma~\ref{lem perm dot} hold for several instances, we have that
\begin{align*}
\norm{\bS_{R,L}v}^2
       & = \norm{\sum_{0\le j <L} R^j \bS v_{(j)}}^2
    \\ & = \sum_{0\le j,j'<L} v_{(j')}^\top \bS^\top (R^\top)^{j'} R^j \bS v_{(j)}
    \\ & = \sum_{0\le j<L} v_{(j)}^\top \bS^\top \bS v_{(j)} + 2\sum_{0\le j'<j<L} v_{(j')}^\top \bS^\top R^{j-j'} \bS v_{(j)}
    \\ & \le \norm{v}^2(1\pm\epsilon) + 2 \sum_{0\le j'<j<L} \epsilon \norm{v_{(j')}}*\norm{v_{(j)}}
    \\ & \le \norm{v}^2 + 3 \epsilon \left(\sum_{0\le j<L} \norm{v_{(j)}}\right)^2,
\end{align*}
as claimed.
Here we use that for vector $z\in\reals^L$, $\norm{z}_2\le\norm{z}_1\le\norm{z}_2\sqrt{L}$.
This also implies the last inequality of the theorem statement.
The failure probabilities of Corollary~\ref{cor JLAMM} for the diagonal terms, and Lemma~\ref{lem perm dot} for the off-diagonal terms, imply the overall bound via a union bound.
The last line follows using Lemma~\ref{lem JL}.
\end{proof}

\subsubsection{Tighter Bounds for Characteristic Vectors}

The above theorem applies to any sequence of vectors, but better bounds are obtainable for characteristic vectors, in some cases: consider $v$ and $w$ the characteristic vectors of distinct symbols.
Then $\bS v$ and $R\bS w$ are entry-wise independent, because $v$ and $w$ select different columns of $\bS$, and the rotation $R\bS$ doesn't change this.
Next we apply this idea more generally.

\restatetheorem{thm perm symbols}

Note that here, each $v_{(j)}$ is the characteristic vector of a collection of symbols, and $K$ is the maximum number of times that some symbol is represented, adding up appearances over the given vectors.
Bounds for set difference and intersection follow by slight variation of Corollaries~\ref{cor set diff} and \ref{cor JLAMM}.

\begin{proof}%[Proof of Theorem~\ref{thm perm symbols}]
Our proof of this lemma uses a corollary of McDiarmid's Inequality (Theorem \ref{thm:mcdiarmid}).
The product $\bS_{R,L} v$ is a sum of vectors of the form $R^j \bS_{*i}$, for entries $i$ of $v_{(j)}$ equal to one. 
Recall that $v_{(j)} \in \bbR^d$ is the $j$-th block of $v$, and that we defined $K = \norm{\sum_{0\le j<L} v_{(j)}}_\infty$.
We can decompose $v$ into a sum $v = \sum_{k\in [K]} x^k$, where each $x^k\in\{0,1\}^{Ld}$, such that no $j,j'\in\supp(x^k)$ has $j\mod d = j'\mod d$.
For each $k \in K$, the vector $\sqrt{m} \bS_{R,L} x^k$ is a sum of independent sign vectors, since each entry of $1$ in $x^k$ will select a different column of $\bS$,

and so Lemma~\ref{lem JL} applies, and
$\norm{\bS_{R,L} x^k}^2 = (1\pm\epsilon)\norm{x^k}^2$ with failure probability~$\delta$.

We will show next for $k,k'\in [K]$ with $k\ne k'$, that $D_{k,k'} \equiv x^{k'\,\top} \bS_{R,L}^\top \bS_{R,L} x^k$ concentrates around its expectation $x^{k'\,\top} x^k$.
To do this, we will apply McDiarmid's Inequality, (Corollary~\ref{cor:high-prob-mcdiarmid}).
First, we note that each entry $(\bS_{R,L} x^k)_i$ is a sum of independent Rademacher values, scaled by $1/\sqrt{m}$, and so by Theorem~\ref{thm:rademacher} its square is at most $\frac{1}{m}\norm{x^k}^2\log(2/\delta_1)$ with failure probability $\delta_1$, for $\delta_1$ to be determined.

Let $\cal E$ be the event that all entries of $\bS_{R,L} x^k$ and $\bS_{R,L} x^{k'}$ satisfy this bound; this holds with failure probability at most $2\delta_1 m$.
To apply Theorem~\ref{thm:mcdiarmid}, we need to bound, for each entry of $\bS$, the change in $D_{k,k'}$ that can occur when the entry takes one of its values $+1$ or~$-1$.
Due to the construction of $x^k$ and $x^{k'}$, the entry affects only at most one entry $(\bS_{R,L} x^k)_i$ and at most one entry $(\bS_{R,L} x^{k'})_{i'}$.

The difference in $D_{k,k'}$ due to a change in an entry of $\bS$ contributing to $(\bS_{R,L} x^k)_i$ is at most $\frac{2}{\sqrt{m}}(\bS_{R,L} x^{k'})_i$; if event $\cal E$ holds, its square is at most
$\frac{4}{m^2}\norm{x^{k'}}^2\log(2/\delta_1)$.
Similarly, the difference due to a change in an entry contributing to $\bS_{R,L} x^{k'}_i$ is at most $\frac{4}{m^2}\norm{x^{k}}^2\log(2/\delta_1)$.
Putting these together, the squared change in $D_{k,k'}$ due to an entry that appears on both sides is at most
\begin{align*}
(2|(\bS_{R,L} x^{k'})_i| &+ 2|(\bS_{R,L} x^k)_{i'}|)^2 \\
&= 4 \left( |(\bS_{R,L} x^{k'})_i|^2 + 2 |(\bS_{R,L} x^{k'})_i \cdot (\bS_{R,L} x^k)_{i'}| + |(\bS_{R,L} x^k)_{i'}|^2 \right) \\
&\le 8|(\bS_{R,L} x^{k'})_i|^2 + 8|(\bS_{R,L} x^k)_{i'}|^2 \\
&= \frac{8}{m^2} \cdot \log(2/\delta_1) (\norm{x^{k'}}^2 + \norm{x^{k}}^2).
\end{align*}
The third line follows from the AM-GM inequality. 

To obtain the total of the squared difference bounds, as needed in Corollary~\ref{cor:high-prob-mcdiarmid},
note that a given entry of $\bS$ may be one of $m\norm{x^k}^2$ relevant entries of $\bS$ on one side, and may be one of $m\norm{x^{k'}}^2$ relevant entries on the other.
Suppose we ``charge'' an appearance of an entry among the $m\norm{x^k}^2$ entries contributing to  $\bS_{R,L}x^k$ an amount of $\frac{8}{m^2}\log(2/\delta_1) \norm{x^{k'}}^2$, and an appearance of an entry among among the $m\norm{x^{k'}}^2$ contributing to $\bS_{R,L}x^{k'}$ an amount of $\frac{8}{m^2}\log(2/\delta_1) \norm{x^{k}}^2$.
Then from the above, the contribution of every entry, and its one or two appearances, to the sum of squares of per-entry changes to $D_{k,k'}$, is accounted for.
That sum of squares, as needed for Cor.~\ref{cor:high-prob-mcdiarmid}, is therefore at most
\begin{align*}
\frac{16}{m} & \log(2/\delta_1)
    \norm{x^k}^2\norm{x^{k'}}^2,
\end{align*}
assuming still event $\cal E$.

To apply Cor.~\ref{cor:high-prob-mcdiarmid}, we have upper bound $M=\norm{x^k}^2\norm{x^{k'}}^2$, and $\delta$ of that corollary at most $2\delta_1 m$.
We note that since $k\ne k'$, we have
$\bbE[D_{k,k'}]=0$.
So
\[
\Pr(D_{k,k'}\ge t + 2\delta_1 m \norm{x^k}^2\norm{x^{k'}}^2)
    \le 2\exp\left(-\frac{t^2m}{16\log(2/\delta_1) \norm{x^k}^2\norm{x^{k'}}^2}\right) + 2\delta_1 m.
\]
Setting $\delta_1 \le \eps\delta/4K^2m$ for target failure probability $\delta$, and
$t=\frac12\eps\norm{x^k}\norm{x^{k'}}$,
we get
\[
\Pr(D_{k,k'}\ge \eps \norm{x^k}^2\norm{x^{k'}}^2)
    \le 2\exp\left(- \frac{\eps^2 m}{16\log(2/\delta_1)}\right) + \eps\delta/2K^2,
\]
so the failure probability is less than $\delta/K^2$ for $m=O(\eps^{-2}\log(K/\eps\delta))$.
(As is not hard to verify.)

Suppose we have this bound for all $k\ne k'$, which by union bound holds with failure probability at most~$\delta$.
We have
\begin{align*}
\norm{\bS_{R,L}v}^2 & = \sum_k (1\pm\eps)\norm{x^k}^2
    \pm \frac12\eps\sum_{k\ne k'} \norm{x^k}\norm{x^{k'}}
    \\ & = \norm{v}^2(1\pm\eps) \pm \eps\left(\sum_k \norm{x^k}\right)^2
    = \norm{v}^2(1\pm\eps) \pm K\eps\norm{v}^2,
\end{align*}
and the theorem follows, after folding $K$ into $\eps$ and adjusting constants.
\end{proof}

\subsection{Binding}\label{subsec mapi bind}

\subsubsection{Bundles of pair-wise bindings}

As discussed in the introduction, a bundle of pairwise bindings can be given as a vector $\bS\bnd{2}v$, where
$v\in\{0,1\}^{\binom{d}{2}}$, and $\bS\bnd{2}\in \{\pm \frac1{\sqrt{m}}\}^{m\times \binom{d}{2}}$ is defined in Def.~\ref{def bund bind}.
We have
$\bbE[\norm{\bS\bnd{2}v}^2] = \norm{v}^2$, and need to show that 
$\norm{\bS\bnd{2}v}^2 \in \norm{v}^2(1\pm\eps)$ with high probability.

\restatetheorem{thm:two-binding}

For our analysis we will use a graph $G(V,E)$ of interactions in $\bS\bnd{k}$ of columns in $\bS$.
Index the columns of $\bS\bnd{k}$ by the columns of $\bS$ from which they came, so $\bS\bnd{k}_{*,\{i_1,i_2,\ldots,i_k\}}$ denotes the column of $\bS\bnd{k}$ that is the binding of columns $\bS_{*,i_1},\bS_{*,i_2},\ldots \bS_{*,i_k}$ of $\bS$.
Also use this indexing scheme for $v\in\bbR^{\binom{d}{k}}$.
Now for given $v\in\bbR^{\binom{d}{k}}$, the graph (or hyper-graph, for $k>2$) $G$ has
edges $E=\{e\subset [\binom{d}{k}] \mid v_e \ne 0\}$, and
vertices $V=\{i\mid i\in e \in E\}$.

We can also include singleton atomic vectors in the bundling, by including the all-ones vector $\vec{1}_m$ as an atomic vector, bound to each singleton.
Note that approximate orthogonality holds also for these bindings, and the dimension is $\binom{d+1}{2}$, within a constant factor of $\binom{d}{2}$.

We prove Theorem~\ref{thm:two-binding} using this graphical view of $\bS\bnd{2}v$ in the following way.
For given $\ell\in [m]$ we can write the Rademacher variables contributing to nonzero summands of $\bS_{\ell}\bnd{2}v$ as
$X_\ell = (x_1, x_2,\ldots,x_{|V|})$, for $V$ the vertex set of the associated graph $G$.
Also, we can express $\bS_{\ell}\bnd{2}v$ as a function
\begin{equation}\label{eq b func}
\sqrt{m} \cdot \bS_{\ell,*}\bnd{2}v = f(X_\ell),\mathrm{\ where\ } f(x) = \sum_{\{i, j\} \in E} x_i x_j.
\end{equation}

\begin{proof}[Proof of Theorem \ref{thm:two-binding}]
For $i\in [|V|]$, let function $g_{i}(x_1,\ldots,x_{|V|})$ take value $x_i(1_{\{i\}\in E} + \sum_{j: \, (i, j) \in E} x_j)$.
(Here $1_{\{i\}\in E}$ equals one when $\{i\}\in E$, and zero otherwise.)
Then, for any $\ell\in [m]$, flipping the sign of $x_i$ changes any $f(X_\ell)$ by no more than $2|g_{i}(X_\ell)|$.
By Theorem \ref{thm:rademacher}, with $a_1, \ldots, a_{\deg_G(i)} = 1$ and $t = \sqrt{6\log(|E|m/\delta)}$, we have 
\begin{equation}\label{eq f flip}
    \Pr \left[ 2|g_{i}(X_\ell)| > 2 \sqrt{6\deg_G(i) \cdot \log(2|E|m/\delta)} \right] < \frac{\delta^3}{8|E|^3 m^3}.
\end{equation}
Let $\cE$ be the event that these upper bounds hold for all $\ell \in [m], i \in V$.
Then taking the union bound over all $m|V|$ such $g_i(X_\ell)$, event $\cE$ occurs with failure probability at most
\[
\delta_\cE \equiv \frac{\delta^3}{4|E|^2 m^2},
\]
noting that $2|E|\ge |V|$, since each vertex occurs in some edge, and at most two vertices occur in one edge.

For $\ell\in [m]$, let function $\chf(X_\ell)$ be equal to $f(X_\ell)$ when $\cE$ holds, and be zero otherwise.
By McDiarmid's Inequality, Theorem~\ref{thm:mcdiarmid},

\begin{align*}
\Pr[|\chf(X_\ell) - \bbE[\chf(X_\ell)]| \ge t]
       & \le 2 \exp\left(\frac{-2t^2}{\sum_{i\in [|V|]} (2 \sqrt{6\deg_G(i) \cdot \log(|E|m/\delta)})^2} \right)
    \\ & = 2\exp\left( -\frac{t^2}{12|E|\log(|E|m/\delta)} \right).
\end{align*}
From Proposition~2.5.2 of \cite{vershynin_high-dimensional_2018}, therefore, for all $\ell\in [m]$, $\chf(X_\ell)- \bbE[\chf(X_\ell)]$ is a sub-Gaussian random variable, with proxy variance within a constant factor of at most
\[
\sigma^2_\chf \equiv 12|E|\log(|E|m/\delta).
\]
Let $\bar f \equiv \bbE[\chf(X)]$, for Rademacher $X$.
Since $\bbE[f(X)] = 0$, we have
\begin{align}
|\bar f|
       & = |\,\bbE[\chf(X)\mid \cE]\Pr[\cE] + \bbE[\chf(X)\mid \neg\cE]\Pr[\neg\cE]\,|
    \nonumber \\ & = |\,\bbE[f(X)\mid \cE]\Pr[\cE] + 0\,|
    \nonumber \\ & = |\,\bbE[f(X)] - \bbE[f(X)\mid \neg\cE]\Pr[\neg\cE]\,|
    \nonumber \\ & \le |\,0 - \bbE[f(X)\mid \neg\cE]\delta_\cE\,|
    \nonumber \\ & \le |E|\delta_\cE, \label{eq bar f}
\end{align}
since $|f(X)| \le |E|$.

Let $\chF\equiv \frac1m\sum_\ell \chf(X_\ell)^2$, and $\chF_c\equiv \frac1m\sum_\ell (\chf(X_\ell) - \bar f)^2$.
Using Exercise~2.7.10 and Corollary~2.8.3 of \cite{vershynin_high-dimensional_2018}, it follows that
\begin{align*}
\Pr[|\chF_c - \bbE[\chF_c]| > t]
     \le 2\exp\left(-c m \min\left(\frac{t^2}{\sigma_\chf^4}, \frac{t}{\sigma_\chf^2}\right)\right).
\end{align*}
For $t=\eps|E|/2<\sigma_\chf^2$, the upper bound is
$2\exp(-cm\eps^2/\log(|E|m/\delta)^2)$, for a different constant $c$, so there is $m=O(\eps^{-2}\log(|E|/\eps\delta)^3)$ such that
\begin{equation}\label{eq chFc tail}
    \Pr[|\chF_c - \bbE[\chF_c]| > \eps|E|/2] \le \delta/2.
\end{equation}

We have a tail estimate for $\chF_c$, but not for $\chF$ or 
$\norm{P\bnd{2}v}^2 = \frac1m\sum_\ell f(X_\ell)^2$.
We have
\begin{align*}
\chF_c - \bbE[\chF_c]
       & = \frac1m\sum_\ell (\chf(X_\ell) - \bar f)^2 - \bbE[\chf(X_\ell - \bar f)^2]
    \\ & = \chF - \bbE[\chF] + \frac1m\sum_\ell -2 \chf(X_\ell)\bar f + \bar f^2
                        + 2 \bbE[\chf(X_\ell) \bar f] - \bar f^2
    \\ & = \chF - \bbE[\chF] + 2\bar f^2 - \frac2m\bar f\sum_\ell \chf(X_\ell).
\end{align*}
Using \eqref{eq bar f} and $|f(X)|\le |E|$, we have
\begin{equation}\label{eq chFc}
    |\chF - \bbE[\chF]| \le |\chF_c - \bbE[\chF_c]| + 4\delta_\cE |E|^2.
\end{equation}

We can apply Lemma~\ref{lem f clip} to $\bS\bnd{2}v$ and $\chF$, as $f$ and $g$ in that lemma, using that $\chF=\norm{\bS\bnd{2}v}^2$ with failure probability at most $\delta_\cE$, and that $\norm{\bS\bnd{2}v}^2\le |E|^2$.
Using $t = \eps\norm{v}^2/2 + 4\delta_\cE |E|^2$ in that lemma, together with \eqref{eq chFc} and \eqref{eq chFc tail}, we have
\begin{align*}
\Pr[|\norm{\bS\bnd{2}v}^2 - \norm{v}^2| > \eps\norm{v}^2/2 + 4\delta_\cE |E|^2 + \delta_\cE |E|^2]
       & \le \Pr[|\chF - \bbE[\chF]| > \eps\norm{v}^2/2 + 4\delta_\cE |E|^2] 
    \\ & \le \Pr[|\chF_c - \bbE[\chF_c]| > \eps\norm{v}^2/2] + \delta_\cE
    \\ & \le \delta/2 + \delta_\cE \le \delta.
\end{align*}
The theorem follows, using $5\delta_\cE |E|^2 = 5\delta^3/4m^2 < \eps\norm{v}^2/2$, possibly adjusting $m$ by a constant factor.

\end{proof}

\subsubsection{Bundles of Bindings of Multiple Vectors}

We can again hope to use Corollary \ref{cor:high-prob-mcdiarmid} in this setting. 
However, to obtain the appropriate constants $c_1, \ldots, c_n$ to use in Corollary \ref{cor:high-prob-mcdiarmid}, we cannot directly apply Theorem~\ref{thm:rademacher}; it is only a base case, for when $k = 2$.
We proceed via induction on the maximum binding size. 

\begin{lemma} \label{lem:hyperedge-binding}
Given hypergraph $G = (V, E)$, let $f(x_1, \ldots, x_{|V|}) = \sum_{e \in E} \prod_{i \in e} x_i$, and let $k$ be the maximum hyperedge size. 
Let $X = (X_1, \ldots, X_{|V|})$ be a tuple of independent random Rademacher variables, i.e., each takes values $\{\pm 1\}$ with probability $\frac{1}{2}$. 
For $\delta_0 \leq O\left(1 / \sqrt{|E|} \right)$:
\begin{align*} 
    \Pr\left[ | f(X) | \geq 2 \sqrt{k! \cdot |E|} \log^{k / 2} \left( k /\delta_0 \right) \right] &\leq \delta_0 
\end{align*}
\end{lemma}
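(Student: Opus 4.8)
The plan is to induct on $k$, the maximum hyperedge size, following the template already used for the $k=2$ case in the proof of Theorem~\ref{thm:two-binding}. For the base case $k=1$, $f(X)=\sum_{\{i\}\in E}X_i$ is a sum of at most $|E|$ of the independent Rademacher variables, so applying Theorem~\ref{thm:rademacher} to $\pm f$ (coefficient vector the indicator of the singleton edges, of $\ell_2$-norm $\le\sqrt{|E|}$) with $t=2\sqrt{\log(1/\delta_0)}$ gives $\Pr[|f(X)|\ge 2\sqrt{|E|\log(1/\delta_0)}]\le 2\exp(-2\log(1/\delta_0))=2\delta_0^2\le\delta_0$, using $\delta_0\le 1/2$; this is exactly the claimed bound for $k=1$ (there $k/\delta_0=1/\delta_0$).

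For the inductive step, for each vertex $i\in V$ introduce the \emph{link polynomial} $g_i(x)=\sum_{e\in E:\,i\in e}\prod_{j\in e\setminus\{i\}}x_j$. It does not involve $x_i$, satisfies $f(x)=x_ig_i(x)+(\text{terms free of }x_i)$, and hence flipping the sign of $x_i$ changes $f$ by exactly $2g_i(x)$. After peeling off the at-most-one constant term coming from a singleton edge $\{i\}$, $g_i$ is itself of the form covered by the lemma, on a hypergraph with at most $\deg_G(i)$ edges and maximum edge size at most $k-1$; so by the inductive hypothesis, applied to every $g_i$ with a common failure parameter $\delta_1$ and combined with a union bound over the $|V|\le\sum_i\deg_G(i)\le k|E|$ vertices, with probability at least $1-|V|\delta_1$ we have, simultaneously for all $i$,
\[
|g_i(X)|\le c_i:=1+2\sqrt{(k-1)!\,\deg_G(i)}\,\log^{(k-1)/2}\big((k-1)/\delta_1\big).
\]
Conditioned on this event $f$ has bounded differences with constants $2c_i$, and since $\sum_i\deg_G(i)=\sum_{e\in E}|e|\le k|E|$, we get $\sum_i(2c_i)^2\le C_1\,k!\,|E|\,\log^{k-1}\big((k-1)/\delta_1\big)$ for an absolute constant $C_1$. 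We then invoke the high-probability McDiarmid bound, Corollary~\ref{cor:high-prob-mcdiarmid}, with $M=\sup|f|\le|E|$ and failure budget $|V|\delta_1$, noting $\bbE[f(X)]=0$ since every monomial over a nonempty edge has mean zero. Taking $\delta_1=\delta_0/\poly(k,|E|)$ small enough that $|V|\delta_1\le\delta_0/3$ and the shift $|V|\delta_1 M\le\delta_0/3$, and then choosing $t$ equal to the target threshold minus this shift, the exponential term in the Corollary is at most $\delta_0/3$, and the conclusion follows.

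The delicate part — and the step I expect to be the main obstacle — is the bookkeeping of constants and logarithms through the recursion. The union bound over $|V|\le k|E|$ link polynomials, together with the additive shift in Corollary~\ref{cor:high-prob-mcdiarmid}, forces $\log(1/\delta_1)=\Theta(\log(k|E|/\delta_0))$, and this is raised to the power $(k-1)/2$. The hypothesis $\delta_0=O(1/\sqrt{|E|})$ is precisely what makes this manageable: it gives $\log|E|=O(\log(1/\delta_0))$, hence $\log(k|E|/\delta_0)=O(\log(k/\delta_0))$, so the spurious $|E|$ inside the logarithm can be absorbed at every level of the recursion (and the hypothesis $\delta_1=O(1/\sqrt{\deg_G(i)})$ needed to apply the inductive hypothesis becomes automatic). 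One must also confirm that the factor gained per level stays within the budget afforded by the statement — the gain $\sqrt{k}=\sqrt{k!/(k-1)!}$ from $\sum_i\deg_G(i)\le k|E|$ accounts for the $\sqrt{k!}$, and the freedom in the absolute constant hidden in ``$O(1/\sqrt{|E|})$'' absorbs the remaining constant factors that accumulate across levels. (As an alternative that sidesteps the recursion, one can use that $\bbE[f(X)^2]=|E|$ together with Rademacher hypercontractivity $\norm{f}_{2q}\le(2q-1)^{k/2}\norm{f}_2$ and Markov's inequality to get $\Pr[|f(X)|\ge e\sqrt{|E|}\,\log^{k/2}(1/\delta_0)]\le\delta_0$ for $\delta_0\le 1/e$, which is of the same shape; but the inductive argument keeps the proof within the paper's established toolkit.)
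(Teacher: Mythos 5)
Your overall strategy is the same as the paper's: induct on $k$, use the link polynomials $g_i$ as the bounded-difference constants, invoke Corollary~\ref{cor:high-prob-mcdiarmid}, note $\bbE[f(X)]=0$, and use the hypothesis $\delta_0=O(1/\sqrt{|E|})$ to dispose of the additive shift (your $k=1$ base case via Theorem~\ref{thm:rademacher} is fine and, if anything, cleaner than the paper's appeal to Theorem~\ref{thm:two-binding}). The genuine gap is exactly in the step you flag as delicate, and your proposed fix does not work. Because you (correctly, as far as the hypothesis of Corollary~\ref{cor:high-prob-mcdiarmid} is concerned) union bound over all $|V|\le k|E|$ link polynomials, you must run the inductive hypothesis at $\delta_1=\delta_0/\poly(k,|E|)$, so each $c_i$ carries $\log^{(k-1)/2}\bigl(\Theta(k|E|/\delta_0)\bigr)$. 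The hypothesis $\delta_0\le c_0/\sqrt{|E|}$ only yields $\log(k|E|/\delta_0)\le C\log(k/\delta_0)$ with $C\approx 3$, and no choice of the absolute constant $c_0$ improves this uniformly in $|E|$ (when $\delta_0\asymp|E|^{-1/2}$ the ratio tends to $3$). That constant sits \emph{inside} a logarithm raised to the power $(k-1)/2$, so it exits as a multiplicative factor $C^{(k-1)/2}$ on the deviation threshold at every level. Since the statement you must reproduce at level $k$ has the fixed constant $2$ and exactly $\log^{k/2}(k/\delta_0)$, the induction cannot close: if you weaken the statement to a constant $A_k$, the recursion forces $A_k\gtrsim C^{(k-1)/2}A_{k-1}$, i.e.\ $A_k=C^{\Theta(k^2)}$, strictly weaker than the lemma. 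The sentence claiming that ``the freedom in the absolute constant hidden in $O(1/\sqrt{|E|})$ absorbs the remaining constant factors'' is where the argument breaks.

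For comparison, the paper keeps the logarithm at exactly $\log(k/\delta_0)$ by applying the inductive tail bound to each $g_i$ with failure probability $(k-1)\delta_0/k$ and feeding that same number into Corollary~\ref{cor:high-prob-mcdiarmid} as $\delta$; it never pays the union bound over the $|V|$ coordinates that the corollary's simultaneous bounded-differences event requires, and it uses $\delta_0=O(1/\sqrt{|E|})$ only to absorb the additive shift $\delta M=(k-1)\delta_0|E|/k$ into the slack $(2-\sqrt{2})\sqrt{k!|E|}\log^{k/2}(k/\delta_0)$. So you have put your finger on a real delicacy that the paper's own write-up glosses over, but your resolution of it does not deliver the stated bound; the version you can actually prove has an extra $C^{O(k)}$ (compounding) loss, which would also perturb the constants quoted in Corollary~\ref{cor:hyperedge-binding}. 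Two smaller points: comparing the shift $|V|\delta_1 M$ to the probability $\delta_0/3$ is the wrong comparison---the shift must be compared to the deviation threshold, which is precisely where the paper invokes the hypothesis on $\delta_0$; and your hypercontractivity aside likewise produces a threshold of the form $\sqrt{|E|}\,(C\log(1/\delta_0))^{k/2}$ with an unspecified constant inside the logarithm, so it too proves a statement of the same shape rather than the literal lemma.
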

\begin{proof}
We proceed by induction on $k$. 
When $k = 2$, we are done by Theorem~\ref{thm:two-binding}.

To handle general $k$, we again apply the version of McDiarmid in Corollary \ref{cor:high-prob-mcdiarmid}. 
If we fix $(x_1, \ldots, x_{|V|})$, changing the value of $x_i$ again changes $f$ by at most $\sum_{e \in E : \, i \in E} \prod_{j \in E: \, j \neq i} x_j$. 
The expression $\sum_{e \in E : \, i \in E} \prod_{j \in E: \, j \neq i} x_j$ encodes a hypergraph $G_i = (V_i, E_i)$ with maximum edge size $k - 1$, so by induction, we may assume:
\[
\Pr \left[ \left| \sum_{e \in E : \, i \in E} \prod_{j \in E: \, j \neq i} x_j \right| \geq 2 \sqrt{(k - 1)! |E_i|} \log^{(k - 1)/2}\left(k/\delta_0 \right) \right] \leq \frac{(k - 1)\delta_0}{k}
\]
Now, we invoke Corollary \ref{cor:high-prob-mcdiarmid} with $c_i = 2 \sqrt{(k - 1)! |E_i|} \log^{(k - 1)/2}(k / \delta_0)$. 
\begin{align*}
    \sum_{i = 1}^{|V|} c_i^2 &= \sum_{i = 1}^{|V|} 4 (k - 1)! |E_i| \log^{k - 1} (k / \delta_0) \\
    &= 4 (k - 1)! \log^{k - 1} (k / \delta_0) \sum_{i = 1}^{|V|} |E_i| \\
    & \leq 4 (k - 1)! (\log^{k - 1} (k / \delta_0))  \cdot k |E| \\ 
    &= 4 k! \cdot |E| \log^{k - 1} (k / \delta_0)
\end{align*}
We can also set $t = \sqrt{2} \sqrt{k! \cdot |E|} \log^{k / 2} (k / \delta_0)$. Then, 
\[
    t^2 = 2 k! \cdot |E| \log^k (k / \delta_0) \geq \frac{1}{2} \log (k / \delta_0) \sum_{i = 1}^{|V|} c_i^2
\]
Finally, setting $\delta = \frac{(k - 1) \delta_0}{k}$, $M = |E|$, Corollary \ref{cor:high-prob-mcdiarmid} implies: 
\[
\Pr\left[ \left| f(X)] \right| \geq \sqrt{2} \sqrt{k! \cdot |E|} \log^{k / 2} (k / \delta_0) + \frac{(k - 1) \cdot |E| \cdot \delta_0}{k} \right] \leq \frac{\delta_0}{k} + \frac{(k - 1) \delta_0}{k}  \leq \delta_0
\]
To complete the proof, it suffices to show
\[
    \frac{(k - 1) \cdot |E| \cdot \delta_0}{k} \leq (2 - \sqrt{2}) \sqrt{k! \cdot |E|} \log^{k / 2} (k / \delta_0)
\]
However, given our upper bound on $\delta_0$, this holds for all values of $k$. 
\end{proof}

\restatecor{cor:hyperedge-binding}
Just as there was a graphical view of $\bS\bnd{2}v$, we have a natural way to view $\bS\bnd{k}v$ as a hypergraph $H = (V, E)$. 
For each $\ell \in [m]$, let $X_\ell = (x_1, x_2, \ldots, x_{|V|})$ be the Rademacher variables contributing to the summands of $\bS\bnd{k}_\ell,* v$, the $\ell$-th entry of $\bS\bnd{k} v$, and we can write
\[
f(X_\ell) \equiv \sqrt{m} \bS\bnd{k}_\ell = \sum_{(i_1, \ldots, i_k) \in E} x_{i_1} x_{i_2} \cdots x_{i_k}
\]
\begin{proof}
    We will follow the same approach to the proof of Theorem \ref{thm:two-binding}.
    Due to the similarity of the proofs, we will omit steps here, but we will keep track of the parameters and calculations that differ. 
    
    Fix $i \in [|V|]$, and define $g_i(x_1, \ldots, x_{|V|}) = x_i \sum_{e \in E: i \in e} \prod_{j \in e, j \neq i} x_j$. 
    For each coordinate $\ell \in [m]$, flipping the sign of $x_i$ will change $f(X_\ell)$ by at most $\sum_{e \in E: i \in e} \prod_{j \in e, j \neq i} x_j$. 
    Lemma \ref{lem:hyperedge-binding} gives us a high probability bound on the magnitude of $\sum_{e \in E: i \in e} \prod_{j \in e, j \neq i} x_j$: 
    \[
    \Pr \left[2|g_i(X_\ell)| > 2 \sqrt{(k - 1)! \cdot 3^{k - 1} \cdot \deg_G(i) \cdot \log^{(k - 1)}(k |E| m/\delta)} \right] \leq \frac{\delta^3}{4k |E|^3 m^3} 
    \]
    Choosing this upper bound on the probability ensures that we obtain the same $\delta_{\mathcal{E}}$ as we do in Theorem \ref{thm:two-binding}. 
    Defining $\chf(X_\ell)$ as before, Theorem \ref{thm:mcdiarmid} says
    \[
    \Pr \left[ |\chf(X_\ell) - \bbE[\chf(X_\ell)] | \geq t \right] \leq 2 \exp \left(-\frac{t^2}{k! \cdot 3^{k - 1} \cdot |E| \log^{(k - 1)}(k |E| m / \delta)} \right)
    \]
    The proxy variance of $\chf(X_\ell) - \bbE[\chf(X_\ell)]$ is now
    \[
    \sigma_{\chf}^2 \equiv k! \cdot 3^{k - 1} \cdot |E| \log^{(k - 1)}(k |E| m / \delta)
    \]
    Defining $\chF$ and $\chF_c$ the same way, and choosing large enough $m = O(\varepsilon^{-2} \cdot k!^2 \cdot 3^{2k} \cdot \log^{k + 1}(k|E|/(\varepsilon \delta)))$, we have
    \[
    \Pr\left[|\chF_c - \bbE[\chF_c]| > \varepsilon|E|/2 \right] \leq \delta/2
    \]
    The remainder of the proof follows without changes after we fix the value of $m$. 
    Regarding the dependence on $k$,
    \[
        \log(k!^2 3^{2k}) \le \log(2\pi k (k/e)^{2k} e^{1/6k} 3^{2k}) = 2k\log(3k/e) + O(\log k) = O(k\log k),
    \]
    using an inequality form of Stirling's approximation. The corollary follows.
\end{proof}

\section{Autocorrelation Associative Memories as Bundles of Robust Bindings}

As discussed in \S\ref{par HN}, this section provides a new analysis of Hopfield networks, and proposes a Hopfield variant that is a space-efficient VSA bundling operation.

\subsection{Analysis of Hopfield Networks via Concentration Bounds}\label{subsec Hop}

Our analysis is akin to that of \cite{mceliece_capacity_1987}, but uses concentration bounds instead of large deviation bounds holding in the limit.
The results are similar, in leading terms.

\restatetheorem{thm HN bind}

Since $m\ge (y^\top S_{*j})^2/\norm{y}_1$, a necessary condition here is that $m\ge 4n\log(2m/\delta)$.
When $y=S_{*j}$, $y^\top S_{*j}=m=\norm{y}_1$, and $m\ge 4n\log(2m/\delta)$ suffices.
There is
\[
m = (1+o(1))4n\log(2n/\delta)\mathrm{\ as\ } n/\delta \rightarrow\infty
\]
such that $m\ge 4n\log(2m/\delta)$.

\begin{proof}
Let $I_{-j}\in \bbR^{n\times n}$ denote the identity matrix with its $j$'th diagonal entry set to zero.
We have
\[
(SS^\top - nI_m)y
    = S_{*j}S_{*j}^\top y - y + (SI_{-j}S^\top - (n-1)I)y.
\]
The first coordinate of $(SI_{-j}S^\top - (n-1)I)y$
is $\sum_{\substack{j'\in [n]\\j'\ne j}} \sum_{1<j''\le m} S_{1j'} S_{j'j''} y_{j''}$,
which is a sum of at most $(n-1)(\norm{y}_1-1)\le n\norm{y}_1$ independent $\pm 1$ values.
By Theorem~\ref{thm:rademacher}, with failure probability at most $2\exp(-\alpha)$, this sum is bounded by
$\sqrt{2\alpha n\norm{y}_1}$ in magnitude.
For $\alpha = \log(2m/\delta)$, with failure probability $\delta$, every coordinate of $(SI_{-j}S^\top - (n-1)I)y$ is at most $\sqrt{2\alpha n\norm{y}_1}$.

Assuming this bound, if
$S_{*j}^\top y - 1 > \sqrt{2\alpha n\norm{y}_1}$,
then $S_{ij}S_{*j}^\top y - y_i$ agree in sign with $S_{ij}$, and will exceed $((SI_{-j}S^\top - (n-1)I)y)_i$ in magnitude, for all $i\in [m]$, and therefore $\signge(SS^\top y) = S_{*j}$.
For this it is enough if $S_{*j}^\top y/\sqrt{\norm{y}_1} > 2\sqrt{\alpha n} = 2\sqrt{n\log(2m/\delta)}$, as claimed.
\end{proof}

\paragraph{Relation to VSA: bundling of bindings}
Suppose $m$ is even, and we can write $S_{*j} = \twomat{x}{y}$, for $x,y\in\{\pm 1\}^{m/2}$.
Then assuming the above conditions, there is $m=O(n\log(n/\delta))$ so that the Hopfield output on input $\twomat{x}{0}$ or $\twomat{0}{y}$ will be $S_{*j}$.
In this special case, the Hopfield network does a bit more than a bundle of pairwise bindings can do: it returns the other vector of a binding, without performing membership tests over a dictionary of vectors (cleanup).
This should not be entirely surprising, since
$\twomat{x}{y}\twomat{x}{y}^\top = \left[\begin{smallmatrix}xx^\top & xy^\top \\ yx^\top & yy^\top\end{smallmatrix}\right]$, and the diagonal entries of $xy^\top$ and $yx^\top$ form the Hadamard product vector (i.e. element-wise product vector) $x\circ y$, which is the MAP-I binding of $x$ and $y$.

Looking at the weight matrix $W = \sum_{j = 1}^n S_{*j} S_{*j}^\top$, the two particular diagonals observed above contain exactly the MAP-I bundling of pairwise bindings of MAP-I atomic vectors, of dimension equal to half the net size.
Also, the outer product $xy^\top$ is the binding operator for $x$ and $y$ proposed by \cite{smolensky_tensor_1990}, and the relation of this binding operator to the MAP-I binding as a reduced representation is well-known via \cite{frady_variable_2021}.

As mentioned in the introduction, much of this discussion, including the weight matrix as a sum of outer products of the input vectors, the fixed-point condition for representation, the mapping to neural networks, and reconstruction from erasures, goes back at least to \cite{nakano_associatron-model_1972, kohonen_correlation_1972,  anderson_simple_1972}.

\paragraph{Thinning the weight matrix}
By Theorem~\ref{thm HN bind}, it suffices if $\norm{y}_1 \ge 2n\log(2m/\delta)$ for the theorem's conclusion to apply, so that the $S_{*j}$ matching $y$ can be reconstructed.
That is, for diagonal matrix $V\in\{0,1\}^{m\times m}$, if $\norm{Vy}_1\ge 2n\log(2m/\delta)$ is large enough, $\signge(SS^\top Vy) = S_{*j}$.
We could regard this, for given $V,y$, as $\signge(Wy)$ for a matrix $W\equiv SS^\top V$ with $\norm{V}_F^2$ nonzero columns: we erase columns of $SS^\top$, not entries of~$y$.
That is, if $n$ vectors are to be stored, but the vector dimension is fixed at larger than $m\gg 2n\log(2m/\delta)$, the necessary number of stored entries is smaller than $m^2$.

\paragraph{VSAs, Autocorrelation Associative Memories, and modern Hopfield nets.}
A different neural network model was proposed in recent years by \cite{krotov_dense_2016}, sometimes called a \emph{modern} Hopfield network, which also uses a dynamic process to produce its output.
Here the output of an iteration is more ``peaked;'' for example, a version by \cite{demircigil_model_2017} uses softmax when computing the next iterate.
This model has a higher capacity than classical associative memories using sums of outer products.
However, it stores all its input vectors explicitly, so that the main question is not whether the given vectors are \emph{represented} (since they are stored), but under what circumstances the dynamic process \emph{converges}.
The number of entries stored in this model is $\Theta(mn)$, where (classic) associative networks store $\Theta(m^2)$, and VSAs store $\Theta(m)$.
The different amounts of storage imply different capabilities:
\begin{itemize} \setlength{\itemsep}{-2pt}
\item The modern Hopfield model gives a high-capacity way to do maximum inner product search on a set of vectors, using a neural network formalism.
\item The associatron/original Hopfield model supports the reconstruction of a vector in a stored set of vectors from an erased or noisy version.
\item VSAs support membership tests, telling us whether two given vectors are present as a bound pair in a set.
\end{itemize}

\subsection{\Hpm: Autocorrelation Associative Memories as VSAs} \label{subsec hpm}
We can also use an AAM/Hopfield model (a sum of vector outer products) to do set intersection estimation, as in VSAs.
Here we represent a bundle as the sum of outer products of a vector with itself, as in \cite{smolensky_tensor_1990}. The key theorem, proven below, is the following.

\restatetheorem{thm HN bundle}

Since $\bS$ is a JL projection matrix, it will, with high probability, preserve the norms of the columns of~$V$, and multiplying by $\bS^\top$ on the right will preserve the norms of the rows of $\bS V$.
We can think of this as $\bS$ satisfying the JL property with respect to the operation $\bS V D \bS^\top$.
The novelty here is that the dependence of $m$ on $\eps$ is $\eps^{-1}$, not $\eps^{-2}$; on the other hand, $\bS VD\bS^\top$ comprises $m^2$ values, not~$m$.
This is yet another route to preserving the norm of a vector, stored as the diagonal of $V$, using the same storage and speed of computation (up to log factors) as that of simply using a larger projection matrix on one side, as in MAP-I.
In other words, we obtain a bundling operator with performance characteristics similar to those of MAP-I, but using fewer random bits.

This theorem is analogous to Lemma~\ref{lem JL}.
The following is analogous to Corollary~\ref{cor JLAMM}, and can be proven in exactly the same way.
It implies in particular that for diagonal matrices $X,Y$ representing sets, with 0/1 diagonal entries encoding element membership, the size of their intersection can be estimated accurately using their ``Hopfield net'' representations.

We have a direct analog of Corollary~\ref{cor JLAMM}, stated below. Analogs of Corollary~\ref{cor set diff} and Theorem~\ref{thm JL pairs} also follow directly; we omit the proofs.

\begin{corollary}\label{cor HN AMM}
Under the conditions of Theorem~\ref{thm HN bundle} (the JL property for $\bS V D \bS^\top$), for diagonal $X,Y\in \reals^{d\times d}$, there is $m=O(\eps^{-2}\log(d/\delta)^2)$ so that
$\tr((\bS XD\bS^\top)(\bS YD\bS^\top))  = \tr(XY) \pm \epsilon\norm{X}_F\norm{Y}_F$ with failure probability~$\delta$.
\end{corollary}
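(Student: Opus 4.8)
The plan is to recognize the claimed statement as the approximate-matrix-multiplication (inner-product) counterpart of the norm-preservation property established in Theorem~\ref{thm HN bundle}, and to derive it by polarization, word for word as Corollary~\ref{cor JLAMM} is derived from Lemma~\ref{lem JL}. Write $\Phi(M) := \bS M D \bS^\top$ for diagonal $M$; for fixed $\bS$ and $D$ this is \emph{linear} in $M$, and since $MD$ is diagonal (hence symmetric), each $\Phi(M)$ is a symmetric $m\times m$ matrix. Consequently, with $A := \Phi(X)$ and $B := \Phi(Y)$, symmetry gives $\tr(AB) = \tr(A^\top B) = \langle A, B\rangle_F$, so the left-hand side is exactly the Frobenius inner product of $\Phi(X)$ and $\Phi(Y)$. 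Likewise, for diagonal $X, Y$ we have $\tr(XY) = \langle X, Y\rangle_F$, while $\norm{X}_F, \norm{Y}_F$ are the matching Frobenius norms. Thus the claim is precisely
\[
\langle \Phi(X), \Phi(Y)\rangle_F = \langle X, Y\rangle_F \pm \eps \norm{X}_F\norm{Y}_F,
\]
and Theorem~\ref{thm HN bundle} is the statement that $\Phi$ preserves Frobenius norms, $\norm{\Phi(M)}_F^2 = (1\pm\eps)\norm{M}_F^2$.

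By bilinearity of both sides I may first assume $\norm{X}_F = \norm{Y}_F = 1$, running the argument on the normalized matrices and multiplying through by $\norm{X}_F\norm{Y}_F$ at the end. The crucial structural point is that $X+Y$ is again diagonal, hence a legitimate input to Theorem~\ref{thm HN bundle}, and that the \emph{same} draw of $\bS, D$ is reused for all three matrices, so that $\Phi$ is a single fixed linear map. Applying Theorem~\ref{thm HN bundle} with failure probability $\delta/3$ to each of $X$, $Y$, and $X+Y$ and taking a union bound, I obtain simultaneously
\[
\norm{\Phi(X)}_F^2 = 1 \pm \eps, \qquad \norm{\Phi(Y)}_F^2 = 1\pm\eps, \qquad \norm{\Phi(X+Y)}_F^2 = \norm{X+Y}_F^2(1\pm\eps),
\]
with total failure probability at most $\delta$.

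I then invoke the polarization identity, valid because $\Phi$ is linear (so $\Phi(X+Y) = \Phi(X)+\Phi(Y)$):
\[
2\langle\Phi(X),\Phi(Y)\rangle_F = \norm{\Phi(X+Y)}_F^2 - \norm{\Phi(X)}_F^2 - \norm{\Phi(Y)}_F^2,
\]
together with the identical identity for $X,Y$ themselves. Subtracting the two and bounding each of the three norm-deviation terms by its relative error $\eps$ (using $\norm{X+Y}_F^2 \le 4$ for unit $X,Y$), the deviations accumulate to $O(\eps)$, giving $\langle\Phi(X),\Phi(Y)\rangle_F = \langle X,Y\rangle_F \pm O(\eps)$. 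Replacing $\eps$ by a small constant multiple and undoing the normalization yields the stated estimate; the bound $m = O(\eps^{-2}\log(d/\delta)^2)$ inherited from Theorem~\ref{thm HN bundle} suffices (and in fact $m = O(\eps^{-1}\log(d/\delta)^2)$ already does, matching that theorem's $\eps^{-1}$ rate, since polarization does not degrade the $\eps$ dependence).

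The main obstacle is really only the bookkeeping that makes polarization legitimate: confirming that $\tr(AB)$ equals the Frobenius inner product via symmetry of each $\Phi(M)$, that $X+Y$ remains a valid diagonal input, and that a single shared draw of $\bS, D$ underlies all three norm estimates so that one fixed linear map $\Phi$ is being probed. Once these are in place, the argument is identical in structure to the proof of Corollary~\ref{cor JLAMM}, with Frobenius norms and inner products replacing Euclidean ones.
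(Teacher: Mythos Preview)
Your proposal is correct and takes exactly the approach the paper intends: the paper omits the proof, stating only that it ``is isomorphic to the proof of Corollary~\ref{cor JLAMM},'' and you have spelled out that isomorphism faithfully---polarization applied to the linear map $\Phi(M)=\bS MD\bS^\top$ via three invocations of Theorem~\ref{thm HN bundle} on $X$, $Y$, and $X+Y$. Your side observation that $m=O(\eps^{-1}\log(d/\delta)^2)$ already suffices (since polarization only loses a constant in~$\eps$) is also correct; the corollary's stated $\eps^{-2}$ bound is simply looser than necessary.
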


Note that $\tr(XY)$ is here the Frobenius product of $X$ and $Y$.
We omit the proof; it is isomorphic to the proof of Corollary~\ref{cor JLAMM}.

To prove Theorem~\ref{thm HN bundle}, we will use the Hanson-Wright inequality.
In its statement,
\[
\norm{X}_{\psi_2} = \inf \{K \mid \bbE(\exp(X^2/K^2) \le 2\}
\]
is the sub-Gaussian norm of random variable $X$,  which in our case, for a Rademacher variable, will be $1/\sqrt{\log 2}$.

\begin{theorem}[Hanson-Wright inequality, \cite{rudelson_hanson-wright_2013}]\label{thm HW}
Let $x \in \mathbb{R}^d$ be a random vector with independent entries,
$\bbE[x] = 0$, and $\norm{x_i}_{\psi_2} \leq K$ for some positive constant $K$.
Let $A$ be an $n \times n$ matrix.
Then for every $t \ge 0$,
\begin{align*}
\Pr\left\{ |x^\top A x - \bbE[x^\top A x]| > t \right\}
    \leq 2 \exp \left[ - c \min \left( \frac{t^2}{K^4 \norm{A}_F^2}, \frac{t}{K^2 \norm{A} } \right) \right],
\end{align*}
where c is some positive constant.
In particular, $t=\frac{K^2}{c}\norm{A}_F\log(2/\delta)$ yields a failure probability bound of at most~$\delta$, when $\log(1/\delta)\ge c$.
\end{theorem}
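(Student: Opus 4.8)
The plan is to prove the Hanson--Wright inequality by bounding the moment generating function (MGF) of $x^\top A x - \bbE[x^\top A x]$ and then applying a Chernoff argument, optimizing the bound over the admissible range of the MGF parameter. The first step is to split the centered quadratic form into its diagonal and off-diagonal contributions,
$x^\top A x - \bbE[x^\top A x] = \sum_i A_{ii}(x_i^2 - \bbE[x_i^2]) + \sum_{i\ne j} A_{ij}x_i x_j$,
and to control each sum separately; since a union bound over the two pieces only costs constant factors absorbed into $c$, it suffices to obtain a bound of the stated form for each.

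For the diagonal sum, each summand $A_{ii}(x_i^2 - \bbE[x_i^2])$ is an independent, centered, sub-exponential random variable with $\psi_1$-norm $O(|A_{ii}|K^2)$, using that the square of a sub-Gaussian variable is sub-exponential. Bernstein's inequality for sums of independent sub-exponentials then gives a tail of exactly the shape $2\exp(-c\min(t^2/(K^4\sum_i A_{ii}^2),\, t/(K^2\max_i|A_{ii}|)))$, and I would close this case using $\sum_i A_{ii}^2 \le \norm{A}_F^2$ and $\max_i|A_{ii}| \le \norm{A}$.

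The off-diagonal sum is the crux. First I would decouple: introducing an independent copy $x'$ of $x$, a standard decoupling inequality bounds the MGF of $\sum_{i\ne j}A_{ij}x_ix_j$ by that of the decoupled form $4\,x^\top A x' = 4\sum_{i,j}A_{ij}x_ix_j'$. Conditioning on $x$, the decoupled form is a linear combination of the independent sub-Gaussian coordinates of $x'$, hence conditionally sub-Gaussian with variance proxy $O(K^2\norm{A^\top x}^2)$, which yields $\bbE_{x'}\exp(\lambda\, x^\top A x') \le \exp(C\lambda^2 K^2\norm{A^\top x}^2)$. The remaining and hardest task is to take the expectation over $x$ of an exponential of the quadratic form $\norm{A^\top x}^2$.

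To handle $\bbE_x\exp(s\norm{A^\top x}^2)$ I would invoke the Gaussian-integration identity $\exp(\tfrac12 s\norm{v}^2) = \bbE_g\exp(\sqrt{s}\,g^\top v)$ for $g\sim N(0,I)$ with $v = A^\top x$; swapping expectations, the inner expectation over the sub-Gaussian $x$ of $\exp(\sqrt{s}\,(Ag)^\top x)$ is at most $\exp(C K^2 s\norm{Ag}^2)$, reducing matters to a genuine Gaussian chaos $\bbE_g\exp(C'K^2 s\norm{Ag}^2)$. This chaos equals $\prod_i(1 - 2C'K^2 s\,\sigma_i^2)^{-1/2}$ in terms of the singular values $\sigma_i$ of $A$, which converges and is bounded by $\exp(O(K^2 s\norm{A}_F^2))$ precisely when $C'K^2 s\norm{A}^2 \le 1/4$. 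Feeding this back produces an MGF bound $\bbE\exp(\lambda\sum_{i\ne j}A_{ij}x_ix_j)\le \exp(C''\lambda^2 K^4\norm{A}_F^2)$ valid for all $|\lambda|\le c/(K^2\norm{A})$, after which a Chernoff optimization over this admissible range gives the sub-Gaussian regime $t^2/(K^4\norm{A}_F^2)$ for small $t$ and the sub-exponential regime $t/(K^2\norm{A})$ for large $t$, matching the two terms in the min. The main obstacle is exactly this control of $\bbE_x\exp(s\norm{A^\top x}^2)$: one must carry out the Gaussian-comparison argument and track the threshold on $s$ (equivalently $\lambda$) at which the chaos ceases to converge, since that threshold is what generates the operator-norm term in the final bound. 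The closing ``In particular'' clause then follows by substituting $t = \tfrac{K^2}{c}\norm{A}_F\log(2/\delta)$ and checking that, under $\log(1/\delta)\ge c$, the minimum is attained by the $t^2/(K^4\norm{A}_F^2)$ term, so the exponent equals $-\log(2/\delta)$ and the probability bound is $\delta$.
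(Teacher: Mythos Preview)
The paper does not prove this statement. Theorem~\ref{thm HW} is quoted from \cite{rudelson_hanson-wright_2013} as a known result and used as a black box in the proof of Theorem~\ref{thm HN bundle}; the paper offers no argument for it beyond the citation. So there is no ``paper's own proof'' to compare against.

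That said, your sketch is essentially the standard Rudelson--Vershynin proof: split into diagonal and off-diagonal parts, handle the diagonal via Bernstein for sub-exponentials, decouple the off-diagonal chaos, condition to reduce to a sub-Gaussian linear form, and then control $\bbE_x\exp(s\norm{A^\top x}^2)$ by Gaussian comparison, with the constraint on $s$ producing the operator-norm term. This is the right architecture and would go through.

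One small correction on the ``In particular'' clause: your claim that the minimum is attained by the $t^2/(K^4\norm{A}_F^2)$ term is not true in general, since which term is smaller depends on $\norm{A}_F/\norm{A}$. What actually makes the clause work is that \emph{both} terms, multiplied by $c$, are at least $\log(2/\delta)$: for the first, $c\cdot t^2/(K^4\norm{A}_F^2)=\log^2(2/\delta)/c\ge\log(2/\delta)$ under $\log(1/\delta)\ge c$; for the second, $c\cdot t/(K^2\norm{A})=(\norm{A}_F/\norm{A})\log(2/\delta)\ge\log(2/\delta)$ since $\norm{A}_F\ge\norm{A}$. Hence the minimum is at least $\log(2/\delta)/c$ regardless of which branch is active.
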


\begin{proof}[Proof of Theorem~\ref{thm HN bundle}]
Let $E = \bS^\top \bS - I$; since the diagonal entries of $E$ are zero, we have
\begin{align}
\norm{\bS VD\bS^\top}_F^2
       & = \tr(\bS VD\bS^\top \bS VD\bS^\top)
         = \tr(VD(I+E)VD(I+E)) \nonumber
    \\ & = \tr(V^2D^2) + 2\tr(V^2D^2E) + \tr(VDEVDE) \nonumber
    \\ & = \norm{V}_F^2 + 0 + \tr(VDEVDE) \nonumber
    \\ & = \norm{V}_F^2 + 0 + \sum_{i \in [d]} (VDE)_{i*}(VDE)_{*i} \nonumber
    \\ & = \norm{V}_F^2 + 0 + \sum_{i,j\in[d]} V_{ii} D_{ii} E_{ij} V_{jj} D_{jj} E_{ji}  \nonumber
    \\ & = \norm{V}_F^2 + b^\top V(E\circ E)Vb, \label{eq:SVDST}
\end{align}
where in the last step $b\in\{\pm 1\}^d$ comprises the diagonal entries of $D$, and we use the symmetry of~$E$.
The simplifications also use the fact that $V$ and $D$ commute and $D^2$ is a projection matrix.

We will apply the Hanson-Wright inequality, for which we need to bound $\norm{V(E\circ E)V}_F^2$.
First, we bound the entries of $E$: each such entry is $\frac1m$ times a sum of $m$ independent Rademacher values, and therefore from Theorem~\ref{thm:rademacher} is at most $\frac{1}{\sqrt{m}} \sqrt{2\log(2/\delta_1)}$ with failure probability $\delta_1$.
By applying a union bound to all $d(d-1)$ off-diagonal entries of $E$, we have that all such entries are at most $\frac{2}{\sqrt{m}} \sqrt{\log(2d/\delta_1)}$ with failure probability~$\delta_1$.
Assuming this event, the square of each entry is at most $\frac{4}{m} \log(2d/\delta_1)$.
We have
\[
\norm{V(E\circ E)V}_F^2
    = \sum_{i,j\in [d]} V_{ii}^2 V_{jj}^2 E_{ij}^4
    \le \frac{16\log(2d/\delta_1)^2}{m^2}\sum_{i,j\in [d]} V_{ii}^2 V_{jj}^2
    = \frac{16\log(2d/\delta_1)^2}{m^2} \cdot \norm{V}_F^4.
\]
We will apply Theorem~\ref{thm HW} with $x$ and $A$ of the theorem mapping to $b$ and $V(E\circ E)V$.
We have $\bbE_b[b^\top V(E\circ E)Vb]=0$, since off-diagonal terms with $\bbE_b[b_i b_j\ldots], i\ne j$ are zero, and the diagonal entries of $E$ are zero.
The value of $t$ in the last line of Theorem~\ref{thm HW} translates here to
\[
b^\top V(E\circ E)Vb
    \le \frac{c}{\log(2)} \cdot \frac{4\log(2d/\delta_1)}{m} \cdot \norm{V}_F^2\log(1/\delta_2)
\]
with failure probability $\delta_2$, using $\norm{X}_{\psi_2} = 1/\sqrt{2}$, for $X\sim\{\pm 1\}$.
Putting this together with \eqref{eq:SVDST}, using a union bound for the events with failure probabilities $\delta_1$ and $\delta_2$, and reducing the precise bound above to $O()$ notation, yields the result.
\end{proof}

\section{Analysis of MAP-B}

We will analyze membership testing for MAP-B, for bundles (\S\ref{subsec mapb bund}), sequences of sets as encoded with rotations (\S\ref{subsec mapb rot}), and bundles of bound key-value pairs (\S\ref{subsec mapb bind}). Here being a key-value pair means that some restrictions are placed on members of a bound pair, in particular that the keys come from a set that is disjoint from the set of values, and so there is greater independence among the relevant random variables.

Recall that the MAP-B bundling operator is $\sign(Sv)$, where $S$ is a sign matrix, Def.~\ref{def sign matrix},
and the sign function takes a random $\pm 1$ value when its argument is zero.

\subsection{Bundling}\label{subsec mapb bund}

\subsubsection{Membership Test}

We first consider testing membership in a MAP-B bundling of $n$ atomic vectors.

\restatetheorem{thm B bundle}

Note that MAP-B bundling is not associative, so analysis of this bundling operation is not the end of the story, as discussed in \S\ref{subsubsec B depth}.

\begin{proof}
    We consider $S_{*j}^\top x$ in the two cases where $j$ is, and is not, in $\supp(v)$.
    Fix some $\hj\in\supp(v)$, and let $a=S_{1\hj}$ and $b= (Sv)_1 - a$, so that $x_1 = \sign(a+b)$, and $a\sign((a+b)) = \sign(a(a+b))$ is a summand of $S_{*\hj}^\top x$.
    We consider the cases where $n-1$ is even vs. odd. Let $\cE$ denote the event $\sign(a(a+b)) >0$.
    
    Since the $n-1$ summands of $b=(Sv)_1 - a$ are i.i.d. Rademacher variables, the probability that $b=0$ is, for $n-1$ even, equal to
    $\binom{2p}{p}\frac1{2^{2p}} = \frac{1}{\sqrt{\pi p}}(1+O(1/p))=\sqrt{\frac{2}{\pi n}}(1+O(1/n))$,
    where $p=(n-1)/2$, using Stirling's approximation, or the related expression for the Catalan numbers.
    For $n-1$ even, when $b\ne 0$, $|b|>=2$, so $a+b >0$, and $\Pr[\cE | b\ne 0] = 1/2$. We have
    \begin{align*}
    \Pr[\cE] 
           & = \Pr[\cE \mid b=0]\Pr[b=0] + \Pr[\cE \mid b\ne 0]\Pr[b\ne 0]
        \\ & = \Pr[b=0] + \frac12(1-\Pr[b=0])
        \\ & = \frac12 + \frac12\sqrt{\frac2{\pi n}}(1+O(1/n))
        = \frac12 + \sqrt{\frac1{2\pi n}}(1+O(1/n)).
    \end{align*}
    
    When $n-1$ is odd, $b$ cannot be zero, while
    \[
    \Pr[b=1] = \Pr[b=-1]
        = \binom{n-1}{n/2-1}\frac1{2^{n-1}}
        = \frac{n/2}{n}\binom{n}{n/2}\frac2{2^{n}}
        = \frac{1}{\sqrt{\pi (n/2)}}(1+O(1/n)).
    \]
    Considering the rounding for the case $a+b=0$, we have
    $\Pr[\cE \mid b=\pm 1] = 3/2$. Therefore,
    \begin{align*}
    \Pr[\cE]
           & = \frac32\Pr[b=\pm 1] + \frac12\Pr[b\ne\pm 1]
        \\ & = \frac12 + \Pr[b=\pm 1]
        \\ & = \frac{1}{2} + \sqrt{\frac2{\pi n}}(1+O(1/n)).
    \end{align*}
    So
    \begin{align*}
    \Pr[x^{(1)}_1x_1 =1]
           & = \Pr[\sign(a(a+b)) > 0]
        \\ & = \Pr[\cE] \ge 1/2 + \sqrt{1/2\pi n}(1+O(1/n))
        \\ & > 1/\sqrt{7n} \mathrm{\ for\ large\ enough\ }n.
    \end{align*}
    This applies to all coordinates of all vectors $S_{i,j}$ for $j\in\supp(v)$, and inspection of the expressions shows the first claim of the theorem,
    $\Pr[x_iS_{ij} = +1]= 1/2 + \Theta(1/\sqrt{n})$ for such $i,j$.
    
    For the remaining claim: we have that $x^\top S_{*j}$ for $j\in\supp(v)$ is a sum of $\pm 1$ independent values, with $+1$ having probability $1/2 + \Theta(1/\sqrt{n})$, and at least $1/2 + \sqrt{1/7n}$, for large enough $n$; this implies expectation at least $m/\sqrt{7n}$.
    Hoeffding's inequality implies that
    $\Pr[x^\top S_{*j} - m/\sqrt{7n} < -\sqrt{2m\log(2d/\delta)}] \le \delta/2d$.
    Now suppose $j\notin\supp(v)$. Then using Hoeffding again, we obtain:
    \[
        \Pr[x^\top S_{*j} > \sqrt{2m\log(2d/\delta)}] \le \delta/2d.
    \]
    Thus, there is $m=O(n\log(d/\delta))$ such that $m/\sqrt{7n} > 2\sqrt{2m\log(2d/\delta)}$, and therefore by a union bound for all $j\in [d]$, with failure probability at most $\delta$,
    $x^\top S_{*j} > \sqrt{2m\log(2d/\delta)}$ if an only if $j\in\supp(v)$.
\end{proof}

\subsubsection{Dependence on Depth}\label{subsubsec B depth}

Instead of bundling a set of elements all at once, suppose instead that bundling is done as a sequence of operations. We will see that for some such sequences, with high depth, so much information is lost that accurate membership tests become impossible.

\restatelemma{lem tree depth}

\begin{proof}
We consider $x^{(1)}_1 x_1$; the same analysis applies to $x^{(1)}_\ell x_\ell$ for $\ell > 1$.
Let $a$ denote $x^{(1)}_1$, and let $z_{(j)}$ denote the value of $x_1$ just after the  assignment $x\gets \sign(x + x^{(j)})$ in computing~$x$.
Since $b\sign(c+e) = \sign(bc + be)$ for $b=\pm 1$, we have that
\[x_1 x^{(1)}_1 = a z_{(r)} = \sign(az_{(r-1)}) + ax^{(r)}).\]
By induction on $j$ with this base case,
$az_{(r-j)}) = \sign(a z_{(r-(j+1))}) + a x^{(r-j)})$, ending with $az_{(1)} = a^2 = 1$.
Suppose inductively on $j$ (in the other direction) that $az_{(j)}$ is $+1$ with probability $1/2 + 1/2^j$, respectively; this is true for $j=1$, as noted.
We have
$az_{(j)} = \sign(a z_{(j-1)}) + a x^{(j)})$, and $a x^{(j)}=\pm 1$ with equal probability independent of $a z_{(j-1)}$.
Analysis of the four combinations $a z_{(j-1)}) = \pm 1, a x^{(j)}=\pm 1$ and their probabilities shows that the inductive step holds, that is, $az_{(j+1)}$ is $+1$ with probability $1/2 + 1/2^{j+1}$.
In particular, 
$\Pr[x_1 x^{(1)}_1 = a z_{(r)} = +1]= 1/2 + 1/2^r$, as claimed.
\end{proof}

Note that this property of $x_\ell x^{(1)}_\ell$ does not require the $x^{(i)}$ to be atomic vectors of the~VSA.
Considering such summands for $x^\top x^{(1)}$ leads to
$\bbE[x^\top x^{(1)}]\approx m/2^r$, as compared to at least $m/\sqrt{7n}$ as in Theorem~\ref{thm B bundle}, with a corresponding change in $m$ needed to obtain effective membership testing.
Suppose all the $x^{(i)}$ are bundles, and $x$ is the root of a tree of bundlings of a total of~$n$ vectors.
Then $x^{(1)}$ is a leaf of a tree of bundling operations of depth at least~$r$, with possibly $n=r$, when all $x^{(i)}$ involved with $x^{(1)}$ are atomic; $n=2^r$, when $x$ is the root of a complete binary tree of depth $r$; or $n>2^r$, when $x^{(1)}$ is not at the maximum depth, or when a single bundling operation involves more than two vectors.
Thus for membership testing to be effective, $m$~might need to be at large as $4^n$, or as small as~$\tilde O(n)$, depending on operation depth in computing $x$.

The following lemma puts $x^{(1)}$ in a more general bundle.

\begin{corollary}
Given independent sign vectors $x^{(i)}\in\{\pm 1\}^m$ for $i\in [n']$ and $y^{(j)} \in \{\pm 1\}^m$ for $ j\in [r]$, construct vector~$x$ by setting $x \gets \sign(\sum_{i\in n'} x^{(i)})$, and then
for $j=1,\ldots, r$, setting $x\gets \sign(x + y^{(j)})$.
Then, for $\ell \in [m]$, \[\Pr[x^{(1)}_\ell x_\ell  = 1] = \frac12 + \frac{1}{2^r}\Theta\left(\frac{1}{\sqrt{n}}\right)\] 
\end{corollary}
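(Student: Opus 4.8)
The plan is to combine Theorem~\ref{thm B bundle} with the bias-halving recursion from the proof of Lemma~\ref{lem tree depth}. Fix a coordinate $\ell\in[m]$, write $a\equiv x^{(1)}_\ell$, and let $z_j$ denote the value of $x_\ell$ after the $j$-th of the updates $x\gets\sign(x+y^{(j)})$, with $z_0$ the value of $x_\ell$ immediately after the initial bundling $x\gets\sign(\sum_{i\in[n']} x^{(i)})$, so that $x_\ell = z_r$. Applying the first claim of Theorem~\ref{thm B bundle} with the sign matrix whose columns are $x^{(1)},\ldots,x^{(n')}$ and characteristic vector $v=\mathbf 1_{n'}$ (every column is in $\supp(v)$, with $n=n'$) gives $\Pr[a z_0 = +1] = 1/2 + \Theta(1/\sqrt{n'})$; write this as $1/2 + q_0$ with $q_0 = \Theta(1/\sqrt{n'})$.

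Next I would track the bias $q_j \equiv \Pr[a z_j = +1] - 1/2$. Since $\sign$ commutes with multiplication by $a\in\{\pm1\}$, we have $a z_j = \sign\!\big(a z_{j-1} + a y^{(j)}_\ell\big)$. The crucial point is independence: $z_{j-1}$ is a deterministic function of $x^{(1)},\dots,x^{(n')},y^{(1)},\dots,y^{(j-1)}$, so conditioning on all the randomness determining $a z_{j-1}$ (including $a$ itself) leaves $a y^{(j)}_\ell$ uniform on $\{\pm1\}$ and independent of it, because $y^{(j)}$ is fresh. A four-case analysis of $(a z_{j-1}, a y^{(j)}_\ell)$, using the tie-breaking rule $\sign(0)=\pm1$ with equal probability, yields $\Pr[a z_j = +1 \mid a z_{j-1} = +1] = 3/4$ and $\Pr[a z_j = +1 \mid a z_{j-1} = -1] = 1/4$, whence $\Pr[a z_j = +1] = \tfrac34(\tfrac12 + q_{j-1}) + \tfrac14(\tfrac12 - q_{j-1}) = \tfrac12 + \tfrac12 q_{j-1}$, i.e. $q_j = q_{j-1}/2$.

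Iterating $r$ times gives $q_r = q_0/2^r = \frac{1}{2^r}\Theta(1/\sqrt{n'})$, which is exactly the claimed identity $\Pr[x^{(1)}_\ell x_\ell = +1] = \Pr[a z_r = +1] = \frac12 + \frac{1}{2^r}\Theta(1/\sqrt{n'})$ (with $n=n'$), and the same argument applies verbatim to every coordinate $\ell$. I do not expect a serious obstacle: the recursion and the independence bookkeeping are already carried out in the proof of Lemma~\ref{lem tree depth}, and the only new ingredient is replacing the trivial base case $q_0 = 1/2$ used there (where the initial ``bundle'' is a single vector) with the value $q_0 = \Theta(1/\sqrt{n'})$ supplied by Theorem~\ref{thm B bundle}. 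The one point meriting care is confirming that the tie-breaking convention and the independence of $y^{(j)}$ from the earlier vectors are exactly as needed to justify the one-step transition probabilities displayed above.
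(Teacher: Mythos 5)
Your proposal is correct and is essentially the paper's own argument: the paper proves this corollary by combining the base-case bias $\Theta(1/\sqrt{n})$ from Theorem~\ref{thm B bundle} with the bias-halving recursion of Lemma~\ref{lem tree depth}, exactly as you do. Your write-up just makes explicit the independence and tie-breaking bookkeeping that the paper leaves implicit.
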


\begin{proof}
Combine the analysis of Theorem~\ref{thm B bundle} with the lemma just above. 

\end{proof}

Note that this corollary covers any sequence of operations yielding a vector of the form $x^{(1)} \oplus y$, that is $x^{(1)}$ bundled with $y$, regardless of how $y$ was computed in the MAP-B algebra.

\subsubsection{Testing for Empty Intersection} \label{subsubsec empty intersection}

In this section, rather than estimating set intersection size, we aim for bounds on $m$ so that we can distinguish between nonempty and empty set intersections.  
Estimating the set intersection \emph{size} will depend on a rather complicated convolution of Rademacher random variable; the question of distinguishing emptiness has a much cleaner analysis.

Throughout this section, we assume that the elements comprising $X$ and $Y$ were all bundled during the same operation.
In other words, they correspond to a depth-1 bundling tree, in the language of the previous section.

\begin{lemma}
    Let $X, Y \subseteq [d]$, and use $x$ and $y$ to denote their respective MAP-B vectors. 
    If we choose dimension $m \geq \Omega\left( \log(1/\delta) \cdot |X| |Y| \right)$, we can distinguish between the cases $|X \cap Y| = 0$ and $|X \cap Y| \geq 1$, using the criterion of whether $x^T y$ is smaller or greater than $\sqrt{2m \log(2 / \delta)}$.
\end{lemma}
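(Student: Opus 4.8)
Write $x=\sign(Sv)$ and $y=\sign(Sw)$, where $v,w\in\{0,1\}^d$ are the characteristic vectors of $X$ and $Y$, so that $x^\top y=\sum_{i=1}^m x_iy_i$ is a sum of $\pm1$ terms that are independent across $i$ (the rows of $S$ are independent). The plan is to show that $x^\top y$ is stochastically tiny in the empty case and has expectation $\Omega(m/\sqrt{|X|\,|Y|})$ in the nonempty case, and then to separate the two with Hoeffding-type tail bounds; the threshold $\sqrt{2m\log(2/\delta)}$ sits comfortably between the two once $m=\Omega(\log(1/\delta)\,|X|\,|Y|)$.

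\emph{Empty case.} When $X\cap Y=\emptyset$, the coordinate $(Sv)_i$ depends only on columns of $S$ indexed by $X$ and $(Sw)_i$ only on columns indexed by $Y$, so $x_i$ and $y_i$ are independent; moreover each is a symmetric $\pm1$ variable (a symmetric integer-valued random variable, sign-rounded with the $\sign(0)$ coin, is uniform on $\{\pm1\}$), hence $x_iy_i$ is uniform on $\{\pm1\}$ and the $x_iy_i$ are i.i.d.\ Rademacher. Applying Theorem~\ref{thm:rademacher} with $a=\mathbf{1}_m$ and $t=\sqrt{2\log(2/\delta)}$ gives $\Pr[x^\top y\ge\sqrt{2m\log(2/\delta)}]\le\delta$, so the test correctly reports ``empty.''

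\emph{Nonempty case.} Fix $j\in X\cap Y$ and, in coordinate $i$, decompose $(Sv)_i=s_i+a_i$ and $(Sw)_i=s_i+b_i$, where $s_i=\sum_{k\in X\cap Y}S_{ik}$, $a_i=\sum_{k\in X\setminus Y}S_{ik}$, $b_i=\sum_{k\in Y\setminus X}S_{ik}$ are mutually independent (disjoint column sets). Conditioning on $s_i$ makes $x_i,y_i$ independent, so $\bbE[x_iy_i]=\bbE_s[\phi_X(s)\,\phi_Y(s)]$ where $\phi_X(s)\equiv\bbE_a[\overline{\sign}(s+a)]$ (with $\overline{\sign}(0)=0$, since the $\sign(0)$ coin contributes $0$ in expectation), and similarly $\phi_Y$. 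The structural facts I would establish are: (i) $\phi_X$ is an odd, nondecreasing function on the integers, so $\phi_X(s)$ and $\phi_Y(s)$ always carry the sign of $s$ — there is \emph{no cancellation}: $\phi_X(s)\phi_Y(s)\ge0$ for all $s$, and $\ge\phi_X(1)\phi_Y(1)$ whenever $|s|\ge1$; (ii) $\Pr[|s|\ge1]\ge\tfrac12$, since $|X\cap Y|\ge1$ and the only loss is $\Pr[s=0]\le\tfrac12$ when $|X\cap Y|$ is even; (iii) $\phi_X(1)=\Omega(1/\sqrt{|X|})$, which is exactly the $\tfrac12+\Theta(1/\sqrt n)$ bias computed in the proof of Theorem~\ref{thm B bundle} — a Stirling estimate of $\Pr[a=0]$ (or $\Pr[a=-1]$ for the other parity) with $|X|-|X\cap Y|\le|X|$ summands — and likewise $\phi_Y(1)=\Omega(1/\sqrt{|Y|})$. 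Combining, $\bbE[x^\top y]=m\,\bbE[x_1y_1]\ge\tfrac12\,m\,\phi_X(1)\phi_Y(1)=\Omega(m/\sqrt{|X|\,|Y|})$. Choosing $m=\Omega(\log(1/\delta)\,|X|\,|Y|)$ makes $\bbE[x^\top y]\ge2\sqrt{2m\log(2/\delta)}$, and a one-sided Hoeffding bound on $\sum_i(x_iy_i-\bbE[x_iy_i])$ then yields $\Pr[x^\top y<\sqrt{2m\log(2/\delta)}]\le\delta$, so the test correctly reports ``nonempty.''

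\emph{Main obstacle.} The crux is fact (i): verifying that the conditional ``signal'' functions $\phi_X(\cdot)$ and $\phi_Y(\cdot)$ never disagree in sign, so that the per-coordinate contributions to $\bbE[x^\top y]$ reinforce rather than cancel across the values of $s$. This is precisely where the oddness and monotonicity of the sign-rounded partial-sum distribution — the majority-function structure behind Theorem~\ref{thm B bundle} — does the real work; the parity bookkeeping for $\phi_X(1)$ and the $\sign(0)$ tie-breaking are fiddly but routine. Everything else (the independence decomposition, the two tail bounds, and the final choice of $m$) is standard.
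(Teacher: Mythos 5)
Your proof is correct, and while the skeleton matches the paper's (empty case via independence plus the Rademacher tail bound of Theorem~\ref{thm:rademacher}; nonempty case via a per-coordinate bias of order $1/\sqrt{|X|\,|Y|}$ followed by Hoeffding), the way you get the bias is genuinely different and in fact more general. The paper treats only $|X\cap Y|=1$: it conditions on the single shared value $(Se^*)_\ell$ and derives the exact identity $\Pr[x_\ell y_\ell=1]-\Pr[x_\ell y_\ell=-1]=\Pr[X'=0]\cdot\Pr[Y'=0]$ (in the even-parity case), then applies the central-binomial estimate. You instead condition on the full shared sum $s_i=\sum_{k\in X\cap Y}S_{ik}$ and lower-bound $\bbE_s[\phi_X(s)\phi_Y(s)]$ using that each $\phi$ is odd and nondecreasing with $\phi(0)=0$, together with $\Pr[|s|\ge 1]\ge\tfrac12$ and $\phi_X(1)=\Omega(1/\sqrt{|X|})$. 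This monotonicity argument is exactly what is needed to cover every $|X\cap Y|\ge 1$, i.e., the lemma as actually stated, whereas the paper's written argument stops at intersection size one and leaves the extension implicit. What you give up is the clean product identity for the bias (your bound is a one-sided lower bound rather than an exact evaluation), but for the stated distinguishing task that costs nothing; all your supporting facts check out (conditional independence of $x_i,y_i$ given $s_i$, the $\sign(0)$ coin contributing $0$ to $\phi$, $\phi_X(1)=\Pr[a\in\{0,1\}]$ giving the Stirling-type bound, and independence across coordinates for the final tail bound).
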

\begin{proof}
Let $v, w \in \{0, 1\}^d$ be characteristic vectors for the sets $X, Y \subseteq [d]$.
First, if $|X \cap Y| = 0$, their MAP-B bundles $x = \sign(Sv)$ and $y = \sign(Sw)$ are independent, and for any coordinate $\ell \in [m]$:
\[
\bbE[x_\ell y_\ell] = \Pr[x_\ell y_\ell = 1] - \Pr[x_\ell y_\ell = -1] = 0
\]
Using Theorem \ref{thm:rademacher}, we conclude that with probability $1 - \frac{\delta}{2}$, we satisfy $x^\top y \leq \sqrt{2 \log(2 / \delta) \cdot m}$.

Now, suppose $|X \cap Y| = 1$.
Then, we can use $e^*$ to denote the standard basis vector that represents the element in their intersection. 
Let $x = \sign(Sv)$ and $y = \sign(Sw)$ be the MAP-B bundles representing $X$ and $Y$. 
(Here, $S$ is a random sign matrix; see Definition \ref{def sign matrix}.)

Since all coordinates within $x$ are independent of each other (and likewise for $y$), it suffices to understand the distribution of a single coordinate.
Fix $\ell \in [m]$. We will study the quantities $\Pr[x_\ell y_\ell = 1]$ and $\Pr[x_\ell y_\ell = -1]$ to eventually obtain a concentration bound for $x^\top y$.

Let $Z^{(X)} = (Se^*)_\ell + \sum_{i \in X \setminus Y} Z_i^{(X)}$, where the $\{Z_i^{(X)}\}_{i = 1}^m$ are independent Rademacher variables coming from the $\ell$-th coordinates of the atomic vectors that were bundled to form $X$. 
Define $Z^{(Y)}$ similarly for $Y$, and let $X' = Z^{(X)} - (Se^*)_\ell$ and $Y' = Z^{(Y)} - (Se^*)_\ell$.

First, assume $|X \setminus Y|$ and $|Y \setminus X|$ are both even. The other cases follow similarly; the only change to track is that when $|X \setminus Y|$ (or $|Y \setminus X|$) is odd, $Z^{(X)}$ (resp. $Z^{(Y)}$) could be $0$, and we have a $\frac{1}{2}$ chance of $Z^{(X)}$ (resp. $Z^{(Y)}$) being positive and a $\frac{1}{2}$ chance it is negative.
\begin{align*}
    \Pr[x_\ell = y_\ell = 1] &= \Pr[Z^{(X)} = Z^{(Y)} = 1] \\
    &= \Pr[X' \geq 0, Y' \geq 0] \cdot \Pr[(Se^*)_\ell = 1] + \Pr[X' \leq 1, Y' \leq 1] \cdot \Pr[(Se^*)_\ell = -1] \\
    &= \Pr[X' \geq 0] \cdot \Pr[Y' \geq 0] 
\end{align*}
We obtain the following equations similarly, again when $|X \setminus Y|$ and $|Y \setminus X|$ are even. 
\begin{align*}
    \Pr[x_\ell = 1, y_\ell = -1] &= \Pr[X' \geq 0] \cdot \Pr[Y' < 0] \\
    \Pr[x_\ell = -1, y_\ell = 1] &= \Pr[X' < 0] \cdot \Pr[Y' \geq 0] \\
    \Pr[x_\ell = y_\ell = -1] &= \Pr[X' < 0] \cdot \Pr[Y' < 0]
\end{align*}
Combining these terms, the quantity $\Pr[x_\ell y_\ell = 1] - \Pr[x_\ell y_\ell = -1]$ is equal to:
\begin{align}
\left(\Pr[X' \geq 0] - \Pr[X' < 0]\right) \cdot \left(\Pr[Y' \geq 0] - \Pr[Y' < 0]\right) = \Pr[X' = 0] \cdot \Pr[Y' = 0]
\end{align}
Again using the fact that $\binom{2p}{p}\frac1{2^{2p}} = \frac{1}{\sqrt{\pi p}}(1+O(1/p))=\sqrt{\frac{2}{\pi n}}(1+O(1/n))$ (which was also used in the membership test), we have $\Pr[x_\ell y_\ell = 1] - \Pr[x_\ell y_\ell = -1] \geq \Omega\left(\frac{1}{\sqrt{|X| \cdot |Y|}}\right)$.

Define $p \equiv C' \cdot \frac{1}{\sqrt{|X| \cdot |Y|}}$, so $\Pr[x_\ell y_\ell = 1] \geq \frac{1 + p}{2}$.
Applying the Chernoff bound for a sum of $m$ Rademachers, we have:
\begin{align*}
\Pr\left[(x^\top y \geq \frac{m}{2} + \frac{pm}{2} - \sqrt{2 \log(2/\delta) \cdot m} \right] \geq 1 - \frac{\delta}{2} 
\end{align*}
In order to distinguish between the two cases, we require $\frac{pm}{2} \geq 2 \sqrt{2 \log(2/\delta) \cdot m}$, so we need to choose $m \geq \Omega \left(\frac{\log(2/\delta)}{p^2} \right)$.
\end{proof}

\begin{remark}
    If we have $|X| = n$ and $|Y| = 1$, and enforce a failure probability of $\frac{\delta}{d}$ (so that we can union bound over all atomic vectors that represent $[d]$), we recover the result of Lemma \ref{subsec mapb bund}.
\end{remark}

\subsection{Rotations}\label{subsec mapb rot}

Here we consider membership testing, leveraging Theorem~\ref{thm B bundle} in a simple way.

\restatetheorem{thm perm symbols map-b}

\begin{proof}
Consider
\[
x^\top S_{*,j_\modd}
    = \sum_{i\in [m]} x_i S_{i,j_\modd}
    = \sum_{i\in [m]} S_{i,j_\modd} (S_{R,L})_{i,*} v
    = 1 + \sum_{i\in [m]} S_{i,j_\modd}((S_{R,L})_{i,*} v - S_{i,j_\modd}).
\]
The random variables of $S$ that appear in $S_{i,j_\modd}((S_{R,L})_{i,*} v - S_{i,j_\modd})$
cannot appear in
\[
S_{i+L,j_\modd}((S_{R,L})_{i+L,*} v - S_{i+L,j_\modd}),
\]
since the appearances of a given $S_{i,j}$ due to shifting in $S_{R,L}$ only span $L$ rows.
That is, there are at most $L$ blocks of at least $(m/L)-1$ rows such that each block can be analyzed as in Theorem~\ref{thm B bundle}.
We obtain, for a given block of rows, and after taking into account that only $(m/L)-1$ rows are in a block, that there is $m=O(Ln\log(d/\delta))$ such that with failure probability $\delta$, $j\in [d]$ has $j\in\supp(v)$ if and only if the vectors $\tilde x$ and $\tilde S_{*j}$ corresponding to the block of rows have $\tilde x^\top \tilde S_{*j} \ge 2 \sqrt{(m/L) \log(d/\delta)}$.
By requiring failure probability at most $\delta/L$, satisfied by $m=O(Ln\log(dL/\delta))$,
the result follows.
\end{proof}

\subsection{Binding}\label{subsec mapb bind}

Binding is done in MAP-B the same as in MAP-I, namely, as coordinate-wise product.
Here we also consider membership test in a bundle of bindings, where again we assume that the bundling is done in one step, that is, a sum over the integers followed by the $\sign$ function.

\subsubsection{Membership in Key-Value Pairs}

We will show the following.

\restatetheorem{thm B bind}

The conclusion of the theorem is much the same as that of Theorem~\ref{thm B bundle}.
This is not a coincidence: the analysis of key-value pairs reduces to the same setting as that earlier lemma.

\begin{proof}
For $j\in[\binom{d}{2}]$, let $w(j)\in W, q(j)\in Q$ be the indices such that $S\bnd{2}_{*,j} = S_{*,w(j)}\circ S_{*,q(j)}$.
Let $\hj\in\supp(v)$. % have $S\bnd{2}_{*,\hj} = S_{*,\hq}\circ S_{*,\hw}$ for $\hq\in Q, \hw\in W$.
We have $\sign(x^\top S\bnd{2}_{*\hj}) = \sign(1 + (S\bnd{2}v - S\bnd{2}_{*,\hj})^\top S\bnd{2}_{*\hj})$, and
\[
S\bnd{2}v - S\bnd{2}_{*,\hj}
    = \sum_{j\ne\hj} S_{*,q(j)}\circ S_{*,w(j)}.
\]
Since each $S_{*,q(j)}$ appears only once in the sum, the summands are independent sign vectors.
(Whatever the relations among the $w(j)$.)
So $x^\top S\bnd{2}_{*,\hj}$ satisfies the same conditions as does the same expression in the proof of Theorem~\ref{thm B bundle}, as does $x^\top S\bnd{2}_{*,j'}$ for $j'\notin\supp(v)$.
Therefore the same conditions on $m$ imply the same results on failure probability as in Theorem~\ref{thm B bundle}, and the lemma follows.
\end{proof}

\section{Sparse Binary Bundling and Bloom Filter Analysis via Concentration}

We will analyze the VSA dimension needed for reliably estimating the size of the intersection of sets represented by Bloom filters (\S\ref{subsec Bloom}) and Counting Bloom filters (\S\ref{subsec CBloom}). 
In the latter, we actually consider a generalization to weighted sets.

\subsection{Bloom Filters}\label{subsec Bloom}

In the VSA model described in this section, the atomic vectors are $x\in\{0,1\}^m$ where for some $k$, $\norm{x}_1 \le k$, with the nonzero entries chosen randomly.
As discussed in Def.~\ref{def sparse binary}, an atomic vector $B_{*j}$ is created by performing $k$ trials, in each trial picking $i\in[m]$ uniformly at random and setting $B_{ij}\gets 1$.
(It is possible to pick the same $i \in [m]$ twice, in which case we only perform the update once.)
Bundling will be done with disjunction, so a set with characteristic vector $v\in\{0,1\}^d$ is represented as $1 \wedge Bv$, meaning that the coordinate-wise minimum of $Bv$ with 1 is taken.

This representation, and the resulting membership testing, has long been studied as used as \emph{Bloom filters}, with many variations, including applications to \emph{private} set intersection estimation. 
However, the narrow question of how to use dot products of Bloom filter representations (or other simple vector operations) to estimate set intersection size, is less well developed.
It is outlined briefly in \cite{broder_network_2004}, and discussed in \cite{swamidass_mathematical_2007}.
An analysis is given in \cite{papapetrou_cardinality_2010}, where the assumption is made that the entries of $Bv$ are independent of each other. \footnote{This is a good approximation, but the approximation is not rigorously substantiated, so the results are correct only up to that assumption. See also \cite{bose_false-positive_2008} regarding difficulties in the analysis of Bloom filters.}
We will bound the error in estimating set intersections using Bloom filters in Theorem \ref{thm Bloom bund}.

A key quantity for our analysis is the number of nonzeros in $Bv$ as a function of $\norm{v}_0$, when $v\in\{0,1\}^d$.
We will let $\norm{x}_0$ denote the number of nonzeros of $x$, so $\norm{1\wedge Bv}_1=\norm{Bv}_0$.
This includes real numbers $y$, so $\norm{y}_0$ is one when $y\ne 0$, and 0 otherwise.

\begin{lemma}\label{lem pn}
For $v\in \{0,1\}^d$, let $n\equiv\norm{v}_0$. Define $\kappa\equiv \norm{Bv}_0$, and let $p_\ell\equiv (1-1/m)^{k\ell}$.
Then $\bbE[\kappa] = m(1-p_n)$.
Moreover, letting $\tm\equiv -1/\log(1-1/m)$,
\begin{equation}\label{eq pn}
1 - \frac{k\ell}{m-1} \le p_\ell = \exp(-k\ell/\tm) \le \exp(-k\ell/m)
\end{equation}
Letting $h_{m,k}(z)\equiv \frac{-\tm}{k}\log(1-\frac{z}{m})$, we have $h_{m,k}(\bbE[\kappa]) = h_{m,k}(m(1-p_n)) = n$.
\end{lemma}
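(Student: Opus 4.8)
The plan is to handle the three assertions in order; each reduces to an elementary balls-into-bins computation together with one or two standard logarithmic inequalities, so I expect no real obstacle beyond keeping track of signs.

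\emph{Expectation of $\kappa$.} Forming $Bv$ amounts to making $kn$ independent uniform draws from $[m]$ --- $k$ draws for each of the $n$ columns $B_{*j}$ with $j\in\supp(v)$ --- and $(Bv)_i\ne 0$ precisely when coordinate $i$ is hit by at least one of these $kn$ draws. For a fixed $i$, the probability of being missed by all of them is $(1-1/m)^{kn}=p_n$, so $\Pr[(Bv)_i\ne 0]=1-p_n$. Summing the indicators over $i\in[m]$ and using linearity of expectation gives $\bbE[\kappa]=m(1-p_n)$.

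\emph{The sandwich on $p_\ell$.} The middle equality is immediate from the definition $\tm=-1/\log(1-1/m)$, since $\exp(-k\ell/\tm)=\exp(k\ell\log(1-1/m))=(1-1/m)^{k\ell}=p_\ell$. For the upper bound I would apply $-\log(1-x)\ge x$ at $x=1/m$, which says $1/\tm\ge 1/m$, i.e.\ $\tm\le m$, hence $\exp(-k\ell/\tm)\le\exp(-k\ell/m)$. For the lower bound I would use Bernoulli's inequality $(1-1/m)^{k\ell}\ge 1-k\ell/m\ge 1-k\ell/(m-1)$; alternatively one can invoke the complementary estimate $-\log(1-x)\le x/(1-x)$ at $x=1/m$ (which gives $\tm\ge m-1$) together with $e^{-t}\ge 1-t$.

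\emph{The identity $h_{m,k}(\bbE[\kappa])=n$.} Substituting $z=m(1-p_n)$ into $h_{m,k}(z)=-\tfrac{\tm}{k}\log(1-z/m)$ gives $1-z/m=p_n$, so $h_{m,k}(m(1-p_n))=-\tfrac{\tm}{k}\log p_n$. Since $\log p_n=kn\log(1-1/m)=-kn/\tm$ (again by the definition of $\tm$), this equals $-\tfrac{\tm}{k}\cdot(-kn/\tm)=n$, as claimed. The only point needing care throughout is that $\log(1-1/m)$ and $-1/\tm$ are negative, so the two-sided bound on $-\log(1-x)$ must be applied in the direction appropriate to each of the upper and lower estimates.
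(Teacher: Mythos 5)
Your proposal is correct and follows essentially the same route as the paper: the same balls-into-bins computation with linearity of expectation for $\bbE[\kappa]=m(1-p_n)$, the same logarithmic estimates $1/m\le -\log(1-1/m)\le 1/(m-1)$ for the sandwich on $p_\ell$, and the same direct substitution for $h_{m,k}(m(1-p_n))=n$. The only cosmetic difference is that your primary argument for the lower bound uses Bernoulli's inequality $(1-1/m)^{k\ell}\ge 1-k\ell/m$ directly, whereas the paper passes through $\exp(-k\ell/(m-1))\ge 1-k\ell/(m-1)$, a route you also note as an alternative.
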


Here, we can think of $h_{m, k}$ as an estimator for $\norm{v}_0$, accounting for collisions in $Bv$.
It is not unbiased, as we do not have $\bbE[h_{m,k}(\kappa)] = n$, but $h_{m,k}(\kappa)$ will still be a \emph{good} estimator if $\kappa$ concentrates; luckily for us, it does, as shown below.

The definitions of $\tm$ may also seem a bit curious; we use the following well-known inequalities.
\begin{equation}\begin{split}\label{eq log ineq}
    \log(1+x) & \le x\mathrm{\ for\ all\ }x\\
    -\log(1-x) &\le \frac{x}{1-x}\mathrm{\ for\ }x < 1
\end{split}\end{equation}
These imply
\begin{align}
\frac{1}{m} \le -\log(1-\frac{1}{m}) & \le \frac{1/m}{1-1/m} = \frac1{m-1},\mathrm{\ and\ so} \label{eq log 1m}\\
m-1 \le \tm &\le m \label{eq tm}
\end{align}

\begin{proof}[of Lemma~\ref{lem pn}]
Observe that for each of $n$ vectors $B_{*j}$, there will be $k$ independent trials generating random $i\in[m]$, so for $(Bv)_i$ to be zero, all $kn$ independent trials need to miss index~$i$.
This happens for a single trial with probability $1-1/m$, and so the probability that all $kn$ trials miss $i$ is $(1-1/m)^{kn}$.
It follows that the expected number of nonzeros
\[
\bbE[\kappa] = \bbE[\norm{Bv}_0] = \bbE[\sum_{i\in [m]} \norm{B_{i*}v}_0] = \sum_{i\in [m]} \bbE[\norm{B_{i*}v}_0] = m(1-(1-1/m)^{kn}) = m(1-p_n),
\]
as claimed.
To show \eqref{eq pn}, we use \eqref{eq log ineq} and \eqref{eq log 1m}, which imply 
\begin{equation*}
1 - \frac{k\ell}{m-1} \le \exp(-k\ell/(m-1)) \le \left( 1-\frac1m \right)^{k\ell} = p_\ell \le \exp(-k\ell/m),
\end{equation*}
as claimed. The last claim is immediate.
\end{proof}

\restatetheorem{thm Bloom bund}

When $\eps<1/2$, the integrality of $n$ implies an exact result.
Thus, the theorem implies that when $n$ and $n_v$ are $O(1)$, $m=O(k n_w)$ and $k=O(\log(1/\delta))$ suffice. 
This includes the case of membership testing, where $n_v=1$.

The theorem also allows $\eps>1$, including $\eps=\teps n$, which allows us to estimate $n$ up to relative error $\eps$ rather than additive error $\eps$.
In this case, $m=O(k\teps^{-1}(n_vn_w/n + n) + kn_w)$ and $k=O(\max\{1,\teps^{-1}n^{-1}\log(1/\delta)\})$ suffice.

\begin{proof}
Let $v^c = v\circ w$, where $\circ$ denotes element-wise product, so $n=\norm{v^c}_0$, and let
\[
\kappa_\cap = \norm{Bv^c}_0.
\]
Let $x^c = 1\wedge Bv^c$, $\tx = x-x^c$, $\ty=y-x^c$, and let
\[
\kappa_\Delta \equiv \tx \cdot \ty = x\cdot y - \kappa_\cap,
\]
noting that $\tx \cdot x^c = 0$  and $\ty \cdot x^c=0$,
\begin{align*}
    \tx \cdot \ty &= x \cdot y - x^c \cdot x - x^c \cdot y + x^c \cdot x^c \\
    &= x \cdot y - (x^c \cdot \tx + x^c \cdot x^c) - (x^c \cdot \ty + x^c \cdot x^c) + x^c \cdot x^c \\
    &= x \cdot y - x^c \cdot x^c
\end{align*}
In other words, we can rewrite $x\cdot y = \kappa \equiv \kappa_\cap + \kappa_\Delta$, where $\kappa_\cap$ counts the 1s that $x$ and $y$ have in common due to elements in $\supp(v)\cap\supp(w)$, while $\kappa_\Delta$ counts the 1s common to $x,y$ due to elements corresponding to the symmetric difference $\supp(|v-w|) = \supp(v)\Delta\supp(w)$, that are not already counted in $\kappa_\cap$.

Recall that we defined $p_{\ell} \equiv (1 - 1/m)^{k\ell}$ in Lemma \ref{lem pn}. 
Then,
\[
p_n \equiv 1 - \frac1m\bbE[\kappa_\cap] = 1 - \frac1m\bbE[\|Bv\|_0], \; \; \; \; p_{n_v} \equiv 1-\frac1m\bbE[\|Bv\|_0], \; \; \; \; p_{n_w} \equiv 1-\frac1m\bbE[\|Bw\|_0] 
\]
Using Lemma~\ref{lem pn} and \eqref{eq pn}, and letting $p_{vw} \equiv (1-p_{n_v})(1-p_{n_w})$,
\begin{equation}\begin{split}\label{eq kappas}
\bbE[\kappa_\cap] & = m(1-p_n)\\
\bbE[\kappa_\Delta] & = mp_n p_{vw},\mathrm{\ and\ } p_{vw} \le \frac{k n_v}{m-1} \cdot \frac{kn_w}{m-1} = \frac{k^2 n_v n_w}{m^2}(1+O(1/m)).
\end{split}\end{equation}
where the latter is due to the fact that the probability that given entries $\tx_i$ and $\ty_i$ contribute to $\tx\cdot \ty$ is the probability that (a) the $n$ entries of $v^c$ yield $B_{i*}v^c = 0$, and (b) that both entries $\tx_i$ and $\ty_i$ are nonzero.
Note that the three events in (a) and (b) are independent, since $\supp(v^c)$, $\supp(v - v^c)$, and $\supp(w-v^c)$ are disjoint.

Suppose, for some $\eps_\cap, \eps_T > 0$,
\begin{equation}\begin{split}\label{eq kappa assumes}
    \kappa_\cap & \ge \bbE[\kappa_\cap] - \eps_\cap\\
    \kappa & \le \bbE[\kappa] + \eps_T,
\end{split}\end{equation}
which holds with small failure probability, as shown below.

First, using these assumptions on $\kappa_\cap$ and $\kappa$, we establish a lower bound on $h_{m, k}(\kappa)$.
Since $\kappa_\Delta \ge 0$,
\begin{align*}
    -\frac{k}{\tm} h_{m,k}(\kappa)
           & = \log(1-\frac{\kappa}{m})
             = \log(1-\frac{\kappa_\cap + \kappa_\Delta}{m})
             \\
             &\le \log(1-\frac{\kappa_\cap}{m})
        \\ & \le \log(1-\frac{\bbE[\kappa_\cap] - \eps_\cap}{m})
              \\
              &= \log(p_n + \frac{\eps_\cap }{m}) \\
              &= \log(p_n) + \log(1 + \frac{\eps_\cap}{mp_n}) \\
              &\le \log(p_n) + \frac{\eps_\cap}{mp_n} 
              \\ &\le -\frac{k}{\tm} (n - \frac{\eps_\cap}{kp_n}), \text{ using } \eqref{eq tm} \text{ and the definition of } p_{\ell} \text{, and so}
              \\ h_{m,k}(\kappa) & \ge n - \frac{\eps_\cap}{kp_n}.
\end{align*}

\noindent If we assume that $m\ge 2kn+1$, then $p_n\ge 1/2$. 
We have
\begin{equation}
    h_{m,k}(\kappa) - n
           \ge -\frac{\eps_\cap}{kp_n}
         \ge -\frac{2\eps_\cap}{k}, \label{eq hmk lower}
\end{equation}
Again, using our assumptions on the sizes of $\kappa_\cap$ and $\kappa$, we can obtain an upper bound on $h_{m, k}(\kappa)$
\begin{align}
    h_{m,k}(\kappa)
           & = \frac{-\tm}{k}\log(1-\frac{\kappa}{m})\nonumber
             \\
             &\le \frac{-\tm}{k}\log(1-\frac{\bbE[\kappa] + \eps_T}{m})\nonumber
        \\ & = \frac{-\tm}{k}\log(p_n - p_n p_{vw} - \frac{\eps_T}{m})\nonumber
            \\ &= n - \frac{\tm}{k}\log(1 - p_{vw} - \frac{\eps_T}{mp_n})\nonumber
            \\ &\le n - \frac{m}{k}\log(1 - p_{vw} - \frac{\eps_T}{mp_n})\text{ using }\eqref{eq tm} \nonumber
        \\ & \le n + \frac{m}{k}p_{vw} + \frac{m}{k}\frac{\eps_T}{mp_n(1-p_{vw})},\mathrm{\ using\ }\eqref{eq log ineq}\label{eq kappa up1}
\end{align}
Assume that
\begin{equation}
    m\ge \max\{k\sqrt{2 n_v n_w}+1, 2kn+1\}.
\end{equation}
which includes the prior assumption on $m$. 
Then using \eqref{eq pn} and \eqref{eq kappas},
\[
1 - p_{vw} \ge 1 - \frac{k n_v}{m-1} \cdot \frac{kn_w}{m-1} \ge 1/2
\]
so $p_n(1-p_{vw}) \ge 1/4$.
Therefore, from \eqref{eq kappa up1},
\begin{equation*}
    \frac{k}{m}(h_{m,k}(\kappa) - n)
       \le p_{vw} + \frac{\eps_T}{mp_n(1-p_{vw})}
       \le  p_{vw} + \frac{4\eps_T}{m},
\end{equation*}
and so
\begin{equation}
    h_{m,k}(\kappa) - n
        \le  \left[\frac{kn_vn_w}{m} + \frac{4\eps_T}{k}\right](1+O(1/m)),\mathrm{\ using }\eqref{eq kappas}.\label{eq hmk upper}
\end{equation}

Both \eqref{eq hmk lower} and \eqref{eq hmk upper} control how close $h_{m, k}(\kappa)$ is to $n$, but they rely on the assumptions in \eqref{eq kappa assumes}. 
We now need to bound $\Pr[\kappa_\cap - \bbE[\kappa_\cap] \le -\eps_\cap]$ and
$\Pr[\kappa -  \bbE[\kappa] \ge \eps_T]$, so that \eqref{eq kappa assumes} holds with high probability. 

Recall that $\kappa_\cap = \|Bv^c\|_0$. 
We can express $\kappa_\cap$ as $\kappa_\cap(X_1, X_2,\ldots, X_{kn})$, where each $X_\ell$ makes a random choice $i\in [m]$ and sets $B_{i*}v^c$ to one; such choices are made $k$ times for each of the $n$ entries of $v^c$ that are equal to one.
To apply Theorem~\ref{thm Berns McD}, bounds on $B(\kappa_\cap)$ and the $\bbVar_i(\kappa_\cap)$ are needed.

Given choices for all other $X_{\ell'}$, a choice $i$ for $X_\ell$ changes $\kappa_\cap$ if and only if given those other choices, $B_{i*}v$ is equal to zero.
The only effect of these other choices is the number of bins already with ones, that is, the number of $i'$ with $B_{i'*}v^c=1$.
Given that $r$ bins have ones, the expected value of $\kappa_\cap$ with respect to the random choice $X_\ell$ is $1-r/m$, and the most $\kappa_\cap$ can differ from this is $1-r/m \le 1$, since it can hit one of those $r$ bins that are already one.
The variance $\bbVar_\ell(\kappa_\cap)$ is $r/m-(r/m)^2 \le r/m\le (kn-1)/m$, the variance of a Bernoulli random variable with probability $1-r/m$ of being one, and zero otherwise.
Thus $\tsigma(\kappa_\cap) \le (kn)^2/m$.\footnote{Put in a common setting, the bound of \cite{broder_network_2004} corresponds to putting into the tail estimate denominator a value of $kn\ge (kn)^2/m$ for $m\ge kn$, and the bound of \cite{kamath_tail_1995} quoted in \cite{bose_false-positive_2008} corresponds to $m\ge (kn)^2/m$ for $m\ge kn$.}
From Theorem~\ref{thm Berns McD}, if $\eps_\cap = k\eps/4$ for given $\eps>0$, then
\begin{align}\label{eq kap tail lower}
\Pr[\kappa_\cap - \bbE[\kappa_\cap] \le -\eps_\cap]
       & \le \exp\left(-\frac{2\eps_\cap^2}{\tsigma(\kappa_\cap) + \eps_\cap B(\kappa_\cap)/3} \right)
         \le \exp\left(-\frac{2\eps_\cap^2}{(kn)^2/m + \eps_\cap/3} \right)\nonumber
    \\ & = \exp\left(-\frac{2(k\eps/4)^2}{(kn)^2/m + (k\eps/4)/3} \right)
         = \exp\left(-\frac{\eps^2}{8n^2/m + 2\eps/3k} \right).
\end{align}

Leaving aside for the moment the values needed for $m$ and $k$, we consider next
$\Pr[\kappa -  \bbE[\kappa] \ge \eps_T]$.
Here from $\kappa = \kappa_\cap + \kappa_\Delta$, we can regard $\kappa$ as the function of $2kn + 2kn_v + 2kn_w$ random variables, of which the first $2kn$, from $\kappa_\cap$, have $\tsigma$ contribution at most $(kn)^2/m$ and $B\le 1$.
The next $2kn_v$, representing the choices made that result in $\tx$, each have the property that their mean and variance, as a function of the choices of all the other variables, depend only on the number of ones in $\ty$: the minimum mean is $1-kn_w$, so the contribution to $B$ is at most $1-kn_w\le 1$; the maximum variance is
$(kn_w/m)(1-(kn_w/m))\le kn_w/m$.
There are $kn_v$ such variables, so the contribution to $\tsigma$ is $(kn_v)(kn_w/m)$.
Similarly, the last $kn_w$ random choices have $B\le 1$ and contribution to $\tsigma$ of $(kn_w)(kn_v/m)$.
So $B(\kappa) \le 1$, and $\tsigma(\kappa) \le k^2(n^2 + 2n_vn_w)/m$.
If $\eps_T = k\eps/4$, we have
\begin{align}\label{eq kap tail upper}
\Pr[\kappa - \bbE[\kappa] \ge \eps_T]
       & \le \exp\left(-\frac{2\eps_T^2}{\tsigma(\kappa) + \eps_T B(\kappa)/3} \right)
         \le \exp\left(-\frac{2\eps_T^2}{k^2(n^2 + 2n_vn_w)/m + \eps_T/3} \right)\nonumber
    \\ & = \exp\left(-\frac{2(k\eps/4)^2}{k^2(n^2 + 2n_vn_w)/m + (k\eps/4)/3} \right)\nonumber
    \\ & = \exp\left(-\frac{\eps^2}{8(n^2 + 2n_vn_w)/m + 2\eps/3k} \right).
\end{align}

If \eqref{eq kap tail upper} yields a bound of $\delta/2$, \eqref{eq kap tail lower} will too, and the total failure probability will be at most $\delta$.

Assume WLOG that $n_w \ge n_v$.
From \eqref{eq hmk upper}, for the mean of $\kappa$ to be at most $\eps$, we must have $m\ge kn_vn_w/2\eps$.
This implies that in \eqref{eq kap tail upper}, $8(n^2 + 2n_vn_w)/m + 2\eps/3k \le 8n^2/m + c_1\eps/k$, for $c_1= 32+\frac23$.
For $8n^2/m$ to not dominate $c_1\eps/k$ in the sum $8n^2/m + c_1\eps/k$, we need $m\ge 8kc_1n^2/\eps$, yielding
$2c_1\eps/k$ in the denominator, so that under these assumptions regarding~$m$,
\begin{align*}
\Pr[\kappa_\Delta - \bbE[\kappa_\Delta] \ge \eps]
       & \le \exp\left(-\frac{\eps^2}{8(n^2 + 2n_vn_w)/m + 2\eps/3k}\right)
    \\ & \le \exp\left(-\frac{\eps^2}{2c_1\eps/k}\right)
          = \exp(-k\eps/2c_1)
         \le \delta/2\mathrm{\ for\ }k=2c_1\log(2/\delta)/\eps.
\end{align*}
With a similar bound for $\Pr[\kappa - \bbE[\kappa] \le -\eps]$, via \eqref{eq hmk lower},
we have that $m\ge\frac{k}\eps(n_vn_w/2 + 8c_1n^2 + \eps(n+n_w))$ with $k=2c_1\log(2/\delta)/\eps$ suffices to obtain $h_{m,k}(x\cdot y) = n \pm \eps$ with failure probability $\delta$.
The result follows.
\end{proof}

\subsection{Counting Bloom Filters}\label{subsec CBloom}

In another sparse binary model, bundling is done by addition in instead of disjunction, so a bundle is simply $Bv$ instead of $1\wedge Bv$.
In the data structures literature, this is a \emph{counting} variant of a Bloom filter.
Here $B$ is chosen to be sparse in a slightly different way than for Bloom filters:
each atomic vector $B_{*j}$ for $j\in [m]$ has all vectors with $k$ ones equally likely, or equivalently, is created by starting from a vector of all zeros, in each trial picking $i\in[m]$ uniformly at random, setting $B_{ij}\rightarrow 1$, until there are $k$ nonzeros.

When the sparsity is block-structured, that is, an atomic $m$-vector has $m$ divisible by~$k$, and there are~$k$ blocks each with exactly one nonzero entry, then the resulting bundling is called a \emph{count-min sketch} in the data structures literature, and the Sparse Block Codes model in the VSA literature.
See \cite{cormode_improved_2005} and \cite{laiho_high-dimensional_2015} respectively for more information. 

\cite{cormode_improved_2005} showed that dot product estimation for two bundles can be done as a nonlinear operation: taking dot products of corresponding blocks and then the minimum of these $k$ dot products.
The resulting estimate $X$ of $v\cdot w$ using $Bv$ and $Bw$ has $v\cdot w \le X \le v\cdot w + \eps$ with failure probability $\delta$, for $m=km'$ where $k=O(\log(1/\delta))$ and 
$m'=O(\norm{v}_1\norm{w}_1/\eps)$.
For $v, w\in\{0,1\}^d$, the dimension $m$ needed for estimating set intersection size has a smaller dependence on $\eps$, but possibly worse dependence on $\norm{v}_1$, $\norm{w}_1$, and $v\cdot w$
than what we showed for Bloom filters.
This is because $\norm{v}_1\norm{w}_1 = (n_v+n)(n_w+n) \ge n_vn_w + n^2$, where $n$, $n_v$, and $n_w$ are defined in Theorem \ref{thm Bloom bund}.

Beyond the setting above, $v$ and $w$ may have entries that are nonnegative integers, corresponding to multi-sets (sets where each element appears with some multiplicity).
Dot products for such multi-sets have applications in database operations. 
However, one could also consider an analog of set intersection for multi-sets where the multiplicity of an element in the intersection is the minimum of its multiplicities in the two intersected multi-sets.
The size of the intersection, defined in this way, can also be estimated using bundles of sparse binary atomic vectors.
This operation will be analyzed next.
For $x,y\in\reals^m$, let $x\wedge y$ denote the vector $z$ with $z_i=\min\{x_i, y_i\}$.
As discussed, on nonnegative vectors, we define the operation
$x\wedgedot y \equiv \norm{x\wedge y}_1 = \sum_{i\in [m]} \min\{x_i, y_i\}$.
Note that $\norm{x-y}_1 = \norm{x}_1 + \norm{y}_1 - 2(x\wedgedot y)$, analogous to
$\norm{x-y}_2^2 = \norm{x}_2^2 + \norm{y}_2^2 - 2(x \cdot y)$.

\restatetheorem{thm CBloom bund}

\begin{proof}
Let $\tv \equiv v - (v\wedge w)$, $\tw \equiv w - (v\wedge w)$ and $\tx \equiv B\tv$, $\ty \equiv B\tw$.
Note that $n_v = \norm{\tv}_1$, $n_w = \norm{\tw}_1$.
Using $\min\{c+a, c+b\} = c+ \min\{a,b\}$, we have
\begin{align}
\frac1k (x\wedgedot y)
       & = \frac1k (Bv\wedgedot Bw ) \nonumber
    \\ & = \frac1k (B(v\wedge w) + B\tv ) \wedgedot (B(v\wedge w) + B\tw)\nonumber
    \\ & = \frac1k \vec{1}_m^\top B(v\wedge w) + \frac1k (B\tv \wedgedot B\tw)\nonumber
    \\ & = v\wedgedot w + \frac1k(\tx\wedgedot \ty).\label{eq vx count}
\end{align}
It remains to bound $\frac1k(\tx\wedgedot \ty)$.
We have $\tv\circ \tw = 0$, so for every $i\in [m]$, $B_{i*}\tv$ is independent of $B_{i*}\tw$.
Therefore,
\[
\Pr[\tx_i \wedge \ty_i\ge z] = \Pr[\tx_i\ge z]\Pr[\ty_i\ge z]
\]
Since $\norm{\tv}_1 = n_v$, $\bbE[\tx_i] = kn_v/m$, and so by Markov's inequality $\Pr[\tx_i\ge z] \le kn_v/(zm)$.
Similarly, $\Pr[\ty_i\ge z] \le kn_w/(zm)$.
Using the identity $\bbE[Y] = \sum_{z\ge 1} \Pr[Y\ge z]$,
\begin{align*}
\bbE[\tx_i \wedge \ty_i]
       & = \sum_{z\ge 1}  \Pr[\tx_i \wedge \ty_i\ge z]
     = \sum_{z\ge 1}  \Pr[\tx_i\ge z]\Pr[\ty_i\ge z]
    \\ & \le \sum_{z\ge 1} (kn_v/(mz))(kn_w/(mz))
     = \zeta(2) \cdot \frac{k^2 n_vn_w}{m^2}
     = \frac{\pi^2}{6} \cdot \frac{k^2 n_vn_w}{m^2}, 
\end{align*}
where $\zeta(2) = \sum_{z\ge 1} 1/z^2$ is the Riemann zeta function.
Thus 
\begin{equation}\label{eq mean count}
    \bbE[\tx\wedgedot \ty] \le \frac{\pi^2}{6} \cdot \frac{k^2 n_vn_w}{m}.
\end{equation}

It remains to show that $\tx\wedgedot \ty$ concentrates.
We will use the Bernstein form of McDiarmid's inequality (Theorem~\ref{thm Berns McD}), showing concentration of a function $f(X)$.
Here $X\in [m]^{n_t}$ for $n_t \equiv k\norm{\tv + \tw}_0\le k(n_v + n_w)$, and each $X_r$, for $r\in [n_t]$, maps to a column $j$ of $B$ where $\tv_j + \tw_j>0$, and for random $i=X_r \in [m]$, $B_{ij}$ is incremented by one.
The function $f(X)$ is then $f(X) = B\tv \wedgedot B\tw$, where $B$ is determined by $X$.
To use Theorem~\ref{thm Berns McD}, we need to bound $\tsigma(f)$ and $B(f)$ of that theorem, bounding the effects of changes of single entries of $X$.

Suppose $B'$ is the result of the choices of $X$, for all entries except for entry $X_r$, where $X_r$ maps to column $j$, and to a random $I\in [m]$.
Then the resulting $B=B'+ e_I e_j^\top$, where $e_j \in \{0,1\}^d$ (with the $1$ in the $j$ position) and $e_I \in \{0,1\}^m$ (with the $1$ in the $I$ position) are natural basis vectors, and
$Bv = B'v + e_I e_j^\top v = B'v + v_j e_I$.

Here $g_i(X,y)$ of Theorem~\ref{thm Berns McD} corresponds to
$g(B', j, I) = (B'+e_I e_j^\top)\tv \wedgedot (B'+e_I e_j^\top)\tw$,
and we need to bound, over all $B'$ and $j$, $\bbVar_{i}[g(B', j, i)]$ and
$\max_{i} |g(B',j, i) - \bbE_{i'}g(B',j,i')]|$.
Again noting that because addition distributes over minimization, we can let
$x'\equiv B'\tv - (B'\tv \wedge B'\tw)$ and $y'\equiv B'\tw - (B'\tv \wedge B'\tw)$, and write
\begin{align}
    g(B', j, I) &= (B'\tv + e_I \tv_j) \wedgedot (B'\tw + e_I \tw_j) \nonumber \\
    &= [(B'\tv)_I + \tv_j] \wedge [(B'\tw)_I - \tw_j] + \sum_{i \neq I} (B'\tv \wedge B'\tw)_i \nonumber \\
    &= [(B'\tv)_I + \tv_j - ((B'\tv)_I \wedge (B'\tw)_I)] \nonumber \\
     & \; \; \; \; \; \; \; \; \wedge [(B'\tw)_I + \tw_j - ((B'\tv)_I \wedge (B'\tw)_I)] \nonumber \\
    & \; \; \; \; \; \; \; \; + ((B'\tv)_I \wedge (B'\tw)_I) + \sum_{i \neq I} (B'\tv \wedge B'\tw)_i \nonumber \\
    &= (x'_I + \tv_j) \wedge (y'_I + \tw_j) + (B'\tv \wedgedot B'\tw) \nonumber \\ 
    &= (B'\tv \wedgedot B'\tw) + \beta_{Ij},\label{eq g fac}
\end{align}
where $\beta_{Ij} \equiv (x'_I + \tv_j) \wedge (y'_I + \tw_j)$.

Since the choices of $I$ and $j$ don't affect $(B'\tv \wedgedot B'\tw)$, only $\beta_{Ij}$ affects $\tsigma(f)$ and $B(f)$.
As $\tv \circ \tw= 0$, we first assume that $\tv_j > \tw_j =0$.
Under that assumption, since $x' \circ y' = 0$, either $\beta_{Ij} = 0 $ or
$\beta_{Ij} = y'_I \wedge \tv_j$, and so $\norm{\beta_{*j}}_\infty \le \norm{\tv}_\infty$ for all~$j$.

Allowing for the possibility that $\tw_j > \tv_j =0$ and that sometimes $\beta_{ij} = 0$, we have
\begin{align}
    B(f)
       = &  \max_{B',j, i} |g(B',j,i) - \bbE_{I'}g(B',j,I')]|\nonumber
    \\ = & \max_{B',j, i} |\beta_{ij} - \bbE_{I'}\beta_{I'j}]|\qquad \mathrm{\ from\ \eqref{eq g fac}}\nonumber
    \\ \le & \max\{\norm{\tv}_\infty, \norm{\tw}_\infty\} = \norm{v - w}_\infty = K_b. \label{eq B count}
\end{align}

Also, under the assumption for given $j\in [d]$ that $\tv_j > \tw_j =0$,
\begin{align*}
    \bbVar_r(f)
           = & \sup_{B'} \bbE_I[(\beta_{Ij}- \bbE_{I'}\beta_{I'j}])^2]
        \le  \sup_{B'} \bbE_I[\beta_{Ij}^2]
        \le \sup_{B'} \bbE_I[(y'_I \wedge \tv_j)^2]\mathrm{\ using \ } \tv_j > \tw_j =0
        \\ \le & \sup_{B'} \tv_j \bbE_I[y'_I \wedge \tv_j]        
        \le  \sup_{B'} \tv_j \bbE_I[y'_I] 
        = \sup_{B'} \tv_j\norm{y'}_1/m 
        \\ \le & k \tv_j\norm{\tw}_1/m.
\end{align*}
Combining this with an analogous expression when $\tw_j > \tv_j =0$, and summing over all $X_r$, 
\begin{equation}\label{eq tsig count}
\tsigma(f) \le 2k^2 \norm{\tv}_1\norm{\tw}_1/m = 2k^2 n_vn_w/m.
\end{equation}

We can now apply Theorem~\ref{thm Berns McD}, scaling by $k$ as in the theorem, and using
$k=\frac{2K_b}{3}\eps^{-1}\log(1/\delta)$ and $m=12\pi^2k\eps^{-1} n_vn_w = 8\pi^2\eps^{-2}n_vn_w\log(1/\delta)$, to obtain
\begin{align*}
\Pr[\tx & \wedgedot \ty > \bbE[\tx\wedgedot \ty] + k\eps/2]
        \le \exp\left(-\frac{2(k^2\eps^2/4)}{\tsigma(f) + k\eps B(f)/6} \right)
       \\ & \le \exp\left(-\frac{k^2\eps^2/2}{(2k^2 n_vn_w/m) + k\eps K_b/6} \right)\qquad \mathrm{\ from\ \eqref{eq tsig count}, \eqref{eq B count}}
       \\ & = \exp\left(-\frac{\eps^2}{4n_vn_w/m + k^{-1}\eps K_b/3} \right)
       = \exp\left(-\frac{\eps^2}{n_vn_w/(3\pi^2\eps^{-2}n_vn_w\log(1/\delta)) + \eps^2/2\log(1/\delta)} \right)
       \\ & \le \exp\left(-\frac{\eps^2}{\eps^2/2\log(1/\delta) + \eps^2/2\log(1/\delta)} \right)
       = \delta.
\end{align*}
We also need \eqref{eq mean count} to bound
\[
\bbE[\tx\wedgedot \ty]
    \le \frac{\pi^2}{6} \cdot \frac{k^2 n_vn_w}{m}
    = \frac{\pi^2}{6} \cdot \frac{k^2 n_vn_w}{12\pi^2k\eps^{-1} n_vn_w}
    = k\eps/2.
\]
Putting these together, 
\[
\Pr[\tx \wedgedot \ty \ge k\eps] \le \delta,
\]
which with \eqref{eq vx count} implies the result.
\end{proof}
\end{toappendix}

\newpage
\bibliography{main, K_references}
\bibliographystyle{ieeetr}

\end{document}